\definecolor{Bleu}{HTML}{20bd89}
\definecolor{Red}{HTML}{f14e0c}
\newcommand{\E}{\mathbb{E}}
\newcommand{\bx}{\mathbf{x}}
\newcommand\pol{\ensuremath{\boldsymbol{\pi}}}
\newcommand \defn {\mathrel{\triangleq}}
\DeclareMathOperator*{\argmax}{arg\,max}
\DeclareMathOperator*{\argmin}{arg\,min}
\newcommand{\SP}{\mathfrak{s}}
\newcommand{\charknown}{\ensuremath{T_{\mathcal{F},r}(\mu)}}
\newcommand{\charactimer}{\ensuremath{T_{\Fest,r}^{-1}}}
\newcommand{\tildeA}{\Tilde{\bA}}
\newcommand{\nullspace}{\mathrm{Null}(\tildeA')}
\newcommand{\minsing}{\sigma_{\mathrm{min}}(\tildeA')}
\newcommand{\maxsing}{\sigma_{\mathrm{max}}(\tildeA')}
\newcommand{\minsinginv}{\sigma_{\mathrm{min}}({\tildeA'}^{-1})}
\newcommand{\maxsinginv}{\sigma_{\mathrm{max}}({\tildeA'}^{-1})}
\newcommand{\numconst}{\ensuremath{d}}
\newcommand{\numarm}{{K}}
\newcommand{\identity}{I}
\newcommand{\indicator}{\mathds{1}}
\newcommand{\va}{\boldsymbol v_a}
\newcommand{\polreco}{\hat{\pol}}
\newcommand{\cost}{\mathbf{c}}
\newcommand{\costnoise}{\boldsymbol{\eta}}
\newtheorem{theorem}{Theorem}
\newtheorem{corollary}{Corollary}
\newtheorem{lemma}{Lemma}
\newtheorem{proposition}{Proposition}
\newtheorem{definition}{Definition}
\newtheorem{remark}{Remark}
\newtheorem{assumption}{Assumption}
\newtheorem*{rep@theorem}{\rep@title}
\newcommand{\newreptheorem}[2]{%
	\newenvironment{rep#1}[1]{%
		\def\rep@title{\textbf{#2} \ref{##1}}%
		\begin{rep@theorem}}%
		{\end{rep@theorem}}}
\newif\ifdoublecol
\newcommand{\dbcomment}[1]{\todo[inline,color=blue!20]{{\textbf{DB:~}#1}}}
\tikzset{
   container/.style = {rectangle, rounded corners, draw=yellow, dashed,
fit=#1, inner sep=6mm, node contents={}},
circle-label/.style = {circle, draw}
        }
\tikzset{box/.style={draw, diamond, thick, text centered, minimum height=0.5cm, minimum width=1cm, text width=0.9cm}}
\tikzset{line/.style={draw, thick, -latex'}}
\def\bA{\mathbf{A}}
\def\blambda{{\boldsymbol\lambda}}
\def\ba{\mathbf{a}}
\def\bx{\mathbf{x}}
\def\btheta{{\boldsymbol\theta}}
\def\blambda{{\boldsymbol\lambda}}
\def\bmu{{\boldsymbol\mu}}
\def\bpi{{\boldsymbol\pi}}
\def\bomega{{\boldsymbol\omega}}
\def\cC{\mathcal{C}}
\def\cD{\mathcal{D}}
\def\cF{\mathcal{F}}
\def\cL{\mathcal{L}}
\def\cX{\mathcal{X}}
\def\E{\mathbb{E}}
\def\det{{\sf det}}
\newcommand{\reals}{\mathbb{R}}
\newcommand{\Fest}{\Hat{\cF}}
\newcommand{\altset}{\Lambda_{\Fest}(\bmu)}
\newcommand{\altsetreal}{\Lambda_{\cF}(\bmu)}
\newcommand{\altsetr}{\Lambda_{\Fest}(\bmu)}
\newcommand{\simp}{\Delta_{K}}
\newcommand{\opt}{\pol^{\star}}
\newcommand{\stopping}{\tau_{\delta}}
\newcommand{\kl}{d\left(\bmu_a,\blambda_a\right)}
\newcommand{\sumk}{\sum_{a=1}^K}
\newcommand{\neigh}{\nu_{\Fest}(\opt_{\Fest})}
\newcommand{\neighr}{\nu_{\Fest}(\pol)}
\newcommand{\neighreal}{\nu_{\cF}(\opt_{\cF})}
\newcommand{\lag}{\boldsymbol{l}}
\newcommand{\allocation}{\boldsymbol{\omega}}
\newcommand{\klvec}{d\left(\bmu,\blambda\right)}
\DeclareMathOperator*\lowlim{\underline{lim}}
\DeclareMathOperator*\uplim{\overline{lim}}
\newcommand{\rgood}{\Pi_{\mathcal{F}}^r}
\newcommand{\rgoodest}{\Pi_{\Fest}^r}
\newcommand{\trgoodest}{\Pi_{\Fest_t}^r}
\newcommand{\rpolest}{\Hat{\pol}^{\star}}
\begin{document}

%

%

\onecolumn

\aistatstitle{Learning to Explore with Lagrangians for Bandits under Unknown Constraints}
\runningtitle{Pure Exploration under Unknown Linear Constraints}

\aistatsauthor{ Udvas Das \And Debabrota Basu}

\aistatsaddress{ Univ. Lille, Inria, CNRS, Centrale Lille, UMR 9189 – CRIStAL F-59000 Lille, France} 

\begin{abstract}
Pure exploration in bandits formalises multiple real-world problems, such as tuning hyper-parameters or conducting user studies to test a set of items, where different safety, resource, and fairness constraints on the decision space naturally appear. We study these problems as pure exploration in multi-armed bandits with unknown linear constraints, where the aim is to identify an \textit{$r$-optimal and feasible policy} as fast as possible with a given level of confidence. First, we propose a Lagrangian relaxation of the sample complexity lower bound for pure exploration under constraints. 
Second, we leverage properties of convex optimisation in the Lagrangian lower bound to propose two computationally efficient extensions of Track-and-Stop and Gamified Explorer, namely LATS and LAGEX. Then, we propose a constraint-adaptive stopping rule, and while tracking the lower bound, use optimistic estimate of the feasible set at each step. We show that LAGEX achieves asymptotically optimal sample complexity upper bound, while LATS shows asymptotic optimality up to \textit{novel} constraint-dependent constants. Finally, we conduct numerical experiments with different reward distributions and constraints that validate efficient performance of LATS and LAGEX.
\end{abstract}

\tableofcontents\newpage

\section{Introduction}
Multi-Armed Bandit (MAB) serves as an archetypal framework for sequential decision-making under uncertainty that allows us to study the corresponding information-utility trade-offs~\citep{lattimore_szepesvari_2020}. 
In MAB, at each step, an agent interacts with an environment consisting of $K$ decisions (also knows as \textit{arms}) corresponding to $K$ noisy feedback distributions (or \textit{reward} distributions). At each step, the agent takes a decision, and obtains a reward from its unknown reward distribution. The goal of the agent is to compute a \textit{policy}, i.e. a distribution over the decisions, maximising a utility metric (e.g. accumulated rewards~\citep{auer2002finite}, probability of identifying the best arm~\citep{kaufmann2016complexity} etc.).  

In this paper, we focus on the \textit{pure exploration} problem of MABs, where the agent interacts by realising a sequence of policies (or experiments) with the goal of \textit{answering a query} about the environment \textit{as correctly as possible}.
A well-studied pure exploration problem is Best-Arm Identification (BAI), where the agent aims to identify the arm with highest expected reward~\citep{EvenDar2002PACBF, bubeck2009pure,jamieson2014best,kaufmann2016complexity}. 
\cite{kaufmann2016complexity} derives an information-theoretic lower bound quantifying the minimum number of agent-environment interactions needed to identify the best arm with a given level of confidence. The lower bound depends on optimising the weighted sum of KL-divergences between the reward distributions of arms and their most confusing counterparts (Equation~\eqref{eq:lb_known}). In this paper, we leverage the lower bound for algorithm design that plugs-in the empirical estimates of the reward distributions in lower bound and solves the optimisation problem on-the-go~\citep{kaufmann2016complexity,degenne2019nonasymptotic,carlsson2024pure}. 

BAI has been applied in hyper-parameter tuning~\citep{li2017hyperband}, communication networks~\citep{lindstaahl2022measurement}, influenza mitigation~\citep{libin2019bayesian}, finding the optimal dose of a drug~\citep{aziz2021multi} etc. However, real-world scenarios often impose constraints on the arm pulls~\citep{carlsson2024pure}. 
\textit{For example,} \cite{baudry2024multi} considers a recommendation problem with the aim to guarantee a fixed (known) minimum expected revenue per recommended content while identifying the best content from bandit feedback. 
Additionally, if we have multiple objectives in a decision making problem, a popular approach to optimize them is finding the optimal policy for one objective while constraining the others~\citep{fonseca1998multiobjective}. 

\vspace{2mm}\textbf{Pure Exploration under Constraints.} The aforementioned problems motivated the study of pure exploration under a set of known and unknown constraints~\citep{katz2018feasible,wang2021best,li2023optimal,wu2023best,carlsson2024pure}. Specifically, we aim to find the optimal policy that maximises the expected reward obtained from the set of arms and satisfies the true constraints, with confidence $1-\delta$. 
This is known as the \textit{fixed-confidence setting} of pure exploration~\citep{wang2021best,carlsson2024pure}. On the other hand, there is also the fixed-budget setting, which is of independent interest~\citep{katz2018feasible,li2023optimal,faizal2022constrained}.
Existing literature has studied either the general linear constraints when they are known~\citep{carlsson2024pure,Camilleri22}, or very specific type of unknown constraints, e.g. safety~\citep{wang2021best}, knapsack~\citep{li2023optimal}, fairness~\citep{wu2023best}, preferences~\citep{lindner2022interactively} etc.
Here, we study the \textit{pure exploration problem in the fixed-confidence setting subject to unknown linear constraints on the policy}, which generalises all these settings (Section~\ref{sec:formulation}). A detailed discussion on related works is deferred to Appendix~\ref{sec:Ex_related_work}.

Recently, \cite{carlsson2024pure} derives a tight lower bound and designs asymptotically optimal algorithms for this problem when the constraints are known, and show that a bandit instance might become harder or easier depending on the geometry of the constraints. They pose that \textit{studying similar phenomenon for unknown constraints is an open problem} as constraints are also estimated. 

The challenge is that the lower bound dictating the hardness of the constrained pure exploration problem is sensitive to the active constraints, and for unknown constraints, we only have access to estimated constraints with non-zero error. This affects the exact detection of the optimal feasible policy. 
 

\textit{Example.} Let us consider a bandit environment with 4 arms with Gaussian rewards -- means $\bmu = [1.0, 0.8, 0.6]$ and variance $1$. The constraints are $A\bpi \leq \mathbf{0}$ where $A = \begin{bmatrix}
    0.0, -0.3, -0.2\\
    -0.2, 0.1, 0.1
\end{bmatrix}$. 
Here, the optimal policy is $\bpi^{\star} = [1.0, 0, 0]$, and the first constraint is active at the optimal policy. Figure \ref{fig:epsilon_perturbation} shows a perturbation of the active constraint by $0.005$ shifts the optimal policy, and the sample complexity~\citep{carlsson2024pure} blows with $\mathcal{O}(4\times 10^4)$.

\begin{figure}[t!]
\centering
\includegraphics[width=0.4\textwidth]{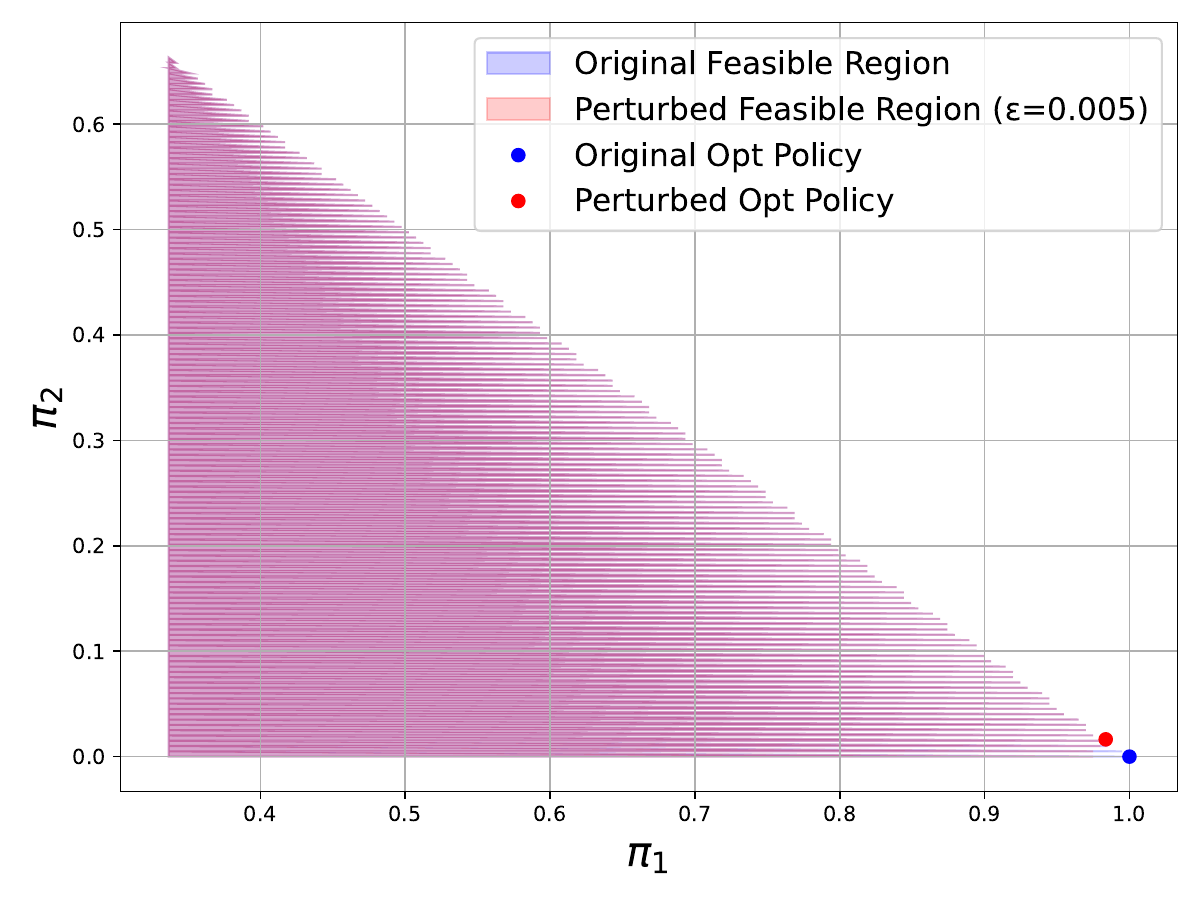}\vspace*{-.5em}
\caption{Effect of rank-one update of active constraint}\label{fig:epsilon_perturbation}
\end{figure}

Hence, for rigour, we relax the problem of finding the optimal feasible policy to finding an $r$-optimal feasible policy. For a given $r>0$, an $r$-optimal policy has mean reward not more than $r$ away from that of the optimal policy~\citep{mason2020finding,jourdan2022choosing,jourdan2023varepsilon}.
This leads us to two questions:
\begin{tcolorbox}[left=2pt,right=2pt,top=1pt,bottom=1pt]
    1. \textit{How does the hardness of finding $r$-optimal feasible policy change under unknown constraints?}\\ 
    2. \textit{How can we design a generic algorithmic scheme to track both the constraints and optimal policy with sample- and computational-efficiency?}
\end{tcolorbox}

\textbf{Our Contributions} positively address the questions.

1. \textit{Lagrangian relaxation of the lower bound.} Minimum number of samples required to conduct $r$-optimal pure exploration with fixed confidence is expressed by a lower bound-- an optimisation problem under known constraints (Eq.~\eqref{eq:lb_known}). To efficiently handle unknown constraints, we propose a novel Lagrangian relaxation of this optimisation (Section~\ref{sec:lag_lb}). At every step, we construct optimistic feasible policy set, and plug it in the relaxation. Lagrangian multipliers balance the identifiability of an $r$-optimal policy and the feasibility under estimated constraints. We leverage results from convex analysis to show that the relaxed lower bound with optimistic feasible set preserves all the desired properties of the lower bound under known constraints, and thus, allows designing lower bound tracking algorithm. 

2. \textit{Generic algorithm design.} First, we propose a new stopping rule accommodating the estimated constraints.
This ensures concentration of the estimates of mean rewards and constraints to their true values before final recommendation. 
We further show this stopping rule recommends a policy that is both feasible and $r$-optimal with confidence at least $1-\delta$. Then, we extend the Track-and-Stop~\citep{pmlr-v49-garivier16a} and gamified explorer~\citep{degenne2019nonasymptotic} approaches with the Lagrangian lower bound to design LATS (LAgrangian Track and Stop) and LAGEX (LAgrangian Gamified EXplorer), respectively in (Section~\ref{sec:algo}). 

3. \textit{Upper bounds on sample complexities.} We derive upper bounds on the sample complexities of LATS and LAGEX (Section~\ref{sec:algo}). This requires proving a novel concentration inequality for the constraint estimates. As a consequence, LATS achieves an upper bound, which is $(1+\mathfrak{s})$ times the asymptotic upper bound of TS under known constraints, while LAGEX exhibits optimality. $\mathfrak{s}$ is the shadow price (ratio between maximum and minimum index value of the slack vector) that quantifies its stability under perturbation. 
Finally, we conduct experiments across synthetic and real data. We observe that LAGEX requires the least samples among competing algorithms and exactly tracks the change in hardness due to constraints across environments (Section~\ref{sec:experiments}).

\section{Exploration under Unknown Constraints}\label{sec:formulation}
\setlength{\textfloatsep}{10pt}

\textbf{Problem Formulation.}
We work with a MAB instance consisting of $K \in \mathbb{N}$ arms. Each arm $a \in [K]$ has a reward distribution $P_a$ with unknown mean $\bmu_a \in \mathbb{R}$. At each step $t\in\mathbb{N}$, the agent chooses an action $A_t \in [K]$, and observes a stochastic reward $R_t \sim P_{A_t}$. A feasible policy $\pol \in \simp$ satisfies $A\pol \leq \mathbf{0}$ with respect to $\numconst$ linear constraints $A\in\mathbb{R}^{\numconst \times \numarm}$.\footnote{\textbf{Notations:} $\bx$, $X$, and $\cX$ denote a vector, a matrix, and a set, respectively. Detailed notations are in Table~\ref{tab:notations}. We augment the simplex constraints in $A$, and normalise each row of $A$, i.e. $\|A_i\|_2 =1$ for all $i\in[d]$.}

\begin{algorithm}[t!]
\caption{Pure Exploration in Bandits with Unknown Linear Constraints}\label{alg:protocol}
\begin{algorithmic}[1]
    \STATE \textbf{Input:} Tolerance $r >0$, Confidence level $\delta \in (0,1)$
    \FOR{$t=1,\, \ldots$}
    \STATE \textbf{Decision/sampling:} Play an arm $a_t \in [1,K]$
    \STATE \textbf{Reward-cost Feedback:} Observe reward $r_t \sim P_{a_t}$ and cost $\cost_t \sim A a_{t} + \eta_t$
    \IF{more than $1-\delta$ confident about the estimated answer being correct}
    \STATE Stop and stopping time $\tau_{\delta} \gets t$
    \ENDIF
    \ENDFOR
    \STATE \textbf{Recommendation:} $\polreco = \argmax_{\pol\in\Pi_{\Fest_{\tau_{\delta}}}^r} \Hat{\bmu}_{\tau_{\delta}}^{\top} \pol$
\end{algorithmic}
\end{algorithm}

\ifdoublecol
If $A$ is known, the agent has access to the non-empty and compact set of feasible policies $\cF \defn \{ \pol \in \simp \mid A\pol \leq \boldsymbol{0}\}$. The agent aims to identify an $r$-optimal optimal feasible policy, i.e. any feasible policy which belongs to $\rgood \defn \{ \pol\in\mathcal{F} \mid \bmu^{\top}\pol +r \geq \bmu^{\top}\pol_{\cF}^{\star} \}$, given 

\begin{align}\label{eq:opt_pol}
	\opt_{\cF} \defn \argmax_{\pol\in\cF} \bmu^{\top} \pol\,. 
\end{align}

\else
If $A$ is known, the agent has access to the non-empty and compact set of feasible policies $\cF \defn \{ \pol \in \simp \mid A\pol \leq \boldsymbol{0}\}$. The agent aims to identify an $r$-optimal optimal feasible policy, i.e. any feasible policy which belongs to $\rgood \defn \{ \pol\in\mathcal{F} \mid \bmu^{\top}\pol +r \geq \bmu^{\top}\pol_{\cF}^{\star} \}$, given 
\begin{align}\label{eq:opt_pol}
	\opt_{\cF} \defn \argmax_{\pol\in\cF} \bmu^{\top} \pol\,. 
\end{align}\vspace*{-1em}
\fi

\begin{definition}[$(1-\delta)$-correct and $(1-\delta)$-feasible $r$-optimal pure exploration]
	\ifdoublecol
	For $\delta \in [0,1)$, an $r$-optimal pure exploration algorithm is called $(1-\delta)$-correct and $(1-\delta)$-feasible if the policy $\hat{\pol}$ recommended by it satisfies $\Pr[\hat{\pol} \notin \rgood] \leq \delta$ and $\Pr[{A}\hat{\pol} \geq \boldsymbol{0}] \leq \delta$.
	\else
	For $\delta \in [0,1)$, an $r$-optimal pure exploration algorithm is called $(1-\delta)$-correct and $(1-\delta)$-feasible if the policy $\hat{\pol}$ recommended by it satisfies $\Pr[\hat{\pol} \notin \rgood] \leq \delta$ and $\Pr[{A}\hat{\pol} \geq \boldsymbol{0}] \leq \delta$.
	\fi
\end{definition}
\ifdoublecol	
In our setting, we do not have access to the true set of constraints. Hence, using the observations, we construct $\Hat{A}$ as an estimate of $A$. Then, the agent builds an estimated feasible set $\Fest \defn \{ \pol \in \simp\mid\hat{A} \pol \leq \textbf{0} \}$ to identify the optimal feasible policy as $\opt_{\Fest} \defn \arg \max_{\pol\in\Fest} \bmu^{\top} \pol$. In addition, the estimated $r$-optimal policy set is $\rgoodest \defn \{ \pol\in\Fest \mid \bmu^{\top}\pol +r \geq \bmu^{\top}\pol_{\Fest}^{\star} \}$.
We know that obtaining accurate estimates of these quantities would require us to collect feedback of satisfying constraints over time. This poses an additional cost of using observations to estimate $\bmu$.
\else
In our setting, we do not have access to the true set of constraints. Hence, using the observations, we construct $\Hat{A}$ as an estimate of $A$. Then, the agent builds an estimated feasible set $\Fest \defn \{ \pol \in \simp\mid\hat{A} \pol \leq \textbf{0} \}$ to identify the optimal feasible policy as $\opt_{\Fest} \defn \arg \max_{\pol\in\Fest} \bmu^{\top} \pol$. In addition, the estimated $r$-optimal policy set is $\rgoodest \defn \{ \pol\in\Fest \mid \bmu^{\top}\pol +r \geq \bmu^{\top}\pol_{\Fest}^{\star} \}$.
We know that obtaining accurate estimates of these quantities would require us to collect feedback of satisfying constraints over time. This poses an additional cost of using observations to estimate $\bmu$.
\fi

\textbf{Goal.} In order to recommend a $(1-\delta)$-correct and $(1-\delta)$-feasible policy that is $r$-optimal with respect ot the true optimal policy $\opt_{\cF}$, we aim to minimise the expected number of interactions $\mathbb{E}[\stopping] \in \mathbb{N}$. 

\subsection{Extension of Prior Bandit Problems} 
Now, we clarify our motivation by showing that different existing problems are special cases of our setting.

\vspace{2mm}\textbf{a. Thresholding Bandits.} Thresholding bandits~\citep{aziz2021multi} are motivated from the safe dose finding problem, where one wants to identify the most effective dose of a drug below a known safety level. This has also motivated the safe arm identification problem~\citep{wang2021best}. Our setting generalises it further to detect the optimal combination of doses of available drugs yielding highest efficacy while staying below the safety threshold. Formally, we identify $\opt = \argmax_{\pol\in\simp} \bmu^{\top}\pol$, such that $\identity\pol \leq \identity \btheta$ for thresholds $\btheta_a >0$.


\textbf{b. BAI with Fairness Constraints across Sub-populations (BAICS).} \cite{wu2023best} aims to select an arm that must be fair across $M$ sub-populations. Here, the arm belongs to a set $C\triangleq \{k\in[\numarm]|\bmu_{k,m}\geq 0 , m\in[M]\}$ where the observation for arm $k$ and population $m$ comes from $\mathcal{N}(\bmu_{k,m},1)$. It ensures that the chosen arm does not perform \textit{too bad} for any sub-population.  Like standard BAI, finding only the optimal arm might not be enough because it might not perform equally good for all of the sub-populations. This is similar to having $M$ groups of patients and $\numarm$ drugs to administer, where we are looking for a mixture of drugs such that $\opt = \argmax_{\pol\in\simp} \bmu^{\top}\pol$, such that $\indicator_{\bmu_m\geq \boldsymbol{0}}^{\top} \pol =1 , \forall m\in[M]$. 

We refer to Section \ref{subsec:motivation} for further discussion on generalisation to the existing bandit settings.

\section{Lagrangian Relaxation of the Lower Bound}\label{sec:lag_lb} 
Now, we derive Lagrangian relaxation of lower bound and its properties 
under a structural assumption.
\begin{assumption}[Structures of means, policy, and constraints]\label{ass:structural}
    (a) The mean vector $\bmu$ is in a bounded subset $\cD$ of $\reals^{\numarm}$. (b) There exists a unique optimal feasibly policy (Eq.~\eqref{eq:opt_pol}). (c) The true constraint $A$ yields a non-zero slack vector $\Gamma$: $\max_{\pol \in \simp} (-A \pol) = \Gamma.$
\end{assumption}
We impose the unique optimal and feasible policy assumption following~\cite{carlsson2024pure}. 
The assumption on slack is analogous to existence of a safe-arm~\citep{pacchiano2020stochastic}, or Slater's condition for the constraint optimisation problem~\citep{liu2021efficient}. 

\subsection{Information Acquisition: Estimate Constraints}\label{sec:Estimates}
The agent acquires information at every step $t\in\mathbb{N}$ by sampling an action $\ba_t \sim \allocation_t$. As the arms are independent, we represent the $a$-th arm as the $a$-th basis in $\reals^K$, denoted by $\ba \in \reals^K$. 
$\allocation_t \in \simp$ is called \textit{the allocation policy}. 
As shown in Algorithm~\ref{alg:protocol}, pulling the arm $a_t$ yields a noisy reward $r_t \in \reals$ and cost vector $\cost_t \in \reals^{\numconst}$. The cost vector $\cost_t \defn A\ba_t + \costnoise_t$, where $\costnoise_t$ is an independent and identically distributed noise vector whose each component uis generated from a $1$-sub-Gaussian distribution with mean zero.

Thus, using the observations obtained till $t$, we estimate the mean vector as $\hat{\bmu}_t \defn \Sigma_t^{-1}\left(\sum_{s=1}^{t-1} r_s \ba_s\right)$. Here, $\Sigma_t \defn \identity + \sum_{s=1}^{\top} \ba_s \ba_s^{\top}$, is the Gram matrix or the design matrix. Similarly, the estimate of the $i$-th row of the constraint matrix is $\Hat{A}_t^i \defn \Sigma_t^{-1}\left(\sum_{s=1}^{t-1}A^{i,\ba_s} \ba_s\right)$.
But na\"ively using $\Hat{A}_t$ to define the feasible policy set does not ensure that for any $t$, the estimated feasible set $\Fest$ is a superset of $\cF$.
Hence, we define a confidence ellipsoid around $\hat{A}_t$ that includes $A$ with probability at least $1-\delta$, and construct a optimistic estimate of $A$. Formally, the confidence ellipsoid is
\begin{align}\label{def:conf_ellipse}
\hspace*{-1em}    \mathcal{C}_t  \defn \big\{ A' \in \reals^{\numconst \times \numarm} \mid \|A'^i - \hat{A}_t^i\|_{\Sigma_t} \leq f(t,\delta) \forall i\in [\numconst]\big\}, 
\end{align}
where $f(\delta,t) \defn 1+\sqrt{\frac{1}{2}\log\frac{K}{\delta}+\frac{1}{4}\log\det\Sigma_t}$ is a monotonically non-decreasing function of $t$.
\begin{tcolorbox}[left=2pt,right=2pt,top=1pt,bottom=1pt]
    \begin{lemma}[Optimistic feasible sets]\label{lem:estimate_superset}
        At any time $t \in \mathbb{N}$, we construct the optimistic feasible policy set such that with probability $1-\delta$, $\Fest_t \defn \{\pol \in \simp: \min_{A' \in \mathcal{C}_t} A'\pol \leq \boldsymbol{0}\}$, 
        satisfies $\cF \subseteq \Fest_t$, where  $\Tilde{A}_t \triangleq \argmin_{A' \in \mathcal{C}_t} A'\pol$.
    \end{lemma}
\end{tcolorbox}

Figure~\ref{fig:Constraint_Propagation} plots this result using the numerical values obtained from our algorithms. Note that as we acquire more samples, estimated feasible policy set $\Fest_t \rightarrow \cF$.

\begin{remark}
    To ensure that the true optimal and feasible policy $\opt_{\cF} \in \Fest_t$ is inside the estimated feasible policy set, and $\Fest_t \neq \emptyset$ for any $t$, we use the optimistic estimate of the feasible set $\Fest_t$ ensuring $\cF \subseteq \Fest_t$ with high probability. Our construction of the optimistic estimates of feasible set resonate with the optimistic-pessimistic algorithms for regret-minimisation under constraints~\citep{pacchiano2020stochastic,pacchiano2024contextual,liu2021efficient,chen2022strategies}. Note that, similar confidence bounds for estimating constraints are used to test feasibility of linear programs~\citep{testingfeasibilitylinearprograms}.
\end{remark}

\subsection{Relaxation with Estimated Constraints}
Search for the optimal policy is essentially a linear programming problem when we know the mean vector $\bmu$ and a constraint matrix $A$. The challenge in bandit is to identify them from sequential feedback, i.e. to differentiate $\bmu$ from the other confusing instances in the same family of distributions.
These are called the \textit{alternative instances}. 
The strategy is to gather enough statistical evidence to rule out all such confusing instances, specifically the one that has minimum KL-divergence from $\bmu$ as observed under the allocation policy $\allocation$~\citep{pmlr-v49-garivier16a}. 
This intuition has led to the lower bound of~\cite{carlsson2024pure}-- the expected stopping time of any $(1-\delta)$-correct and always-feasible algorithm satisfies
\begin{equation}\label{eq:lb_known}
\E[\stopping] \geq T_{\cF,r}(\bmu)\ln\frac{1}{2.4\delta}\,,
\end{equation}
if $A$ is known. $T_{\cF,r}(\bmu)$ is called the \textit{characteristics time}. 
Its reciprocal is a max-min optimisation problem over the set of alternative instances $\altsetreal \defn \big\{ \blambda\in\mathcal{D}\mid \max_{\pol \in \rgood} \blambda^{\top}\pol - r > \blambda^{\top} \opt  \big\}$, where $\opt \in \argmax_{\pol \in \rgood}\bmu^{\top} \pol$, i.e.

\ifdoublecol
\begin{align}\label{eqn:char_time_known}
    T^{-1}_{\cF,r}(\bmu) &\defn \sup_{\allocation\in\simp}\max_{\pi\in\rgood} \inf_{\blambda\in\altsetreal} \sumk \allocation_a \kl \notag\\ &\defn  \sup_{\allocation\in\simp}\max_{\pi\in\rgood} \inf_{\blambda\in\altsetreal}\allocation^{\top} \klvec\, .
\end{align}
\else
\vspace*{-1.5em}\begin{align}\label{eqn:char_time_known}
    T^{-1}_{\cF,r}(\bmu) \defn \sup_{\allocation\in\simp}\max_{\pi\in\rgood} \inf_{\blambda\in\altsetreal} \sumk \allocation_a \kl \defn  \sup_{\allocation\in\simp}\max_{\pi\in\rgood} \inf_{\blambda\in\altsetreal}\allocation^{\top} \klvec\, .
\end{align}
\fi
$\Lambda_{\cF}(\bmu)$, referred as the Alt-set, is the set of all bandit instances whose mean vectors are in a bounded subset $\cD \in \reals^K$ but the optimal policy is different than that of $\bmu \in \cD$.
Now, we inspect the change in this lower bound at any step $t>0$, when we only have access to a optimisti estimate $\Fest_t$ and the confidence ellipsoid $\cC_t$ but do not know $\cF$. For brevity, we exclude $t$ from the subscripts for where it is clear from the context. 
The Alt-set given $\Fest$ is defined as 
\ifdoublecol
\begin{align}\label{eqn:alt_set_estimated}
	\altsetr \defn \big\{ \blambda\in\mathcal{D}\mid  \max_{\pol \in \Fest} \blambda^{\top}\pol - r > \blambda^{\top}\Hat{\pol}^{\star} \big\},
\end{align}
\else
\begin{align}\label{eqn:alt_set_estimated}
	\altsetr \defn \big\{ \blambda\in\mathcal{D}\mid  \max_{\pol \in \Fest} \blambda^{\top}\pol - r > \blambda^{\top}\Hat{\pol}^{\star} \big\},
\end{align}

\fi
where $\Hat{\pol}^{\star} \in \argmax_{\pol \in \rgoodest} \bmu^{\top}\pol$. Since $\cF \subseteq \Fest$, we observe that $\altset \subseteq \altsetreal$~(Figure \ref{fig:normal_cone}). 
Now, we define Lagrangian relaxation of the lower bound, i.e.
\ifdoublecol
\hspace{-2em}\begin{align}\hspace*{-6em}
    &~~~~\sup_{\allocation\in\simp}\max_{\pol\in\rgood} \inf_{\blambda\in\altsetreal} \allocation^{\top} \klvec\leq 
    \inf_{\lag\in\reals_{+}^{\numconst}} \min_{A' \in \cC} \sup_{\allocation\in\simp}\max_{\pol\in\rgood} \inf_{\blambda\in\altsetr} \allocation^{\top}\klvec - \lag^{\top}A' \allocation \label{eqn:lag_relax}
\end{align}
\else
\begin{align}\hspace*{-6em}
	T^{-1}_{\cF,r}(\bmu) \defn \sup_{\allocation\in\simp}\max_{\pol\in\rgood} \inf_{\blambda\in\altsetreal} \allocation^{\top} \klvec\leq 
    \inf_{\lag\in\reals_{+}^{\numconst}} \min_{A' \in \cC} \sup_{\allocation\in\simp}\max_{\pol\in\rgood} \inf_{\blambda\in\altsetr} \allocation^{\top}\klvec - \lag^{\top}A' \allocation \label{eqn:lag_relax}
\end{align}
\fi


We denote this Lagrangian relaxation of the characteristic time with $\Fest$ as $T_{\Fest,r}^{-1}(\bmu)$. 
For non-negative Lagrange multipliers $\lag \in \reals_{+}^{\numconst}$, the first inequality is true due to the existence of a slack for the true constraints $A$.
The inequality holds due to the optimistic choice of the estimated constraint.
Equation \eqref{eqn:lag_relax} shows that the reciprocal of the Lagrangian relaxation, $T_{\Fest,r}(\bmu)$, serves as a upper bound on the characteristic time $T_{\cF}(\bmu)$ for known constraints~\citep{carlsson2024pure}.

The Lagrangian relaxation leads to a natural question:
\begin{tcolorbox}[left=2pt,right=2pt,top=1pt,bottom=1pt]
    Does the dual of the optimization problem for $T_{\Fest,r}^{-1}(\bmu)$ yield the same solution as the primal?
\end{tcolorbox}

We formalise the strong duality result and self-bounding property of the Lagrangian multiplier of the relaxation in Equation~\eqref{eqn:lag_relax} in Theorem~\ref{thm:Existence of strong duality and bound on Lagrangian multiplier} below.

\begin{tcolorbox}[left=2pt,right=2pt,top=1pt,bottom=1pt]
\ifdoublecol
    \begin{theorem}[Strong Duality and Range of Lagrange Multipliers]\label{thm:Existence of strong duality and bound on Lagrangian multiplier}
    The optimisation problem in Equation~\eqref{eqn:lag_relax} satisfies
    \begin{align}\label{eq:lag_lb}
        &\inf_{\lag\in\reals_{+}^{\numconst}} \min_{A' \in \cC} \sup_{\allocation\in\simp}\max_{\pol\in\rgoodest}\inf_{\blambda\in\altsetr} \allocation^{\top}\klvec - \lag^{\top} {A'}\allocation =\notag \\
        & \sup_{\allocation\in\simp} \min_{\lag\in\cL}\max_{\pol\in\rgoodest}\inf_{\blambda\in\altsetr} \allocation^{\top}\klvec - \lag^{\top} \Tilde{A}\allocation\,.
    \end{align}   

    Here, $\cL \defn \{\lag \in \reals_+^{\numconst}\mid 0\leq \Vert {\lag}\Vert_1 \leq \frac{1}{\gamma} \mathcal{D}(\allocation,\bmu,\mathcal{F})\}$,
    where $\gamma \defn \min_{i\in[1,{\numconst}]} \{ -\Tilde{A}^i \hat{\opt}\}$, i.e. the minimum slack w.r.t. the estimated optimal feasible policy. 
\end{theorem}

\else
    \begin{theorem}[Strong Duality and Range of Lagrange Multipliers]\label{thm:Existence of strong duality and bound on Lagrangian multiplier}
    The optimisation problem in Equation~\eqref{eqn:lag_relax} satisfies
    \begin{align}\label{eq:lag_lb}
        \inf_{\lag\in\reals_{+}^{\numconst}} \min_{A' \in \cC} \sup_{\allocation\in\simp}\max_{\pol\in\rgoodest}\inf_{\blambda\in\altsetr} \allocation^{\top}\klvec - \lag^{\top} {A'}\allocation = \sup_{\allocation\in\simp} \min_{\lag\in\cL}\max_{\pol\in\rgoodest}\inf_{\blambda\in\altsetr} \allocation^{\top}\klvec - \lag^{\top} \Tilde{A}\allocation\,.
    \end{align}   

    Here, $\cL \defn \{\lag \in \reals_+^{\numconst}\mid 0\leq \Vert {\lag}\Vert_1 \leq \frac{1}{\gamma} \mathcal{D}(\allocation,\bmu,\mathcal{F})\}$,
    where $\gamma \defn \min_{i\in[1,{\numconst}]} \{ -\Tilde{A}^i \hat{\opt}\}$, i.e. the minimum slack w.r.t. the estimated optimal feasible policy. 
\end{theorem}
\fi
\end{tcolorbox}

Detailed proof is in Appendix \ref{sec:Strong duality proof}. Hereafter, we use the RHS of Eq.~\eqref{eq:lag_lb} as $T_{\Fest,r}^{-1}(\bmu)$. 
Theorem~\ref{thm:Existence of strong duality and bound on Lagrangian multiplier} provides a hypercube to search for the Lagrangian multipliers, which is a linear programming problem.  

\ifdoublecol
\begin{figure*}[t!]
  \centering
 \begin{minipage}{1.3\columnwidth}
  \centering
    \includegraphics[width=0.8\linewidth,trim={1cm 6cm 1cm 2.5cm},clip]{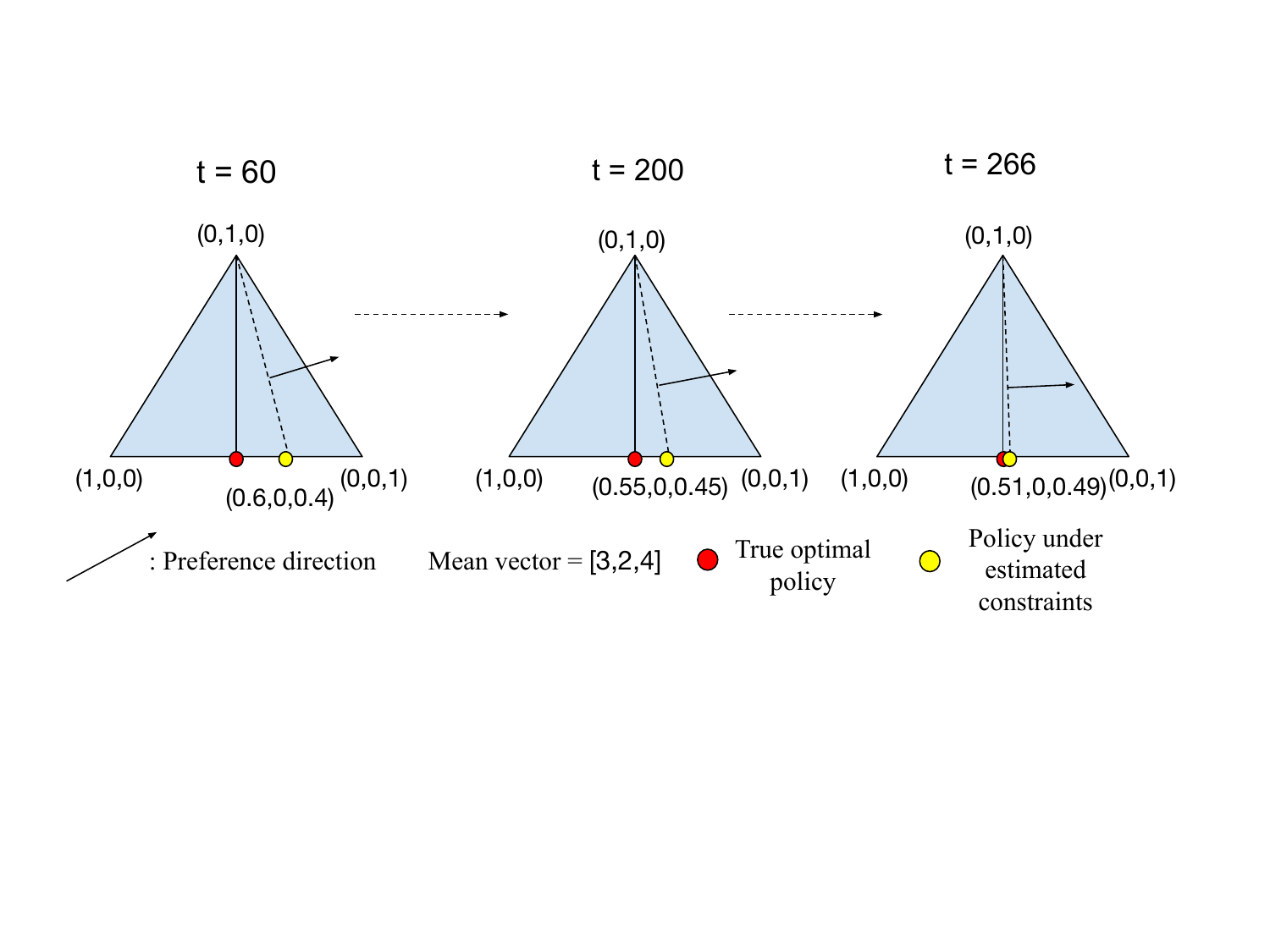}\vspace*{-1em}
    \caption{Convergence of the optimistic feasible set and optimal policy.}\label{fig:Constraint_Propagation}
 \end{minipage}\hspace{2mm}
\begin{minipage}{0.64\columnwidth}
    \centering
    \includegraphics[width=0.8\textwidth,trim={2.5cm 0.3cm 4.5cm 4cm},clip]{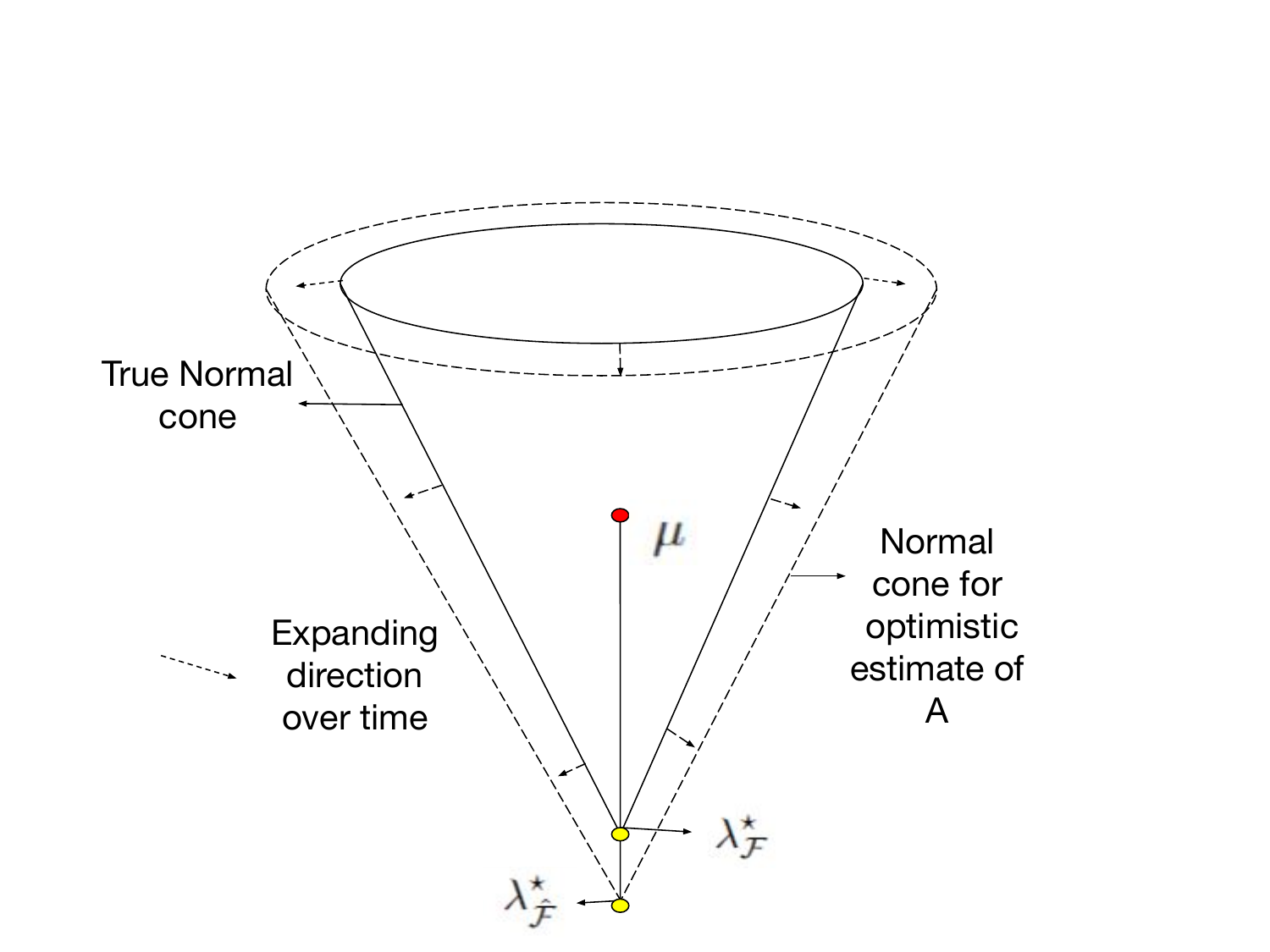}
    \caption{Normal cones over time.}\label{fig:normal_cone}
\end{minipage}\vspace*{-1.5em}
\end{figure*}
\else
\begin{figure}[t!]
  \centering
 \begin{minipage}{0.56\textwidth}
  \centering
    \includegraphics[width=\textwidth,trim={1cm 6cm 1cm 2.5cm},clip]{Plots/Constraint_prop.pdf}
    \caption{Convergence of the optimistic feasible set and optimal policy.}\label{fig:Constraint_Propagation}
 \end{minipage}\hfill
\begin{minipage}{0.43\textwidth}
    \centering
    \includegraphics[width=0.8\textwidth,trim={2.0cm 0.3cm 4.5cm 4cm},clip]{Plots/Normal_cone.pdf}
    \caption{Normal cones over time.}\label{fig:normal_cone}
\end{minipage}
\end{figure}
\fi

\begin{remark}[Connections with Lagrangian-based Methods in Bandits.]
Regret minimisation literature  leverages Lagrangian-based optimistic-pessimistic methods~\citep{NEURIPS2020_0f34314d,pmlr-v195-foster23c} to obtain both the sub-linear regret and constraint violation guarantees~\citep{liu2021efficient,bernasconi2024noregret}. Our proposed algorithm LAGEX (Algorithm~\ref{alg:lagex}) is a prime example where the ``self-boundedness'' of the dual variables results in tighter constraint violation guarantees (Figure \ref{fig:constraint_violation_hard} and \ref{fig:Constraint_violation_easy}). 
\end{remark}
\textbf{I. The Inner Optimisation Problem.} Now, we peel the layers of the optimisation problem in Eq.~\eqref{eq:lag_lb} and focus on obtaining
\begin{align*}
	\mathcal{D}(\allocation,\bmu,\Fest)\triangleq \min_{\lag\in\cL}\max_{\pol\in\rgoodest}\inf_{\blambda\in\altsetr} \allocation^{\top}\klvec - \lag^{\top} \Tilde{A}\allocation\,.
\end{align*}

For known constraints and $r=0$, \cite{carlsson2024pure} has leveraged results from convex analysis~\citep{Boyd_Vandenberghe_2004} to show that the most confusing instance for $\bmu$ lie in the boundary of the normal cone $\altsetreal^{C}$ spanned by the active constraints $A_{\opt_{\cF}}$ for $\opt_{\cF}$. $A_{\opt_{\cF}}$ is a sub-matrix of $A$ consisting at least $K$ linearly independent rows. \textit{This is called the projection lemma. }
Specifically, $\mathcal{D}(\allocation,\bmu,\cF \mid r=0) = \max_{\pol\in\Pi_{\cF}^0} \min_{\pol^{'}\in\nu_{\cF}(\pol)} \min_{\blambda:\blambda^{\top}(\pol-\pol^{'})=0}  \allocation^{\top}\klvec.$

In our setting, we are sequentially estimating both the mean vectors and the constraints, and thus, the normal  cone.  Now, we derive the projection lemma for our optimisti feasible set and $r\neq 0$.
\begin{tcolorbox}[left=2pt,right=2pt,top=1pt,bottom=1pt]
 \ifdoublecol
\begin{proposition}[Projection Lemma for Unknown Constraints]\label{prop:Projection Lemma for Lagrangian Formulation}
	For any $\allocation \in \Fest$ and $\bmu\in\cD$, the following projection lemma holds for the Lagrangian relaxation in Equation~\eqref{eqn:lag_relax},
\small{\begin{align}\label{eqn:Projection Lemma for Lag}
			&\mathcal{D}(\allocation,\bmu,\Fest)=\notag \\  &  \min_{\lag\in\cL}\max_{\pol \in \rgoodest} \min_{\pol^{'}\in\nu_{\Fest}(\pol)} \min_{\blambda:\blambda^{\top}(\pol-\pol^{'})=r} \allocation^{\top}\klvec - \lag^{\top} \Tilde{A}\allocation\, . 
	\end{align}}
\end{proposition}
\else
\begin{proposition}[Projection Lemma for Unknown Constraints]\label{prop:Projection Lemma for Lagrangian Formulation}
	For any $\allocation \in \Fest$ and $\bmu\in\cD$, the Lagrangian relaxation in Equation~\eqref{eqn:lag_relax} satisfies 
    {\begin{align*}\label{eqn:Projection Lemma for Lag}
		 \mathcal{D}(\allocation,\bmu,\Fest) = \min_{\lag\in\cL}\max_{\pol \in \rgoodest} \min_{\pol^{'}\in\nu_{\Fest}(\pol)} \min_{\blambda:\blambda^{\top}(\pol-\pol^{'})=r} \allocation^{\top}\klvec - \lag^{\top} \Tilde{A}\allocation\, . 
	\end{align*}}
\end{proposition}
\fi   
\end{tcolorbox}	

Proposition \ref{prop:Projection Lemma for Lagrangian Formulation} reduces the inner minimisation problem to a less intensive discrete optimisation, where we only have to search over the neighbouring vertices of the optimal policy in $\Fest$ for a solution. 
Now, a natural question arises around this formulation: 
\begin{tcolorbox}[left=2pt,right=2pt,top=1pt,bottom=1pt]
    Can we track $\mathcal{D}(\allocation,\bmu,\Fest)$ in the projection lemma over time as we sequentially estimate the constraints?
\end{tcolorbox}

To answer this, we first prove convergence of estimated feasible set and alternating instance in order to prove convergence of $\mathcal{D}(\allocation,\bmu,\Fest_t)$ in the theorem below. Detailed proof is in Appendix \ref{sec:prop of Fhat}.

\begin{tcolorbox}[left=2pt,right=2pt,top=1pt,bottom=1pt]
    \begin{theorem}\label{thm:Asymptotic Convergence of the Projection Lemma}
    For a sequence $\{ \Fest_t \}_{t\in\mathbb{N}}$ and $\{\Hat{\blambda}_t \}_{t\in\mathbb{N}}$, we show that
    (a) $\lim_{t\rightarrow\infty}\Fest_t \rightarrow \cF$,
    (b) $\blambda^{\star}$ is unique, and 
    (c) $\lim_{t\rightarrow\infty}\Hat{\blambda}_t \rightarrow \blambda^{\star}$. 
    Then, for any $\allocation \in \cF$ and $\bmu$, 
    $\lim_{t\rightarrow\infty}\mathcal{D}(\allocation,\bmu,\Fest_t) \rightarrow \mathcal{D}(\allocation,\bmu,\cF),
    $
    where $\blambda^* \in \argmin_{\blambda \in \altsetreal}\allocation^{\top} \klvec$. 
\end{theorem}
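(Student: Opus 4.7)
I would prove (a), (b), (c) in order, then combine them with continuity of the objective $\allocation^{\top}\klvec$ to deduce convergence of the value $\mathcal{D}(\allocation,\bmu,\Fest_t)$. The three core ingredients are: (i) shrinkage of the confidence ellipsoid $\cC_t$ from Eq.~\eqref{def:conf_ellipse}; (ii) strict convexity of the KL divergence, which forces uniqueness of the argmin; and (iii) stability of perturbed convex programs (Berge's maximum theorem), which transports set convergence $\Fest_t \to \cF$ into convergence of the argmin $\hat{\blambda}_t \to \blambda^{\star}$.

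\textbf{Part (a).} Under any algorithm that explores sufficiently (as enforced by the forced-exploration step of LATS and LAGEX), the Gram matrix $\Sigma_t$ has smallest eigenvalue growing linearly in $t$, whereas $f(t,\delta)=\cO(\sqrt{\log t})$. Hence the radius $f(t,\delta)/\sqrt{\sigma_{\min}(\Sigma_t)}$ of $\cC_t$ tends to $0$. Lemma~\ref{lem:estimate_superset} guarantees $\bA \in \cC_t$ for all $t$ with probability $1-\delta$, so $\Tilde{\bA}_t \to \bA$. Assumption~\ref{ass:structural}(c) gives a strict Slater point for $\cF$, which ensures that the map $\bA' \mapsto \{\pol \in \simp \mid \bA'\pol \leq 0\}$ is continuous in the Hausdorff metric near $\bA$. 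Combining these, $\Fest_t \to \cF$ almost surely.

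\textbf{Part (b).} By the projection lemma (Proposition~\ref{prop:Projection Lemma for Lagrangian Formulation}) applied to the true set $\cF$, any minimizer $\blambda^{\star}$ is obtained by first fixing a neighbour $\pol' \in \nu_{\cF}(\opt_{\cF})$ and minimising $\allocation^{\top}\klvec$ under the affine constraint $\blambda^{\top}(\opt_{\cF}-\pol')=r$, then minimising over the finite set $\nu_{\cF}(\opt_{\cF})$. For exponential-family rewards, $d(\bmu_a,\cdot)$ is strictly convex, and forced exploration ensures $\allocation_a>0$ for every $a$; hence the inner problem has a unique minimizer for each $\pol'$. Assumption~\ref{ass:structural}(b) together with the strict convexity argument rules out ties in the outer minimization (up to a generic non-degeneracy that can be proved from Assumption~\ref{ass:structural}(b)), yielding uniqueness of $\blambda^{\star}$.

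\textbf{Part (c) and conclusion.} This is the delicate step. Once $\Fest_t$ is close enough to $\cF$ in Hausdorff distance, vertex enumeration is locally Lipschitz, so the unique neighbour $\hat{\pol}'_t \in \nu_{\Fest_t}(\opt_{\Fest_t})$ that achieves the inner min converges to the corresponding $\pol'^{\star} \in \nu_{\cF}(\opt_{\cF})$; consequently the affine hyperplane $\{\blambda : \blambda^{\top}(\opt_{\Fest_t}-\hat{\pol}'_t)=r\}$ converges to $\{\blambda : \blambda^{\top}(\opt_{\cF}-\pol'^{\star})=r\}$. The objective $\blambda \mapsto \allocation^{\top}\klvec$ is jointly continuous and strictly convex, so Berge's maximum theorem (or Hogan's stability theorem for perturbed convex programs) applies and gives $\hat{\blambda}_t \to \blambda^{\star}$. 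Since $\altsetr \to \altsetreal$ and the objective is continuous, the value $\mathcal{D}(\allocation,\bmu,\Fest_t)\to \mathcal{D}(\allocation,\bmu,\cF)$ follows. The principal obstacle is the combinatorial instability of $\nu_{\Fest_t}(\opt_{\Fest_t})$ when $\Fest_t$ is far from $\cF$: one must wait for a (random but almost-surely finite) time $t_0$ after which the active-constraint pattern stabilises, which is where Assumptions~\ref{ass:structural}(b) and (c) are essential.
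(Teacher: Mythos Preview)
Your proposal is correct and follows the same three-part skeleton as the paper, but the technical tools you invoke differ in each part. For (a), the paper argues $\Tilde{\bA}_t\to\bA$ via a CLT/Slutsky-type argument on the estimator and then appeals to a Dantzig--Folkman--Shapiro continuity theorem for polyhedral set-valued maps; you instead use direct shrinkage of the confidence radius $f(t,\delta)\|\allocation\|_{\Sigma_t^{-1}}$ together with Hausdorff continuity of the feasible-set map under Slater's condition. Your route is more elementary and delivers almost-sure convergence directly, whereas the paper's CLT step only yields convergence in distribution (a minor looseness on their side). One caveat: C-tracking guarantees $N_{a,t}\gtrsim\sqrt{t}$, so $\sigma_{\min}(\Sigma_t)$ grows like $\sqrt{t}$ rather than linearly as you state---but this still suffices since $f(t,\delta)=\cO(\sqrt{\log t})$. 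For (b), both proofs rest on strict convexity of the per-arm KL; the paper gives a bare contradiction argument, while you additionally route through the projection lemma and flag the possibility of ties in the outer discrete minimisation over neighbours, an issue the paper does not address explicitly. For (c), the paper's argument is essentially an informal $(\epsilon,\delta)$ limit computation using the set properties already established; your appeal to Berge's maximum theorem, together with the observation that the active-constraint pattern of $\Fest_t$ stabilises after a finite random time, is more structured and makes the stability mechanism explicit. Overall your argument is tighter on (a) and (c), at the cost of importing slightly heavier machinery (Berge/Hogan stability) than the paper's more ad hoc reasoning.
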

\end{tcolorbox}

\textbf{II. The Outer Optimisation Problem.}
As we guarantee the convergence of $\mathcal{D}(\allocation,\bmu,\Fest_t)$ as $\Fest_t \rightarrow \cF$, we are left with the outer optimisation problem in Equation~\eqref{eq:lag_lb}. Since it is a linear problem in $\allocation$, we can use a linear programming method leading to a vertex of $\Fest_t$. 
But to be sure of an existence of a solution at each $t \in \mathbb{N}$, we need well-behavedness properties of the optimal allocation $\allocation^*(\bmu)$. 
First, we observe that our estimates of the mean vector converge to $\bmu$ as $t \rightarrow \infty$. 
Hence, we also get $\lim_{t\rightarrow \infty}\mathcal{D}(\allocation,\Hat{\bmu}_t,\Fest_t) \rightarrow \mathcal{D}(\allocation,\bmu,\cF)$. 
Now, we ensure well-behavedness and existence of an optimal allocation for all $t>0$. 

\begin{tcolorbox}[left=2pt,right=2pt,top=1pt,bottom=1pt]
  \begin{theorem}[Existence of unique optimal allocation]\label{thm:Optimal Policy}
    For all $\bmu\in\cD$, $\allocation^{\star}(\bmu)$ satisfies: 
    1. Both the sets $\Fest$ and $\allocation^{\star}(\bmu)$ are closed and convex. 
    2. For all $\bmu\in\cD$ and $\allocation\in\Fest$,  $\lim_{t\rightarrow\infty}\mathcal{D}(\allocation,\Hat{\bmu}_t,\Fest_t)$ is continuous. 
    3. Reciprocal of the characteristic time $\lim_{t\rightarrow\infty} T^{-1}_{\Fest_t,r}(\bmu)$ is continuous for all $\bmu \in \cD$. 
    4. For all  $\bmu \in \cD$, $\bmu \rightarrow \allocation^{\star}(\bmu)$ is upper hemi-continuous. 
    Thus, the optimization problem $\max_{\pol\in\Hat{{\cF}}} \bmu^{\top}\pol$ has a unique solution.
\end{theorem}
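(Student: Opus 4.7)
The plan is to verify the four listed properties in sequence and then invoke a Berge-maximum-theorem-style argument to conclude uniqueness of $\argmax_{\pol\in\Fest}\bmu^{\top}\pol$. For closedness and convexity of $\Fest$ (part of condition~1), I would rewrite each pessimistic constraint $\min_{\bA'\in\cC_t}\bA'^i\pi\leq 0$ via its closed-form support-function expression $\hat{\bA}_t^i\pi - f(t,\delta)\|\pi\|_{\Sigma_t^{-1}}\leq 0$, intersect with the simplex, and argue convexity from the shape of these defining inequalities combined with standard arguments for pessimistic confidence sets. Closedness is immediate from continuity of all functions involved. Closedness and convexity of $\allocation^\star(\bmu)$ then follow because $\allocation\mapsto\mathcal{D}(\allocation,\bmu,\Fest)$ is concave: for each fixed $(\pi,\blambda,\lag)$ the inner expression $\allocation^{\top}\klvec-\lag^{\top}\Tilde{\bA}\allocation$ is affine in $\allocation$, and concavity is preserved under the infimum over $\blambda$ and the minimum over $\lag$; the argmax of a concave function over a compact convex set is convex and closed.

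For condition~2, I would combine three ingredients: Theorem~3, which establishes $\Fest_t\to\cF$ and $\mathcal{D}(\allocation,\bmu,\Fest_t)\to\mathcal{D}(\allocation,\bmu,\cF)$ when the mean is fixed; the standard consistency $\hat{\bmu}_t\to\bmu$ from concentration of the least-squares estimator; and the joint continuity of KL-divergence in its arguments on the bounded mean domain $\cD$. By the projection lemma (Proposition~1), $\mathcal{D}$ reduces to a finite minimisation over neighbouring vertices of $\rgoodest$ together with Lagrange multipliers constrained to the compact hypercube $\cL$ from Theorem~1. Each piece varies continuously in $(\bmu,\Fest_t)$, and a uniform-continuity argument over the compact simplex glues them together. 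Condition~3 is then a corollary: $T^{-1}_{\Fest_t,r}(\bmu)=\sup_{\allocation\in\simp}\mathcal{D}(\allocation,\bmu,\Fest)$ is the supremum of a jointly continuous function over a compact set, so it is continuous in $\bmu$.

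For condition~4, I would invoke Berge's maximum theorem using the compact-convex-valued constraint correspondence $\Fest$ from Step~1, the joint continuity of $\mathcal{D}$ established in Step~2, and compactness of $\simp$; this yields upper hemi-continuity of $\bmu\mapsto\allocation^\star(\bmu)$. To conclude uniqueness of $\argmax_{\pol\in\Fest}\bmu^{\top}\pol$, I use that $\Fest$ is a polytope and $\bmu^{\top}\pol$ is linear, so the argmax is attained at an extreme point; Assumption~\ref{ass:structural}(b), which guarantees uniqueness of the optimal policy on $\cF$, combined with the set-convergence $\Fest_t\to\cF$ from Theorem~3, rules out any limiting degeneracy along a face, leaving a singleton $\rpolest$.

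The main obstacle is the continuity claim in Step~2. The pessimistic constraints defining $\Fest_t$ contain the norm term $\|\pi\|_{\Sigma_t^{-1}}$ whose shrinkage depends on the random design matrix $\Sigma_t$, and the multiplier $\lag$ is constrained to a hypercube $\cL$ whose diameter depends on the slack $\gamma$ — which itself depends on $\allocation^\star$. Controlling how $\mathcal{D}$ varies as $\Fest_t$ moves in the Hausdorff sense, particularly when active constraints switch near the boundary of $\rgoodest$, is the most delicate part. This is exactly where the projection-lemma reduction to finitely many neighbouring vertices pays off: I would exploit that discrete structure to localise the continuity question to a finite combinatorial argument, and then patch the local pieces together using compactness of $\simp\times\cL$ and the convergence established in Theorem~3.
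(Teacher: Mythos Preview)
Your four-step structure mirrors the paper's proof closely: both establish convexity/closedness of $\Fest$, continuity of $\mathcal{D}$ via Theorem~\ref{thm:Asymptotic Convergence of the Projection Lemma} together with mean convergence, continuity of $T^{-1}$ as a corollary, and upper hemi-continuity of $\allocation^\star$. The main tooling difference is in Step~4: the paper invokes Lemma~13 of \cite{magureanu2014lipschitz} (stated as Lemma~\ref{lemm:hemi-cont}), which handles LP-type parametric problems directly, whereas you appeal to Berge's maximum theorem (Theorem~\ref{thm:Berge's theorem}). Both routes are valid; Berge is the more general hammer, while the Magureanu--Combes--Prouti\`ere lemma matches the linear structure of the inner problem more tightly and avoids having to separately verify continuity of the feasibility correspondence.

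One gap in your concavity argument for $\allocation^{\star}(\bmu)$: you claim concavity in $\allocation$ is preserved under $\inf_{\blambda}$ and $\min_{\lag}$, but you omit the intermediate $\max_{\pol\in\rgoodest}$ in the definition of $\mathcal{D}$. A pointwise maximum of concave functions is not concave in general, so this step does not go through as written. The paper sidesteps the issue by arguing only compactness of $\allocation^{\star}(\bmu)$ (inherited from $\Fest$) rather than convexity directly. Since neither Berge's theorem nor Lemma~\ref{lemm:hemi-cont} requires convexity of the argmax set to deliver upper hemi-continuity, the gap is local to Statement~1 and does not propagate into Steps~2--4 or the uniqueness conclusion.
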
  
\end{tcolorbox}

\textbf{Characterising Lower Bound for Gaussians.} Since we can derive explicit form of the optimisation problem for Gaussian reward distributions, we characterise it further to relate our lower bound with the lower bound for known constraints (Appendix \ref{sec:gaussian lb}).

\begin{tcolorbox}[left=2pt,right=2pt,top=1pt,bottom=1pt]
  \ifdoublecol
\begin{theorem}\label{Characteristic time for Gaussian Distribution}
   Let $\{ P_a \}_{a\in[K]}$ be Gaussian distributions with equal variance $\sigma^2 > 0$ , $\charactimer$ is
   \begin{align*}
   \max_{\allocation\in\simp}\min_{\lag\in\cL}\max_{\pol\in\rgoodest}\min_{\pol' \in \neighr}&\Bigg\{\frac{(r-\bmu^{\top}(\pol-\pol'))^2}{2\sigma^{2}\Vert\pol-\pol'\Vert_{\mathrm{Diag}(1/\allocation_a)}^2} \\
        &\qquad\qquad- \lag^{\top} \Tilde{A}\allocation \Bigg\}\,,
   \end{align*}
   where $\mathrm{Diag}(1/\allocation_a)$ is a $K$-dimensional diagonal matrix with $a$-th diagonal entry $1/\allocation_a$ and $\neighr$ is the set of neighbouring policies of $\pol$ in $\Fest$.
\end{theorem}
\else
\begin{theorem}\label{Characteristic time for Gaussian Distribution}
   Let $\{ P_a \}_{a\in[K]}$ be Gaussian distributions with equal variance $\sigma^2 > 0$ , $\charactimer$ is
   \begin{align*}
   \max_{\allocation\in\simp}\min_{\lag\in\cL}\max_{\pol\in\rgoodest}\min_{\pol' \in \neighr}&\Bigg\{\frac{(r-\bmu^{\top}(\pol-\pol'))^2}{2\sigma^{2}\Vert\pol-\pol'\Vert_{\mathrm{Diag}(1/\allocation_a)}^2} - \lag^{\top} \Tilde{A}\allocation \Bigg\}\,,
   \end{align*}
   where $\mathrm{Diag}(1/\allocation_a)$ is a $K$-dimensional diagonal matrix with $a$-th diagonal entry $1/\allocation_a$ and $\neighr$ is the set of neighbouring policies of $\pol$ in $\Fest$.
\end{theorem}
\fi  
\end{tcolorbox}

Now, to connect with the lower bound under known constraints, we fix $r=0$, i.e we search for the `true' optimal policy $\opt_{\mathcal{F}}$ rather than an $r$-optimal feasible policy. 

\begin{corollary}\label{cor:Bounds on characteristic time}
    Let $d_{\pol}^2 \defn  \frac{\Vert\opt_{\cF}-\pol\Vert_{\bmu\bmu^{\top}}^2}{\Vert\opt_{\cF}-\pol\Vert_2^2}$ be the norm of the projection of $\bmu$ on the policy gap $(\opt_{\cF}-\pol)$.\\
    \emph{Part i.} Then, we get 
    $$\frac{2\sigma^2 K}{C_{\mathrm{known}}}(1+\SP_{\Tilde{A}})\leq T_{\Fest,0}(\bmu)\leq \frac{2\sigma^2 K}{C_{\mathrm{known}}},$$ where $C_{\mathrm{known}} = \min_{\pol'' \in \neighreal}d_{\pol''}^2$. \\
    \emph{Part ii.}  $$T_{\Fest,0}(\bmu) \geq \frac{H}{\kappa^2}(1+\SP_{\Tilde{A}}),$$ where $\SP_{\Tilde{A}} \defn \frac{\max_{i\in[1,d]} \tildeA\opt_{\Fest}}{\min_{i\in[1,d]} \tildeA\opt_{\Fest}}$ is the estimated shadow price and $H$ is inversely proportional to the sum of squares of gaps and $\kappa_{\mathrm{known}}$ is condition number of a sub-matrix of $A$ with $\numarm$ linearly independent active constraints for $\opt$.
\end{corollary}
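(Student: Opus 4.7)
The plan is to specialise the Gaussian characterisation of Theorem~\ref{Characteristic time for Gaussian Distribution} to $r=0$ and then derive the matching upper and lower bounds by choosing an explicit allocation for the upper bound and by exploiting the Lagrangian penalty for the lower bound. Setting $r=0$ collapses the $r$-good set to $\rgoodest=\{\opt_{\Fest}\}$, so the $\max_{\pol}$ layer disappears, and by Proposition~\ref{prop:Projection Lemma for Lagrangian Formulation} the inner minimum runs only over $\pol'\in\neigh$, reducing $T^{-1}_{\Fest,0}(\bmu)$ to
\begin{align*}
\max_{\allocation\in\simp}\min_{\lag\in\cL}\min_{\pol'\in\neigh}\Bigg\{\frac{(\bmu^{\top}(\opt_{\Fest}-\pol'))^2}{2\sigma^{2}\Vert\opt_{\Fest}-\pol'\Vert_{\mathrm{Diag}(1/\allocation_a)}^2}-\lag^{\top}\Tilde{\bA}\allocation\Bigg\}.
\end{align*}
For the upper bound $T_{\Fest,0}(\bmu)\leq 2\sigma^{2}K/C_{\mathrm{known}}$ I would plug in the uniform allocation $\allocation=\mathbf{1}/K$, so that $\Vert\opt_{\Fest}-\pol'\Vert^2_{\mathrm{Diag}(1/\allocation_a)}=K\Vert\opt_{\Fest}-\pol'\Vert_2^2$ and the ratio term collapses to $d_{\pol'}^2/(2\sigma^{2}K)$. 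The Lagrangian penalty minimises to $0$ since the uniform allocation is strictly feasible and $\lag\geq 0$. Taking the minimum over $\pol'\in\neigh$ and passing to the limit $\Fest\to\cF$ via Theorem~\ref{thm:Asymptotic Convergence of the Projection Lemma} gives $T^{-1}_{\Fest,0}(\bmu)\geq C_{\mathrm{known}}/(2\sigma^{2}K)$, i.e.\ the claimed upper bound on the characteristic time.

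For the lower bound $T_{\Fest,0}(\bmu)\geq (2\sigma^{2}K/C_{\mathrm{known}})(1+\SP_{\Tilde{\bA}})$, I would use the saddle-point structure together with the multiplier bound $\|\lag^{\star}\|_1\leq \mathcal{D}(\allocation^{\star},\bmu,\Fest)/\gamma$ from Theorem~\ref{thm:Existence of strong duality and bound on Lagrangian multiplier}. By the envelope (sensitivity) theorem for the linear programme in $\allocation$, the shadow price $\SP_{\Tilde{\bA}}$ equals $\lag^{\star\top}(-\Tilde{\bA})\allocation^{\star}$ normalised by the unconstrained optimum value, and substituting this back into the saddle-point expression reveals that every allocation pays a multiplicative penalty $1/(1+\SP_{\Tilde{\bA}})$ on the ratio term compared with the unconstrained counterpart. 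Inverting produces the factor $1+\SP_{\Tilde{\bA}}$ in the characteristic-time lower bound.

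For Part~(ii), I would re-express $C_{\mathrm{known}}=\min_{\pol''\in\neighreal}d_{\pol''}^2$ through the active-constraint structure at $\opt_{\cF}$. Writing every neighbouring vertex $\pol''$ of $\opt_{\cF}$ as $\opt_{\cF}+\bA_{\opt}^{-1}\be_i$ for a standard unit direction $\be_i$, where $\bA_{\opt}$ is the submatrix of $\bA$ formed by the $\numarm$ linearly independent active constraints at $\opt_{\cF}$, a Cauchy--Schwarz estimate on $\Vert\opt_{\cF}-\pol''\Vert^2_{\bmu\bmu^{\top}}/\Vert\opt_{\cF}-\pol''\Vert_2^2$ produces the factor $\kappa^{-2}$ through the condition number of $\bA_{\opt}$, while aggregating the numerator $(\bmu^{\top}(\opt_{\cF}-\pol''))^2$ over the directions gives the inverse sum-of-gaps-squared hardness $H$. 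Combining with the factor $(1+\SP_{\Tilde{\bA}})$ inherited from Part~(i) yields the stated bound.

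The main obstacle is the lower bound of Part~(i): making rigorous how the Lagrangian penalty $-\lag^{\top}\Tilde{\bA}\allocation$ evaluated at the saddle point translates into the multiplicative factor $1+\SP_{\Tilde{\bA}}$. This requires a careful application of the envelope theorem to the non-smooth min-max objective and uses the fact that the optimal constrained allocation $\allocation^{\star}$ need not coincide with the uniform allocation used for the upper bound, so the two bounds are not tight on the same witness. A related subtlety is verifying that $\gamma=\min_i\{-\Tilde{\bA}^i\allocation^{\star}\}$ is strictly positive, which follows from Assumption~\ref{ass:structural}(c) through the confidence-ellipsoid construction but has to be transferred from $\bA$ to $\Tilde{\bA}$ with high probability.
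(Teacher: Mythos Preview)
Your upper-bound argument has a genuine gap. Plugging in the uniform allocation gives you
\[
T^{-1}_{\Fest,0}(\bmu)\ \geq\ \min_{\pol'\in\neigh}\frac{(\bmu^{\top}(\opt_{\Fest}-\pol'))^2}{2\sigma^{2}K\,\Vert\opt_{\Fest}-\pol'\Vert_2^2},
\]
but this ratio is formed with $\opt_{\Fest}$ and the neighbours of $\opt_{\Fest}$ in $\Fest$, whereas $C_{\mathrm{known}}=\min_{\pol''\in\neighreal}d_{\pol''}^2$ is defined through $\opt_{\cF}$ and the neighbours in $\cF$. ``Passing to the limit $\Fest\to\cF$'' via Theorem~\ref{thm:Asymptotic Convergence of the Projection Lemma} would only bound the \emph{limit} of $T_{\Fest,0}$, not $T_{\Fest,0}$ itself, and the corollary is a statement about the latter at a fixed (finite-sample) $\Fest$. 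So as written, the step that converts $\neigh$ to $\neighreal$ is missing.

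The paper closes this gap by a direct geometric observation rather than a limit: because $\cF\subseteq\Fest$, both $\opt_{\cF}$ and any neighbour $\pol''\in\neighreal$ lie on the edge joining $\opt_{\Fest}$ to the corresponding neighbour $\pol'\in\neigh$; writing $\opt_{\cF}=t_1\opt_{\Fest}+(1-t_1)\pol'$ and $\pol''=t_2\opt_{\Fest}+(1-t_2)\pol'$, one gets $\opt_{\cF}-\pol''=(t_1-t_2)(\opt_{\Fest}-\pol')$, and the factor $(t_1-t_2)^2$ cancels in both the $\bmu\bmu^{\top}$-norm and the $\mathrm{Diag}(1/\allocation_a)$-norm. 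Hence the ratio term for $\Fest$ is \emph{identical} to the one for $\cF$, so $\mathcal{D}(\allocation,\bmu,\cF)-\mathcal{D}(\allocation,\bmu,\Fest)=\min_{\lag\in\cL}\lag^{\top}\Tilde{\bA}\allocation$. The upper bound then follows immediately from Carlsson et al.'s Corollary~1, and the lower bound from bounding $-\lag^{\top}\Tilde{\bA}\allocation\leq \mathcal{D}(\allocation,\bmu,\Fest)\,\SP_{\Tilde{\bA}}$ via Lemma~\ref{lemm:constraint part upper bound} (H\"older with the $\Vert\lag^{\star}\Vert_1$ bound of Theorem~\ref{thm:Existence of strong duality and bound on Lagrangian multiplier}), rather than through an envelope-theorem argument. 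Your Part~(ii) sketch is in the right spirit but is likewise a consequence of this same identity combined with Carlsson et al.'s Corollary~2.
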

\begin{remark}[Connection to Existing Lower Bounds.] \textbf{(a) Pure exploration under known constraints.} The upper and lower bounds on characteristic time coincides with the existing lower bound under known constraints, i.e. when $\epsilon = 0$, i.e. $\Fest=\cF$. \textbf{(b) BAI without constraints.} In BAI, we consider only deterministic policies (or pure strategies) of playing a single arm. Then, we get $d_{\pol_a} = \frac{\bmu^{\top} (\opt_{\cF}-\pol)_a}{\vert(\opt_{\cF}-\pol)_a\vert} = \bmu^{\star}-\bmu_a$, i.e. the sub-optimality gap for arm $a$. Here, $\bmu^{\star}$ is the mean of the best arm. In our setting, if there are no constraints involved then $\Tilde{A}=A$ and so $\epsilon=0$. Then the lower bound expression in Part ii. of Corollary \ref{cor:Bounds on characteristic time} is inversely proportional to the sum-squared sub-optimal gaps which resonates with the complexity measure in standard BAI with only simplex constraints (\cite{kaufmann2016complexity}). Though the lower bound stated in Corollary \ref{cor:Bounds on characteristic time} successfully encompasses the effect of unknown linear constraint by introducing novel constraint dependent factors that scales the lower bound.
\end{remark}

\section{{Algorithm Design}}\label{sec:algo}
Now, we propose two algorithms to conduct pure exploration with Lagrangian relaxation of lower bound, and derive upper bounds on their sample complexities.
\begin{assumption}[Distributional assumptions on rewards and constraints]\label{ass:distrubutional}
We require two distributional assumptions on rewards and constraints.
    (i) Reward distributions $\{ P_a\}_{a=1}^K$ are sub-Gaussian one parameter exponential family with mean vector $\bmu \in \cD$. 
    (ii) Each constraint follows a sub-Gaussian $\numarm$-parameter exponential family parameterised by $A^i$ for $i\in [\numconst]$.
\end{assumption}

These distributional assumptions are standard in bandits under constraints~\citep{carlsson2024pure,Degenne2019PureEW,pacchiano2020stochastic,pacchiano2024contextual}.

\noindent\textbf{Algorithm Design.} Any algorithm in pure exploration setting consists of three main components. The first one is a sequential hypothesis testing that decides whether we need to keep sampling or not, popularly known as the ``stopping criterion''. To come up with such a criterion we need to make sure that we have gathered sufficient information about all the parameters in estimation, specifically $\bmu$ and $A$ in our case. So it seems natural to enforce criterion on both mean and constraint concentration.   

\begin{lemma}\label{lemm:new_stopping_time}
	If the recommended policy is $(1-\delta)$-correct then it is $(1-\delta)$-feasible.
\end{lemma}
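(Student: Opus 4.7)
The plan is essentially a one-line set-inclusion argument, so my proposal will mostly describe how to package it cleanly.

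First I would unpack the two definitions. By the definition of $\rgood$, every policy $\pol\in\rgood$ satisfies $\pol\in\cF$, and by the definition of $\cF$ every such $\pol$ satisfies $\bA\pol\le \boldsymbol{0}$. Contrapositively, if $\bA\hat\pol\not\le\boldsymbol{0}$ (i.e.\ $\hat\pol$ is infeasible with respect to the true constraints $\bA$), then $\hat\pol\notin\cF\supseteq\rgood$, so $\hat\pol\notin\rgood$. In terms of events, this gives the inclusion $\{\bA\hat\pol\ge\boldsymbol{0},\ \bA\hat\pol\ne\boldsymbol{0}\}\subseteq\{\hat\pol\notin\rgood\}$, or more simply $\{\hat\pol\text{ infeasible}\}\subseteq\{\hat\pol\notin\rgood\}$.

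Next I would apply monotonicity of probability to this inclusion together with the $(1-\delta)$-correctness hypothesis:
\begin{equation*}
\Pr[\bA\hat\pol\ge\boldsymbol{0}]\ \le\ \Pr[\hat\pol\notin\rgood]\ \le\ \delta,
\end{equation*}
which is precisely the $(1-\delta)$-feasibility condition from the definition. No concentration argument or information about the estimation procedure for $\bA$ is needed for this lemma itself, since we are only using the structural nesting $\rgood\subseteq\cF$.

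The only real subtlety worth mentioning in the write-up is to be careful about what ``feasible'' means at the boundary (strict versus non-strict violation of $\bA\pol\le\boldsymbol{0}$), and to confirm this matches the convention chosen in the definition of $(1-\delta)$-feasible. There is no genuine obstacle; the content of the lemma is that the stopping rule and recommendation procedure do not need to include a separate feasibility test, because controlling the probability of recommending a non-$r$-good policy automatically controls the probability of recommending an infeasible one. I would conclude by noting that this is why the stopping rule developed in the subsequent section only needs to certify $r$-goodness, not feasibility separately.
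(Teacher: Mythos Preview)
Your proposal is correct and follows essentially the same one-line set-inclusion argument as the paper: since $\rgood\subseteq\cF$, infeasibility implies non-$r$-goodness, and monotonicity of probability finishes it. Your write-up is in fact slightly more faithful to the paper's stated definition of $(1-\delta)$-correctness (which uses $\Pr[\hat\pol\notin\rgood]\le\delta$) than the paper's own proof, which phrases correctness as $\Pr[\hat\pol=\opt_{\cF}]\ge 1-\delta$; but the content is identical.
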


Thus, while implementing Algorithm \ref{alg:lagts} and \ref{alg:lagex}, we just need to \textbf{check the first condition} to stop sampling. 

\textbf{Component 1: Stopping Rule.}
During exploration, once we gather enough statistical information about the parameters in the system, the test statistic crosses the stopping threshold with the chosen confidence $\delta$, and we stop to recommend the optimal policy.  

\begin{tcolorbox}[left=2pt,right=2pt,top=1pt,bottom=1pt]
    \begin{theorem}\label{Chernoff's Stopping rule with unknown constraints.}
The Chernoff \textbf{stopping rule to ensure $(1-\delta)$-correctness and $(1-\delta)$-feasibility} is
\begin{align*}
&\max_{\pol\in\Pi_{\Fest_t}^r}\inf_{\blambda \in \Lambda_{\Fest_t}(\Hat{\bmu}_t,\pol)} \sum\nolimits_{a=1}^K N_{a,t} d(\Hat{\bmu}_{a,t},\blambda_a) > \beta( t,\delta)\,,
\end{align*}
where $\beta(t,\delta) \defn  3S_0 \log(1+\log N_{a,t}) + S_0\mathcal{T}\left( \frac{(\numarm\wedge\numconst) + \log\frac{1}{\delta}}{S_0}\right)$ and $0\leq S_0 \leq \numarm$. 
\end{theorem}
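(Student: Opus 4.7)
The plan is to follow the GLRT-based deviation framework of Garivier and Kaufmann~\citep{pmlr-v49-garivier16a,kaufmann2016complexity}, extended to jointly handle reward and constraint estimation. By Lemma~\ref{lemm:new_stopping_time}, $(1-\delta)$-correctness already implies $(1-\delta)$-feasibility, so it suffices to upper bound $\Pr[\stopping <\infty,\ \rpolest \notin \rgood]$ by $\delta$. The key observation is: on the event that the stopping rule fires at time $t$ with recommendation $\rpolest_t \notin \rgood$, the true mean vector $\bmu$ must itself lie in the alternative set $\Lambda_{\Fest_t}(\hat{\bmu}_t,\rpolest_t)$, since $\rpolest_t$ is $r$-good optimal for $\hat{\bmu}_t$ over $\Fest_t$ but not $r$-good optimal under the truth. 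Substituting $\blambda = \bmu$ into the infimum that defines the stopping statistic therefore yields
\begin{align*}
	\sum_{a=1}^K N_{a,t}\,d(\hat{\mu}_{a,t},\mu_a) \;\geq\; \inf_{\blambda\in\Lambda_{\Fest_t}(\hat{\bmu}_t,\rpolest_t)} \sum_{a=1}^K N_{a,t}\,d(\hat{\mu}_{a,t},\lambda_a) \;>\; \beta(t,\delta)\,.
\end{align*}
Thus the stopping-and-error event is contained in the union of the reward-concentration failure event $\mathcal{E}_{\mathrm{rew}} \defn \{\exists t:\ \sum_a N_{a,t}\, d(\hat{\mu}_{a,t},\mu_a) > \beta(t,\delta)\}$ and the constraint-concentration failure event $\mathcal{E}_{\mathrm{con}} \defn \{\exists t:\ \bA \notin \cC_t\}$, the latter of which is needed to certify $\cF\subseteq \Fest_t$ via Lemma~\ref{lem:estimate_superset}.

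To control $\mathcal{E}_{\mathrm{rew}}$, I would invoke a time-uniform mixture-martingale deviation inequality for the one-parameter sub-Gaussian exponential family (Assumption~\ref{ass:distrubutional}(i)). The mixture threshold $3S_0\log(1+\log N_{a,t}) + S_0\,\mathcal{T}(\cdot)$ is exactly the form that emerges from peeling/stitching constructions of Kaufmann-Koolen type: the double-log prefactor absorbs the union bound over time, while $\mathcal{T}$ is the inverse of the mixture tail so that plugging $x = ((\numarm\wedge\numconst)+\log(1/\delta))/S_0$ gives the desired $\delta/2$ probability guarantee. To control $\mathcal{E}_{\mathrm{con}}$, I would apply a self-normalised vector concentration for each row $\hat{\bA}^i_t$ under the Gram matrix $\Sigma_t$, leveraging the sub-Gaussian exponential family structure on constraints (Assumption~\ref{ass:distrubutional}(ii)); the design of the radius $f(t,\delta)$ in Equation~\eqref{def:conf_ellipse} ensures this event also holds with probability at most $\delta/2$.

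The effective dimension factor $(\numarm\wedge\numconst)$ in the threshold $\beta(t,\delta)$ comes from the joint union bound: the reward side contributes a $K$-dimensional mean-vector cost, and the constraint side contributes a $\numconst$-dimensional row-index cost, and the smaller of the two dominates once the stitched mixture is optimised. Combining the two bounds then yields $\Pr[\stopping<\infty,\ \rpolest\notin \rgood] \leq \Pr[\mathcal{E}_{\mathrm{rew}}] + \Pr[\mathcal{E}_{\mathrm{con}}] \leq \delta$, which gives the claim.

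The main obstacle will be to make the joint concentration coherent: we need a single time-uniform threshold $\beta(t,\delta)$ whose iterated-logarithm prefactor $3S_0$ and inverse-tail function $\mathcal{T}$ simultaneously certify both the GLRT for the means and the ellipsoidal cover of the constraints. This requires either a carefully tuned two-sided union bound or a combined mixture martingale in the joint $(\numarm+\numconst)$-parameter exponential family, where the dimension collapses to $\numarm\wedge\numconst$ after taking whichever bound is tighter. Verifying the exact bookkeeping on the prefactors so that the stated form is achieved (rather than a loose additive version) is the delicate step I would tackle last; the rest of the argument is a direct transposition of the Garivier-Kaufmann template into the Lagrangian-relaxed alternative set $\altset$.
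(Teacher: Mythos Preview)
Your high-level template matches the paper: on the error event the true $\bmu$ lies in $\Lambda_{\Fest_t}(\hat{\bmu}_t,\rpolest_t)$, so the stopping statistic is lower-bounded by $\sum_a N_{a,t}\,d(\hat{\mu}_{a,t},\mu_a)$, which is then controlled by the Kaufmann--Koolen time-uniform deviation bound (Lemma~\ref{lemm:Kauffman martinagle stopping}).

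Where you go astray is in your account of the factor $(\numarm\wedge\numconst)$ and in the plan to build a ``combined mixture martingale in the joint $(\numarm+\numconst)$-parameter exponential family.'' The paper does \emph{not} fold constraint concentration into $\beta(t,\delta)$. The threshold comes purely from the reward-mean side: Kaufmann--Koolen is applied at confidence $\delta/(|\pol_t|-1)$, where $|\pol_t|-1$ counts the alternative extreme-point policies one must union-bound over, and $\log(|\pol_t|-1)$ is what the paper upper-bounds by $(\numarm\wedge\numconst)$ via the vertex count of the feasible polytope. The prefactor $S_0$ is defined as $\max_{\pol}|\mathrm{Supp}(\rpolest)\,\Delta\,\mathrm{Supp}(\pol)|$, the maximal symmetric-difference support size across alternatives; it restricts the index set inside the Kaufmann--Koolen sum, not an ambient parameter dimension. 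Constraint concentration is handled entirely separately (Lemma~\ref{lemm:bad event prob}), with a $1/t$-type tail that sums to a constant, and it never enters $\beta(t,\delta)$; in particular there is no $\delta/2$ split between $\mathcal{E}_{\mathrm{rew}}$ and $\mathcal{E}_{\mathrm{con}}$. Your proposed ``delicate last step'' of designing a single threshold that simultaneously certifies both concentrations is therefore solving a problem that does not arise in the actual proof, and your interpretation of where $(\numarm\wedge\numconst)$ comes from is incorrect.
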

\end{tcolorbox}

\setlength{\textfloatsep}{10pt}
\begin{algorithm}[t!]
\caption{LATS - \textbf{LA}grangian \textbf{T}rack and \textbf{S}top}\label{alg:lagts}
\begin{algorithmic}[1]
    \STATE \textbf{Input :} Tolerance $r>0$, Confidence level $\delta>0$
    \STATE \textbf{Initialization :} $\Hat{A}_0 = \textbf{0}_{{\numconst}\times K}$, $\Hat{\bmu}_0 = \textbf{0}_K$, $\Sigma_0 = v\mathbf{I}_{K}$, $l_0$ 
    \STATE Play each arm once to set $\bmu_1$ and $\Hat{A}_1$.
    \WHILE
    {$\beta(t-1,\delta)>\mathcal{D}(\allocation_{t-1}^{\star},\Hat{\bmu}_{t-1},\Fest_{t-1},\lag_{t-1}^{\star})$} 
    \STATE $\pol_t = \argmax_{\pol \in \Pi_{\Fest_t}^r}\Hat{\bmu}_{t-1}^{\top}\pol$
    \STATE {\color{blue}\textbf{Optimal allocation:}} $\allocation_t^{\star} \in \argmax_{\allocation\in\Fest_t}\mathcal{D}(\allocation_{t-1},\Hat{\bmu}_{t-1},\pol_{t},\Fest_{t-1},\lag_{t-1}^{\star})$ 
    \STATE \textbf{\color{blue}{Optimize Lagrangian Multiplier :}} $\lag_t^{\star} \in \argmin_{\lag\in\cL_t}\mathcal{D}(\allocation_{t}^{\star},\Hat{\bmu}_{t-1},\pol_{t-1},\Fest_{t-1},\lag)$ 
    \STATE \textbf{C-Tracking:} Play $a_t \in \argmin_{a\in[1,K]} N_{a,t-1} - \sum_{s=1}^t \allocation_{a,s}^{\star}$
    \STATE \textbf{Feedback :} Observe reward $r_t$ and cost $\cost_t$, and update $\Sigma_{t}$, $\Hat{\bmu}_{t}$ and $\Hat{A}_{t}$
    \ENDWHILE
    \STATE \textbf{Recommended policy:} $\hat{\bpi} = \argmax\limits_{\pol\in\Pi_{\Fest_t}^r} \Hat{\bmu}_t^{\top} \pol$
\end{algorithmic}
\end{algorithm}

\textbf{Component 2: Recommendation rule.} Once the stopping rule is fired, the agent recommends a policy based on the current estimate of $\Hat{\bmu}_t$ according the rule $\hat{\bpi} = \argmax_{\pol\in\Pi_{\Fest_{\stopping}}^r}\Hat{\bmu}_t^{\top} \pol.$
 
\textbf{Component 3: Sampling Strategy.} We present two novel sampling algorithms: LATS and LAGEX. 

\paragraph{LATS.} The algorithm LATS (Algorithm~\ref{alg:lagts}) uses the Track and Stop strategy adapted to the unknown constraint setting. 
The algorithm warms up the parameter estimates by playing each arm once. Until the stopping rule is fired, it computes the optimal allocation (Line 4) under the estimated feasible space by solving the Lagrangian relaxed optimization problem in Proposition \ref{prop:Projection Lemma for Lagrangian Formulation}. By plugging the optimal allocation we perform a bounded optimisation on the Lagrangian multiplier using Theorem \ref{thm:Existence of strong duality and bound on Lagrangian multiplier}. The algorithm further uses C-tracking \citep{pmlr-v49-garivier16a} to track actions taken per step. Finally, we observe the instantaneous reward and cost feedback to update the parameter estimates (Line 9).

\begin{tcolorbox}[left=2pt,right=2pt,top=1pt,bottom=1pt]
  \begin{theorem}\label{thm:LATS upper bound}
Let $\SP$ be the shadow price $\SP \defn \frac{\Gamma_{\mathrm{max}}}{\Gamma_{\mathrm{min}}} = \frac{\max_{i\in[1,d]} (-A\opt_{\cF})}{\min_{i\in[1,d]} (-A\opt_{\cF})}$ of the slack $\Gamma$. Under Assumption~\ref{ass:structural} and~\ref{ass:distrubutional}, the expected stopping time of LATS satisfies $$\lim_{\substack{\delta\rightarrow 0}} \frac{\mathbb{E}[\tau_{\delta}]}{\log(1/\delta)} \leq \alpha T_{\cF,r}(\bmu) (1+\SP)$$ for any $\alpha>1$.
\end{theorem}  
\end{tcolorbox}

\textbf{Implications.} Theorem \ref{thm:LATS upper bound} suggests that LATS (Algorithm \ref{alg:lagts}) is asymptotically optimal up to problem dependent constants. This upper bound captures the effect of unknown linear constraints in shadow price, novel in the constrained bandit literature. If any one of the constraints in the bandit environment becomes more and more sensitive, minimum slack decreases, consequently $\SP$ increases and the identification problem becomes harder to solve. Thus, $\SP$ behaves as a \textit{sensitivity or stability parameter} that explains the hardness of the pure exploration problem in hand through the structure of the true constraints.  
\begin{algorithm}[t!]
	\caption{LAGEX- \textbf{LA}grangian \textbf{G}amified \textbf{EX}plorer}\label{alg:lagex}
	\begin{algorithmic}[1]
		\STATE \textbf{Input :} Tolerance $r>0$, Confidence level $\delta>0$
		\STATE Play each arm once to set $\bmu_1$ and $\Hat{A}_1$.
		\WHILE{$\beta(t-1,\delta)>\mathcal{D}(\allocation_{t-1}^{\star},\Hat{\bmu}_{t-1},\Fest_{t-1},\lag_{t-1}^{\star})$} 
		\STATE $\pol_t = \argmax_{\pol \in \Pi_{\Fest_t}^r}\Hat{\bmu}_{t-1}^{\top}\pol$
		\STATE \textbf{Optimal allocation} $\allocation_t^{\star}$ {\color{blue}$\rightsquigarrow$ Using AdaGrad via Theorem \ref{Characteristic time for Gaussian Distribution}}
		\STATE \textbf{\color{blue}{Optimize Lagrangian Multiplier:}} \par $\lag_t^{\star} \in \argmin_{\lag\in\cL_t}\mathcal{D}(\allocation_{t-1},\Hat{\bmu}_{t-1},\pol_{t},\Fest_{t-1},\lag)$ 
		\STATE \textbf{C-Tracking:} Play $a_t \in \argmin_{a\in[1,K]} N_{a,t-1} - \sum_{s=1}^t \allocation_{a,s}^{\star}$
		\STATE \textbf{Feedback :} Observe reward $r_t$ and cost $\cost_{t}$, and update $\Sigma_{t}$, $\Hat{\bmu}_{t}$ and $\Hat{A}_{t}$
		\STATE \textbf{Compute confusing instance :} $\blambda_t$ {\color{blue}$\rightsquigarrow$ Via Proposition \ref{prop:Projection Lemma for Lagrangian Formulation} plugging in $\allocation_t^{\star}$, $\lag_t^{\star}$} 
		\STATE \textbf{Confidence intervals:} for all $a\in[K]$ \par$[\alpha_{t,a},\beta_{t,a}] : \{ \zeta : N_{a,t}d(\Hat{\bmu}_{t,a},\zeta) \leq g(t)\}$\par
		$U_t^a = \max \big\{ \frac{g(t)}{N_{a,t}}, d(\alpha_{t,a},\blambda_{t,a}), d(\beta_{t,a},\blambda_{t,a})\big\}$
		\STATE \textbf{Update loss for regret minimizer:} Update with  $L_t
		= \langle \allocation_{t}^{\star} , U_{t}\rangle - {\lag_t^{\star}}^{\top}\Tilde{A}_t\allocation_t^{\star}$
		
		\ENDWHILE
		\STATE \textbf{Recommended policy:} $\hat{\bpi} = \argmax\limits_{\pol\in\Pi_{\Fest_t}^r} \Hat{\bmu}_t^{\top} \pol$
	\end{algorithmic}
\end{algorithm}

\vspace{1mm}\textbf{LAGEX.}  Algorithms based on track and stop mechanism tend to fail in case of larger problems where efficient optimization becomes a challenge due to the use of a max-min oracle per step. To improve on this, we leverage the two-player zero sum game approach introduced in \cite{degenne2019nonasymptotic}. Algorithm \ref{alg:lagex} also starts by playing each arm once to warm up parameter estimates. Then it uses a \textbf{allocation player} (We have used AdaGrad) to optimize the allocation $\allocation_t$ in Line 5 against the most confusing instance w.r.t current estimate of $\bmu$ optimised by a \textbf{instance player} which minimizes $\sumk \allocation_{a,t} d(\Hat{\bmu}_{a,t},\blambda_a) - {\lag_t^{\star}}^{\top} \tildeA_t \allocation_t$ with respect to $\blambda \in \Lambda_{\Fest_t}(\bmu,\pol)$. Since our search space in closed and convex, the allocation player enjoys sub-linear regret of $\mathcal{O}(\sqrt{t\log t})$, whereas the instance player computes the best confusing instance using Proposition \ref{prop:Projection Lemma for Lagrangian Formulation}. Then in Line 11, Adagrad loss function is updated with a loss by introducing optimism as $U_t$ defined in Line 10. Rest of the mechanism goes as usual  

\begin{tcolorbox}[left=2pt,right=2pt,top=1pt,bottom=1pt]
  \begin{theorem}[Asymptotic Optimality of LAGEX]\label{thm:LAGEX upper bound}
    Under Assumption~\ref{ass:structural} and~\ref{ass:distrubutional}, the expected sample complexity of LAGEX satisfies $$\lim_{\substack{\delta\rightarrow 0}} \frac{\mathbb{E}[\tau_{\delta}]}{\log(1/\delta)} \leq T_{\cF,r}(\bmu).$$
\end{theorem}  
\end{tcolorbox}

Detailed proofs of this section are in Appendix~\ref{app:algo_proofs}.

\section{Experimental Analysis}\label{sec:experiments}
Now, we empirically test performance of proposed algorithms and the baselines. We refer to Appendix~\ref{sec:Exp details} for additional experiments and other details necessary for reproducibility. Code is available at this \href{https://github.com/udvasdas/Pure-exploration-with-unknown-linear-constraints/}{Link}.\footnote{\textbf{Baselines:} ``CTnS-WLag" and ``CGE-WLag" is the CTnS and CGE algorithm of~\cite{carlsson2024pure} under unknown constraints without Lagrangian relaxation. ``PTnS" is Projected Track and Stop algorithm.  }

\textbf{Synthetic Data: Setup 1} 
We evaluate with two environments having means $[1.5, 1.0, \mu_3, 0.4, 0.3, 0.2, 0.1]$. 


\textbf{Observation 1: Universality.} We vary $\mu_3$ from $0.5$ to $2.5$. For each environment, we plot the corresponding unconstrained BAI lower bounds (in {\color{red}red}) and lower bounds under constraints (in {\color{blue}blue}) in Figure~\ref{fig:Hardness}. We observe that the constraint problem gets easier with increasing $\mu_3$. In contrast, the BAI problem changes non-monotonically. BAI problem gets harder when $\mu_3$ is around $1.5$ as the suboptimality gap gets very small. But the constraint problem stays easier than BAI. In Figure~\ref{fig:Hardness}, we also plot the median sample complexity of LAGEX across these environments over 500 runs. LAGEX grows parallel to the lower bound under constraints and can track it across environments.

\textbf{Observation 2: Efficiency.} 
We run all algorithms in two environment: (i) \textbf{hard} with $\mu_3 = 0.5$ and (ii) \textbf{easy} with $\mu_3 = 1.3$. We call the first environment hard as it is harder than solving BAI and similarly, the second environment easy. In Figure~\ref{fig:Stopping_times_hard} and~\ref{fig:stopping_times_easy} we observe that (i) among the algorithms with unknown constraints LAGEX incur the least sample complexity, and (ii) we pay a minimal cost than the known constraint Lagrangian algorithms in \textbf{hard env} whereas the price of estimating constraints is prominent in \textbf{easy env}.
\begin{figure*}[t!]
	\centering\vspace*{-1em}
	\begin{minipage}{0.48\textwidth}
		\centering
		\includegraphics[width=0.7\textwidth]{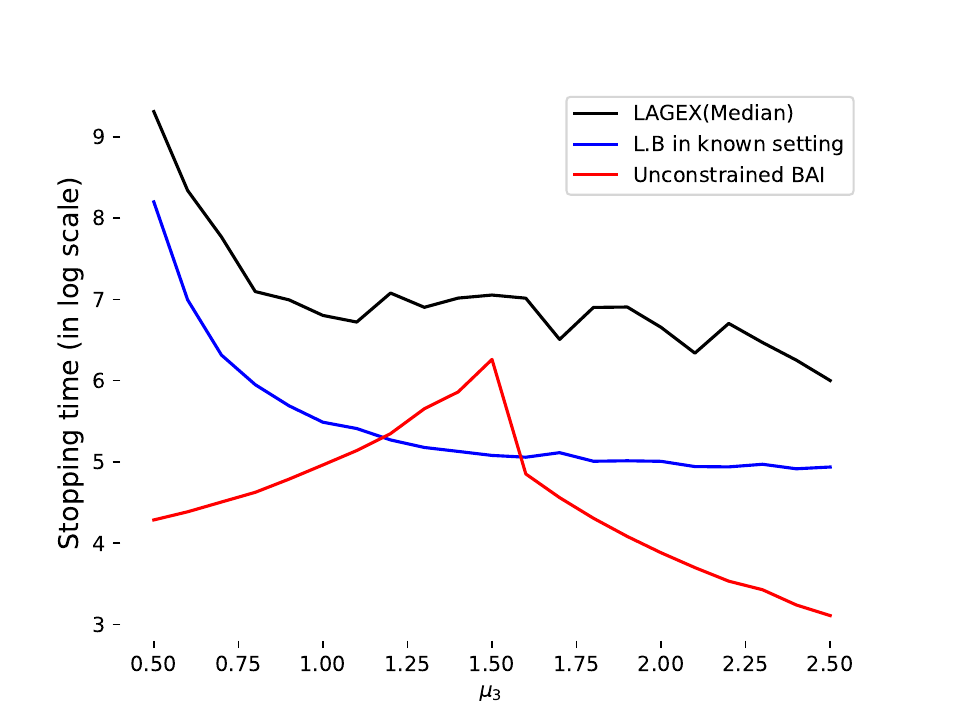}\vspace{-.5em}
		\caption{Lower bounds with and without constraints, and LAGEX for $\mu_3 \in[0.5,2.5]$ in Setup 1.}\label{fig:Hardness}
	\end{minipage}\hfill
	\begin{minipage}{0.48\textwidth}
		\centering
		\includegraphics[width=0.7\textwidth]{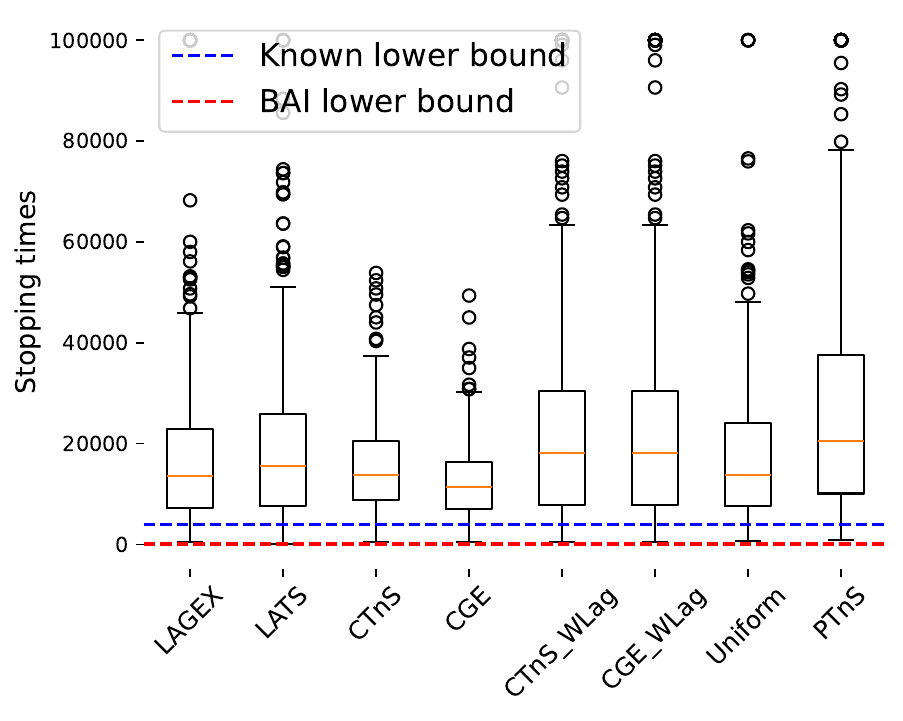}\vspace{-.5em}
		\caption{Sample complexity (median$\pm$std.) of algorithms  for \textbf{hard env.} in Setup 1.}\label{fig:Stopping_times_hard}
	\end{minipage}
\end{figure*}

\begin{figure*}[t!]
    \begin{minipage}{0.48\textwidth}
		\centering
		\includegraphics[width=0.7\textwidth]{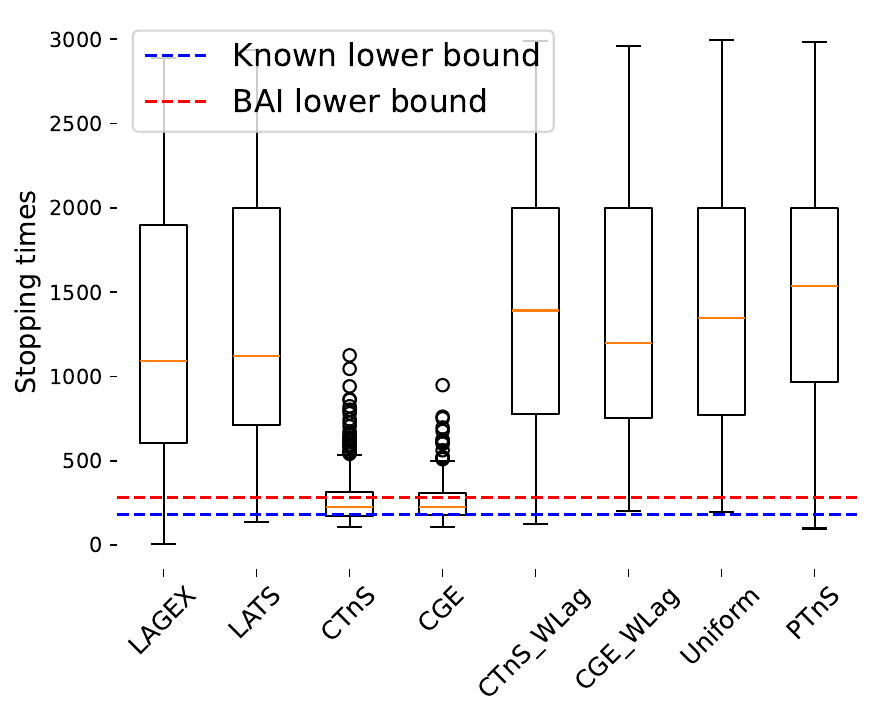}\vspace{-.5em}
		\caption{Sample complexity (median$\pm$std.) of algorithms for \textbf{easy env.}  in Setup 1.}\label{fig:stopping_times_easy}
    \end{minipage}\hfill
    \begin{minipage}{0.48\textwidth}
		\centering
		\includegraphics[width=0.7\textwidth]{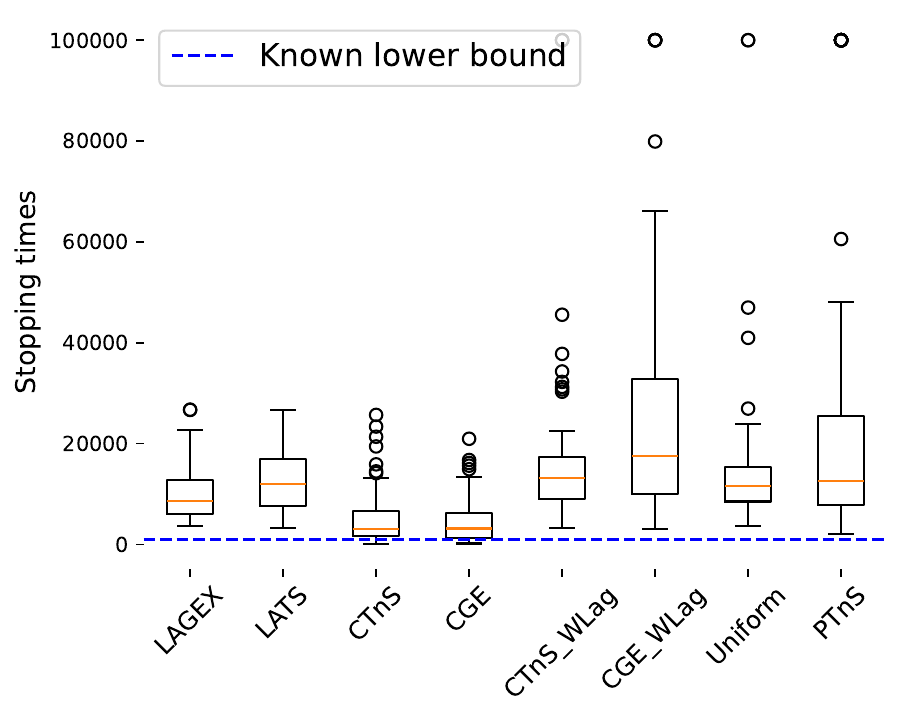}\vspace{-.5em}
		\caption{Sample complexity (median$\pm$std.) of algorithms for \textbf{IMDB-50K}.}\label{fig:stopping_times_imdb}
    \end{minipage}
\end{figure*}

\textbf{Real Data: IMDB-50K Dataset}
We evaluate LATS, LAGEX, and the baselines on IMDB 50K dataset~\citep{maas-etal-2011-learning}. For ease of comparison, we use the same bandit environment as in~\citep{carlsson2024pure}. We use 12 movies. We search for the optimal policy which allocates weight at most $0.3$ to action movies and at least $0.3$ to family and drama movies. 

\textbf{Observation: LAGEX has better sample complexity.} From Figure \ref{fig:stopping_times_imdb}, we observe that LAGEX performs better than other algorithms in the unknown constraints setting. LATS also performs well on the IMDB environment but notably we cannot distinguish its performance from that of the Uniform explorer.

\vspace*{-.5em}\section{{Discussion \& Future Work}}\label{Conclusion and future scope and limitations}
We study pure exploration under unknown linear constraints. We encode the coupled effect of estimating both mean vector and constraints via a Lagrangian relaxation of the lower bound for known constraints. We further design an optimistic estimate of the feasible set to ensure identification of the $r$-optimal feasible policy- leading to two algorithms, LATS and LAGEX. We prove their sample complexity upper bounds and conduct numerical experiments to observe that LAGEX is the most efficient among baselines. 

From the lower bound perspective, one might be interested to derive the lower bounds when both constraints and rewards are considered as feedback. 
Algorithmically, it would be intriguing to extend our Lagrangian technique to non-linear constraints.

\vspace*{-1em}\section*{Acknowledgments}
We acknowledge the ANR JCJC project REPUBLIC (ANR-22-CE23-0003-01), the PEPR project FOUNDRY (ANR23-PEIA-0003), and the Inria-ISI Kolkata associate team SeRAI for supporting the project. We also thank Achraf Azize for interesting discussions.

\bibliographystyle{apalike}
\bibliography{ref}

\clearpage
\appendix
\onecolumn

\section{Notations}\label{Notations}

\begin{table}[h!]
\centering
\resizebox{0.9\textwidth}{!}{
\begin{tabular}{c|c}
\toprule \textbf{Notation} & \textbf{Definition} \\
\midrule$\simp$ & K-simplex \\
 $\numarm$ & Number of Arms \\
 $A$ & True constraint set \\
 $d$ & Number of constraints\\
 $\mathcal{F}$ & True feasible set w.r.t $A$, $\mathcal{F} = \{ A\in\mathbb{R}^{d\times K}:A\pol \leq 0  \}$ \\
 $\Tilde{A}_t$ & Optimistic estimate of constraint set at time $t$, $\Tilde{A}_t = \Hat{A} - f(t,\delta)||\bomega_t||_{\Sigma_t^{-1}}$\\
 $\Fest_t$ & Estimated feasible set w.r.t pessimistic estimate $\Tilde{A}$ at time $t$, $\mathcal{F} = \{ \Tilde{A}_t\in\mathbb{R}^{d\times K}:\Tilde{A}_t\pol \leq 0  \}$ \\
 $\mathcal{A}$ & The action set of $K$ possible choices\\
 $\bomega_t$ & Allocation chosen at time $t$ \\
 $a_t$ & Action at time $t$ among $K$ possible actions \\
 $N$ & Number of Constraints \\
 $\Gamma$ & Slack of the optimisation problem, i.e., $\Gamma \defn \min_{i \in [d]}(-A\opt_{\cF})$\\
 $\sigma^2$ & Variance of the reward distribution (Gaussian) of arms \\
 $r_t , \cost_t$ & Reward and cost observed at time $t$ \\
 $\delta$ & Chosen confidence level \\
 $\lag_t$ & The Lagrangian multiplier at time $t$ \\
 $\Sigma_{t}$ & The covariance matrix (Gram matrix) at round t \\
 $\Lambda_{\cF}(\bmu)$ & Set of alternative (confusing) instances for bandit instance $\bmu$ \\
 $\altset$ & Estimated set of alternative (confusing) instances for bandit instance $\bmu$ \\
 $\nu(\opt)$ & Neighboring set of optimal r-good feasible policy $\opt$ \\
 $\nu(\rpolest)$ & Neighboring set of estimated optimal r-good feasible policy $\rpolest$ \\
 $\tau_{\delta}$ & Stopping time of $(1-\delta)$-correct algorithm\\
 $\opt_{\mathcal{F}}$ & True optimal policy w.r.t actual constraint set $A$ \\
 $\opt_{\Fest}$ & Optimal Policy for the estimated feasible set \\
 $\opt$ & Optimal r-good feasible policy with respect to $\opt_{\cF}$\\
 $\rpolest$ & Optimal r-good feasible policy with respect to $\opt_{\Fest}$\\
 $\SP$ & Shadow price $\defn \frac{\Gamma_{\mathrm{max}}}{\Gamma_{\mathrm{min}}}$\\
\bottomrule
\end{tabular}}
\caption{Summary of Notations}
\label{tab:notations}
\end{table}
\newpage

\section{Discussion on The Problem Setting and Connection to Literature}
\subsection{Extended Related Work}\label{sec:Ex_related_work}

\textbf{Historical pioneering works.} Literature on bandits has come a long way since the problem of optimal sequential sampling started with the works of \cite{08293709-a226-3cbe-aa4c-5f6306df76a0} and \cite{749e376d-b869-3631-9c77-11e228873a68} with the assumption of the populations being normally distributed. To talk about pure exploration setting, \cite{even2002pac}, \cite{bubeck2010pure} should be mentioned as the first ones who worked in this specific setting for stochastic bandits.

\vspace{2mm}\textbf{Existing work on adapting known constraints.} In Multi-armed bandit literature, people often introduce constraints as a notion of safety where they impose known constraints on the chosen arm or on the exploration process. \cite{wang2021best} considers pure strategy (only one co-ordinate as chosen action) and imposes a safety threshold on the linear cost feedback of the chosen arm. On the other hand, the setting considered in \cite{carlsson2024pure} is closer as it tracks an optimal policy w.r.t to a known set of known constraints. On the other hand, \cite{liu2021efficient} (Improvement over \cite{pacchiano2020stochastic} in MAB setting) generalized the known constraint regret minimization setting by assuming existence of a set of general constraints. Our work captures the hardness of not knowing the constraint set while tracking the lower bound and also in sample complexity upper bounds of Algorithm \ref{alg:lagts} and \ref{alg:lagex}. Our work also introduce \textit{shadow price} as a novel term in pure exploration literature which characterises the extra cost that arises due to tracking the unknown constraints.

\vspace{2mm}\textbf{Learning unknown constraints.}  \cite{lindner2022interactively} considers constrained linear best-arm identification arm
are vectors with known rewards and a single unknown
constraint (representing preferences) on the actions. Works on adapting to unknown constraints is discussed in the related work section of the main paper.

\vspace{2mm}\textbf{Transductive Linear Bandit.} In this setting formalised by~\cite{fiez2019sequential},  \cite{camilleri2022active} studies this setting with unknown linear constraints where we have to find the best safe arm in a finite set $\mathcal{Z}$ different than actual arm set $\mathcal{A}$. Our setting generalises the setting in the sense that the finite feasible set $\mathcal{Z}$ is not static, rather we track $\mathcal{Z}_t$ per time step $t\in\mathbb{N}$ and explore within that set to find the optimal allocation. At the end of exploration after hitting the stopping criterion at $\tau_{\delta}$ the agent recommends the optimal policy inside the set $\mathcal{Z}_{\tau_{\delta}}$.

\vspace{2mm}\textbf{Regret Minimization with Unknown constraints.} In bandit literature, constraints are often introduced in the setting to study regret minimization. \cite{moradipari2020safe} studies regret minimization using Linear Thompson Sampling (LTS) imposing known safety constraints on the chosen action. \cite{amani2019linear} 
studies contextual bandits under unknown and unobserved linear constraint, whereas \cite{kazerouni2017conservative} \cite{pacchiano2020stochastic} studies UCB based algorithms for regret minimization for linear bandits which assumes existence of a safe action space in case of unknown anytime linear constraint. In line with these, recent works \cite{hutchinson2024directional} \cite{pacchiano2024contextual} \cite{shang2023price} improved on regret guarantees and the first one relaxed the assumption of existence of a pessimistic safe space. \cite{chen2022doubly} introduces doubly-optimistic setting to study safe linear bandit(SLB). \cite{liu2021efficient} generalised the setting of \cite{pacchiano2020stochastic} not only relaxing the condition of having a safe action but also considered a set of general constraints and also captured the notion of both anytime and end-of-time constraints which we also see in \cite{carlsson2024pure}. \cite{liu2021efficient} also shows the trade-off between maximising reward or minimizing regret and constraint violation using Lyapunov drift. In this work we do not focus on regret guaranties but finding the optimal policy with sample complexity as least as possible while tracking and satisfying a set of unknown linear constraints.

\vspace{2mm}\textbf{BAI with Fairness Constraint.} Considering fairness constraint in our setting can be an interesting application to our setting. Recently \cite{wu2023best} studied Best Arm Identification with fairness Constraints on Subpopulations (BAICS), where they have discussed the trade-off in the standard BAI complexity if there are finite number of subpopulations are given and the best chosen arm must perform well (not too bad) on all those subpopulations. Another important line of work \cite{pmlr-v139-wang21b}  \cite{NEURIPS2019_9e82757e} explores regret analysis of BAI with positive merit-based exposure of fairness constraints where the chosen policy has to satisfy some fairness constraint across all its indices. Our setting comes as a direct application to these settings. Further discussion in Section \ref{subsec:motivation}. 

\vspace{2mm}\textbf{BAI with Knapsack constraint.} While the existing literature on bandit with knapsack \cite{10.1145/3164539} \cite{NIPS2016_f3144cef} \cite{10.1145/3557045}, \cite{10.1145/2600057.2602844} \cite{pmlr-v49-agrawal16} \cite{pmlr-v84-sankararaman18a} \cite{doi:10.1137/1.9781611973402.85} focused on mainly regret minimization, our setting aligns more as a special case of the Optimal Arm identification with Knapsack setting in \cite{li2023optimal,soton337280,pmlr-v139-li21s}. Though we aim to find the best policy rather than a specific arm in the constraint space. Our setting should be considered as a special case of these settings. Further discussion in Section \ref{subsec:motivation}. 

\vspace{2mm}\textbf{Algorithms on Pure exploration.}
Algorithm \ref{alg:lagts} is an extension of the Track and Stop(TnS) strategy from \citep{pmlr-v49-garivier16a}, while the motivation for Algorithm \ref{alg:lagex} comes from the Gamified Explorer strategy from \citep{degenne2019nonasymptotic} where the lower bound is treated as a zero-sum game between the allocation and the instance player. We refer to \citep{garivier2021nonasymptotic},\citep{pmlr-v49-garivier16a},\citep{kaufmann2016complexity},\citep{Degenne2019PureEW}, \cite{Boyd_Vandenberghe_2004},\cite{jourdan2021efficient} etc for important concentration inequalities, tracking lemmas.

\vspace{2mm}\textbf{Dose-finding and Thresholding Bandits.} Another special case of our setting is Dose-finding or Thresholding bandits in structured MAB literature \cite{NIPS2014_e56954b4} generalized the problem, then a line of work \cite{JMLR:v22:19-228} \cite{garivier2018thresholding} \cite{cheshire2021influence} aims to find the maximum safe dose for a specific drug in early stages of clinical trials. In some sense our setting generalizes this setting. If we have to administer more than drugs to a patient, our setting generalises to track the best possible proportion in which the drugs should be administered with maximum efficacy. Further discussions in Section \ref{subsec:motivation}.

\subsection{Motivations: Reductions to and Generalisations of The Existing BAI Settings}\label{subsec:motivation}

Before delving into the details of the lower bounds and algorithms, we first clarify our motivation by showing how different setups studied in literature and their variations are special case of our setting.

\vspace{2mm}\textbf{Thresholding Bandits.} Our setting encompasses the thresholding bandit problem~\citep{aziz2021multi}. Thresholding bandit is motivated from the safe dose finding problem in clinical trials, where one wants to identify the highest dose of a drug that is below a known safety level. This has also motivated the studies on safe arm identification~\citep{wang2021best}. Our setting generalises it further to detect the dose of the drug with highest efficacy while it is still below the safety level. We can formulate it as identifying $\opt = \argmax \bmu^{\top}\pol$, such that $\identity\pol \leq \identity \btheta$. 
Rather, generalising the classical thresholding bandits, our formulation can further model the safe doses for the optimal cocktail of drugs, and $\theta$ can have different values across drugs, i.e we can consider different thresholds for different drugs.

\vspace{2mm}\textbf{Optimal policy  under Knapsack.} Bandits under knapsack constraints have been studied both in best-arm identification~\citep{li2023optimal,soton337280,pmlr-v139-li21s} and regret minimisation~\citep{10.1145/3164539, NIPS2016_f3144cef,10.1145/3557045, 10.1145/2600057.2602844,pmlr-v49-agrawal16, pmlr-v84-sankararaman18a,doi:10.1137/1.9781611973402.85} literature. BAI under knapsacks is motivated by the fact that detecting an optimal arm might have additional resource constraints in addition to the number of required samples. This has led to study of BAI with knapsacks only under fixed-budget settings~\citep{li2023optimal}. But as in regret-minimisation literature~\citep{pmlr-v84-sankararaman18a,doi:10.1137/1.9781611973402.85}, one might want to recommend a policy that maximises utility while satisfying knapsack constraints. For example, we want to manage caches where the recommended memory allocation should satisfy a certain resource budget. Thus, the recommended policy has to satisfy $\opt_{\tau} = \argmax_{\pol \in C_{A}} \Hat{\bmu}_{\stopping}^{\top} \pol$, where $C_{A} \defn \{ A\pol_{\stopping} \leq c \}$. Naturally, this is a special case of our problem setting.

\vspace{2mm}\textbf{Feasible arm selection.}
We look at the pure exploration problem of feasible arm selection studied by \cite{katz2018feasible}. Here, we think of a problem of workers having a multi-dimension vector representation where each index denotes the accuracy of that worker being able to identify a specific class label in a classification task in hand. The problem turns to be a feasible arm selection from a simple BAI problem when we impose a feasibility constraint that for example, the chosen worker should show more than 90\% accuracy across all labels. We can generalise this setting in the sense that we are now not looking for a specific worker, rather we want to make a team of workers that has the highest utility. The recommended policy at time $t\in \mathbb{N}$, $\max_{\pol\in\Delta_{K-1}} \bmu_t^{\top} \pol$ such that $f^{\top} \pol \geq \tau$ where $\tau$ is the desired threshold level. The generalisation of the setting pitch in as thresholds of $\tau$ can have different values corresponding to different workers.

\vspace{2mm}\textbf{BAI with fairness across sub-populations.} The Best Arm Identification with fairness Constraints on Sub-population (BAICS) studied in \cite{wu2023best} aims on selecting an arm that must be fair across all sub-populations rather than the whole population in standard BAI setting. Let, there are $\lag$ sub-populations and $\mu_a$ are the means corresponding to the a-th arm. Finding only the optimal arm $K_{\mathrm{BAI}} = \argmax_{k\in [\numarm]} \bmu_k$ may not be enough because it may not perform equally good for all the $l$ sub-populations. Then the arm should belong to a set $C:= \{k\in[\numarm]|\bmu_{k,m}\geq 0 , m\in[l]\}$ where the observation for arm k and population m comes from $\mathcal{N}(\mu_{k,m},1)$ It ensures that the chosen arm does not perform \textit{too bad} for any sub-population. Let us think of a problem where there are $l$ sub-groups of patients and we have $\numarm$ number of drugs to administer with reward means $\mu_{k}, k\in[\numarm]$. We are looking for a combination of drugs rather than a single drug to administer as $\opt = \argmax_{\pi\in\simp} \bmu^{\top}\pi$ such that $\indicator_{\bmu_m\geq0}^{\top} \pol =1 , \forall m\in[l]$. Thus, BAICS is a special case of ours.

\vspace{2mm}\textbf{Fairness of exposure in bandits.} \cite{pmlr-v139-wang21b} introduced positive merit based exposure of fairness constraints~\citep{NEURIPS2019_9e82757e} in stochastic bandits standing against the winner-takes-all allocation strategy that are historically studied. The chosen allocation in this setting should satisfy the fairness constraint $\frac{\opt_a}{f(\mu^{\star}_a)}=\frac{\opt_{a'}}{f(\mu^{\star}_{a'})}, \forall a'\in[\numarm]$ where $f(.)$ transform reward of an arm to a positive merit. Though \cite{pmlr-v139-wang21b} studied this setting in regret analysis, this setting in BAI setting is a direct application of our setting as we are looking for an optimal policy $\opt = \argmax \bmu^{\top}\pol$ such that $\opt$ satisfies $A_{f(\bmu)}^{\top} \pol = 0 $
where $A_{f(\bmu)}$ is of order $\frac{K(K-1)}{2} \times K$ and $A_{f(\bmu)}$ is expressed as,
\begin{align*}
    (A_{f(\bmu)})_{ij} = 
     \begin{cases}
       \frac{1}{f(\mu_a)} &\quad\text{if } aK- \frac{1}{2}(a-1)(a-2)\leq j\leq aK-\frac{1}{2}a(a-1) \text{ and 
        } i=a \, ,\\
       \frac{-1}{f(\mu_{j})} &\quad\text{if }  aK- \frac{1}{2}(a-1)(a-2)\leq j\leq aK-\frac{1}{2}a(a-1) \text{ and 
        } a < i \leq K\, ,\\
       0 &\quad\text{otherwise.}\\ 
     \end{cases}
\end{align*}
For example, when $K=3$, $A_{f(\bmu)} = [[\frac{1}{\mu_1}, -\frac{1}{\mu_2}, 0], [0, \frac{1}{\mu_2}, -\frac{1}{\mu_3}], [\frac{1}{\mu_1}, 0, -\frac{1}{\mu_3}]]$.

\newpage\section{Strong Duality and the Lagrangian Multiplier: Proof of Theorem \ref{thm:Existence of strong duality and bound on Lagrangian multiplier}}\label{sec:Strong duality proof}
\begin{reptheorem}{thm:Existence of strong duality and bound on Lagrangian multiplier}
    For a bounded sequence of $\{ l_t\}_{t\in\mathbb{N}}$, strong-duality holds for the optimisation problem stated in Equation~\eqref{eqn:lag_relax} i.e.
    \begin{align*}
		&\inf_{\lag\in\reals_{+}^{\numconst}} \min_{A' \in \cC} \sup_{\allocation\in\simp}\max_{\pol\in\rgoodest}\inf_{\blambda\in\altsetr} \allocation^{\top}\klvec - \lag^{\top} {A'}\allocation \notag \\
		=& \sup_{\allocation\in\simp} \min_{\lag\in\cL}\max_{\pol\in\rgoodest}\inf_{\blambda\in\altsetr} \allocation^{\top}\klvec - \lag^{\top} \Tilde{A}\allocation\,.
    \end{align*}
    Here, $\cL \defn \{\lag \in \reals^{\numconst}\mid 0\leq \Vert {\lag}\Vert_1 \leq \frac{1}{\gamma} \mathcal{D}(\allocation,\bmu,\cF)\}$,
    where $\gamma \defn \min_{i\in[1,{\numconst}]} \{ -\Tilde{A}^i \hat{\opt}\}$, i.e. the minimum slack for optimistic constraints w.r.t. the estimated optimal feasible policy.
\end{reptheorem}

\begin{proof}
This proof involves three steps. In the first step we prove convexity and other properties of the sets involved in the main optimisation problem \ref{eq:lag_lb}. In the next step, we show that Slater's sufficient conditions hold for $\bpi$ as a consequence of these properties. Once we prove the unique optimality of $\bpi$ we state bounds on the L1-norm of the Lagrangian multiplier. We conclude by establishing strong duality and proving the statement of the theorem.

\vspace{2mm}\textbf{Step 1: Properties of perturbed feasible set and alt-set.}
Let us first check the properties of $\Fest$, $\altset$ and $\neighr$. For that, let us remind the definitions of these sets. The estimated feasible set is defined as $\Fest \defn \left\{ \bpi \in \simp : \Tilde{A}\bpi \leq 0 \right\}$. The set of alternative (confusing) instances for the optimal policy $\Hat{\pol}^{\star} = \argmax_{\pol\in\rgoodest} \bmu^{\top}\pol$ is $\altset\defn \left\{ \blambda \in \mathbb{D}:r + \max_{\pol \in \rgoodest} \blambda^{\top}\bpi>\blambda^{\top}\Hat{\pol}^{\star}\right\}$. For $\pol'$ being a neighbour of $\Hat{\pol}^{\star}$ or in other words, an extreme point in $\Fest$, we decompose the alternative set as the union of half-spaces as,
\begin{align*}
    &  \altset = \bigcup_{\bpi^{'}\in\nu_{\Fest}(\Hat{\pol}^{\star})}\left\{ \blambda : \blambda^{\top}(\Hat{\pol}^{\star} - \bpi' )<r \right\} 
\end{align*}
We should note, $\pol'$ shares at least $(\numarm-1)$ active constraints with $\Hat{\pol}^{\star}$. It is clear that $\Hat{\mathcal{F}}$ is bounded and convex in $\bpi$.
Since, convex combination of any two extreme point $\bpi'_1, \bpi'_2$ in the neighborhood of the optimal policy $\nu_{\Fest}(\Hat{\pol}^{\star})$ also shares $\numarm-1$ active constraints with $\Hat{\pol}^{\star}$, so $\nu_{\Fest}(\Hat{\pol}^{\star})$ is convex in $\pol'$. 

Let, $\bpi'_1$ and $\bpi'_2$ are two policies in the neighborhood of $\Hat{\pol}^{\star}$, such that for any alternative instance $\blambda$, $\blambda^{\top}(\bpi'_1 - \bpi'_2) \geq 0$, which implies that the policy $\bpi'_1$ is closer to the optimal policy in the neighborhood that the policy $\bpi'_2$\\
Therefore, any convex combination of these neighbourhood policy, $\blambda^{\top} \left( \Hat{\pol}^{\star} - (a\bpi'_1+(1-a)\bpi'_2) \right) = \blambda^{\top}( \Hat{\pol}^{\star} - \bpi'_2) - a\blambda^{\top}(\bpi'_1-\bpi'_2) \leq c$. Therefore, the set $\altset$ is also bounded and convex in $\bpi$.\\
Also, since we are working with optimistic estimate of $A$, the set $\Fest$ will always be non-empty, because we will find at least one $\tildeA_0$ which is non-singular and it's inverse exists.

\vspace{2mm}\textbf{Step 2: Slater's condition.}
From step 1 of this proof we have the following properties
\begin{enumerate}
    \item $\Fest$ is non-empty, bounded and convex in $\bpi$. 
    \item The perturbed neighborhood $\neighr$ is convex for any $\bpi'\in \neighr$
    \item $\altset$ is also bounded and convex in $\bpi$.
\end{enumerate}
Leveraging these three results we claim that there exists a $\Hat{\pol}^{\star}$ that uniquely solves the optimisation problem in Equation \eqref{eq:lag_lb} and satisfy the constraints with strict inequality. Thus, we claim Slater's sufficient conditions hold for $\pol$.

\vspace{2mm}\textbf{Step 3: Bound on the Lagrangian multiplier.}
Here, we try to bound the L-1 norm of the Lagrangian multiplier. Since, $\|\lag\|_1$ cannot be less than 0, then we already have a lower bound.  

Now we refer to lemma \ref{lemm:Dantzig lemma} for the upper bound. An immediate implication of this result is that for any dual optimal solution $\lag^*$, we have
$\left\|\lag^*\right\|_1 \leq \frac{1}{\gamma}\left(f(\bar{x})-q^*\right)$. 
Since Slater's conditions hold in our case for $\bpi$, we can write that the  optimal solution of the Lagrangian dual,

\begin{align*}
    &0\leq ||{l^{\star}}||_1 \leq \frac{1}{\gamma}\mathcal{D}(\allocation^{\star},\Hat{\bmu},\cF)  \\
    &\text{where, } \gamma \defn \min_{i\in[1,{\numconst}]} \{ -\Tilde{A}^i \Hat{\pol}^{\star}\}
\end{align*}
Where, $\Hat{\pol}^{\star}$ is the $r$-optimal feasible policy. 

\vspace{2mm}\textbf{Step 4: Establishing strong duality.}
Therefore the domain of the Lagrangian multiplier is also bounded and convex. So again we say that $l_t^{\star}$ uniquely minimises Equation \ref{eq:lag_lb}. We define $\cL \defn \{\lag \in \reals^{\numconst}\mid 0\leq \Vert {\lag}\Vert_1 \leq \frac{1}{\gamma} T_{\Fest,r}^{-1}(\hat{\bmu})\}$, where $\gamma \defn \min_{i\in[1,{\numconst}]} \{ -\Tilde{A}^i \allocation^*\}$ Then according to \textbf{Heine-Borel's theorem} (Theorem \ref{Heine-Borel theorem}) we can say that these sets are compact as well. We can then conclude that Strong duality holds which means that it perfectly make sense of solving the Lagrangian dual formulation of the primal optimisation problem because there is no duality gap. We later on will consider this formulation as two player zero sum game. Due to strong duality we claim that the agent wile playing this game, Nash equilibrium will be eventually established. 

Now that everything is put into place we can conclude with the very statement of the theorem that due to strong duality the following holds 
    \begin{align*}
    	&\inf_{\lag\in\reals_{+}^{\numconst}} \min_{A' \in \cC} \sup_{\allocation\in\simp}\max_{\pol\in\rgoodest}\inf_{\blambda\in\altsetr} \allocation^{\top}\klvec - \lag^{\top} {A'}\allocation \notag \\
    	=& \sup_{\allocation\in\simp} \min_{\lag\in\cL}\max_{\pol\in\rgoodest}\inf_{\blambda\in\altsetr} \allocation^{\top}\klvec - \lag^{\top} \Tilde{A}\allocation\,.
    \end{align*}  
\end{proof}

\newpage
\section{Lagrangian Relaxation of Projection Lemma: Proof of Theorem \ref{thm:Asymptotic Convergence of the Projection Lemma}}\label{sec:prop of Fhat}
\begin{reptheorem}{thm:Asymptotic Convergence of the Projection Lemma}
    For a sequence $\{ \Fest_t \}_{t\in\mathbb{N}}$ and $\{\Hat{\blambda}_t \}_{t\in\mathbb{N}}$, we first show that
    (a) $\lim_{t\rightarrow\infty}\Fest_t \rightarrow \cF$,
    (b) $\blambda^{\star}$ is unique, and 
    (c) $\lim_{t\rightarrow\infty}\Hat{\blambda}_t \rightarrow \blambda^{\star}$. 
    Thus, for any $\allocation \in \cF$ and $\bmu$, 
    $$\lim_{t\rightarrow\infty}\mathcal{D}(\allocation,\bmu,\Fest_t) \rightarrow \mathcal{D}(\allocation,\bmu,\cF),
    $$
    where $\blambda^* \in \argmin_{\blambda \in \altsetreal}\allocation^{\top} \klvec$. 
\end{reptheorem}

\begin{proof} Here, we prove the three parts of the theorem consecutively.

\textbf{Statement (a): Convergence of the limit $\lim_{t\rightarrow \infty} \Fest$.}
To begin with the proof of the first statement of Theorem \ref{thm:Asymptotic Convergence of the Projection Lemma} we leverage the results stated in Theorem \ref{thm:Dantzig theorem on continuity}. Let $H(\Tilde{A}) \defn \left\{ \bpi\in \simp : \Tilde{A}\bpi  \leq 0 \right\}$
and the set function $\Tilde{A} \rightarrow H(\Tilde{A})\cap C$
where $C=\Fest$ is a non-empty compact (proven in Section \ref{sec:Strong duality proof} subset of $\simp$
Then the set $H(\Tilde{A})\cap C$ can be written as
\begin{align*}
    H(\Tilde{A})\cap C = \left\{ \bpi\in \Fest : \Tilde{A}\bpi  \leq 0 \right\}
\end{align*}

To apply Theorem \ref{thm:Dantzig theorem on continuity}, $\{\Tilde{A}^r,r\in\mathbb{N}\}$, must be a convergent sequence of affine function. It is evident that $\tilde{A}^r$ for any $r\in\mathbb{N}$ is an affine function since $A$ is linear in $A$ and the induced pessimism works as a translation. Then we can proceed to the next part of the proof of statement 1 where we prove that $\{\tilde{A}^r\}_{r\in\mathbb{N}}$ is a convergent sequence of functions. For ease of notation we will denote $\tilde{A}_t$ for the t-th element of the sequence $\{\tilde{A}^r\}_{r\in\mathbb{N}}$ for $t\in\mathbb{N}$.

The definition of the confidence radius for any constraint $i\in[\numconst]$ follows from the Definition \ref{def:conf_ellipse} as $f(\delta,t) \defn 1+\sqrt{\frac{1}{2}\log\frac{K}{\delta}+\frac{1}{4}\log\det\Sigma_t}$.
It is evident from the definition that $f(t,\delta)$ is a non-decreasing function w.r.t time and it grows with order of at least $\mathcal{O}({\sqrt{\log t}})$

We have from the definition of the confidence set, for all $i\in [\numconst]$
\begin{align*}
    & \mathbb{P}\left( \Hat{A}_t^i - f(t,\delta)||\omega_t||_{\Sigma_t^{-1}} \leq A^i \leq \Hat{A}_t^i + f(t,\delta)||\omega_t||_{\Sigma_t^{-1}} \right) \geq 1-\delta \\
    \implies& \mathbb{P}\left( -\frac{f(t,\delta)||\omega_t||_{\Sigma_t^{-1}}}{\sigma(\Hat{A}_t^i)} \leq \frac{\Hat{A}_t^i - A^i}{\sigma(\Hat{A}_t^i)} \leq \frac{f(t,\delta)||\omega_t||_{\Sigma_t^{-1}}}{\sigma(\Hat{A}_t^i)} \right) \geq 1-\delta\\
    \implies& \mathbb{P}\left( -\frac{f(t,\delta)||\omega_t||_{\Sigma_t^{-1}}}{\sigma(\Hat{A}_t^i)} \leq Z \leq \frac{f(t,\delta)||\omega_t||_{\Sigma_t^{-1}}}{\sigma(\Hat{A}_t^i)} \right) \geq 1-\delta\\
    \implies& 2\Phi\left( \frac{f(t,\delta)||\omega_t||_{\Sigma_t^{-1}}}{\sigma(\Hat{A}_t^i)}\right) \geq 2-\delta\\
    \implies& \frac{f(t,\delta)||\omega_t||_{\Sigma_t^{-1}}}{\sigma(\Hat{A}_t^i)} \geq \Phi^{-1} \left( 1-\frac{\delta}{2} \right)\\
    \implies& f(t,\delta)||\omega_t||_{\Sigma_t^{-1}} \geq \sigma(\Hat{A}_t^i) \Phi^{-1} \left( 1-\frac{\delta}{2} \right)
\end{align*}
where $Z\defn \frac{\Hat{A}_t^i - A^i}{\sigma(\Hat{A}_t^i)}$ and $\Phi(.)$ is the cumulative distribution function of $\mathcal{N}(0,1)$ distribution.
$\lim_{t\rightarrow \infty} f(t,\delta)\|\allocation_t\|_{\Sigma_t^{-1}} \rightarrow 0$ since $ \sigma(\Hat{A}_t^i) = \mathcal{O}(\sqrt{\frac{\log t}{t}})$. 
Leveraging CLT at this point we say
\begin{align*}
   \Hat{A}_t^i \xrightarrow{d} A^i, \forall i \in [\numconst] 
\end{align*}
Then by Slutsky's theorem~\citep{zbMATH02592216}, we conclude $\Hat{A}_t^i - f(t,\delta)||\omega_t||_{\Sigma_t^{-1}} \xrightarrow{d} A^i , \forall i \in [\numconst]$. 

It implies that $\{\tilde{A}_r\}_{r\in\mathbb{N}}$ is a convergent sequence of function for $A$. Now, we use Theorem \ref{thm:Dantzig theorem on continuity} and get the following properties of the feasible set.

\begin{enumerate}
    \item  $ H(\Tilde{A})\cap C\subset\lim_{r\rightarrow\infty} H(\Tilde{A}^{r})\cap C    $
    \item $\lim_{r\rightarrow\infty} H(\Tilde{A}^{r})\cap C$
    is a closed convex superset of $H(\Tilde{A})\cap C$.
    \item $H(\Tilde{A})\cap C$ has non-empty interior because of the feasibility condition and no component in $\tilde{A}$ is identically $\mathbf{0}$.
    \begin{align*}
        \lim_{r\rightarrow\infty} H(\Tilde{A}^{r})\cap C = H(\Tilde{A}) \cap C
    \end{align*}
    \item Even if the set $H(\Tilde{A})\cap C$ has empty interior or some component if $\Tilde{A}$ is identically zero, by the last statement of the Theorem \ref{thm:Dantzig theorem on continuity} we can say for any closed convex set Q of $H(\Tilde{A})\cap C$ we can design the function $\{\Tilde{A}^{r}\}$ in such a way that $\lim_{r\rightarrow\infty}H(\Tilde{A}^{r})\cap C$ includes $Q$.
\end{enumerate}

As the convergence of $\Tilde{A}_t$ is guaranteed now asymptotically, we can guaranty convergence of the following limit $\lim_{t\rightarrow\infty} \Hat{\mathcal{F}}_t \rightarrow \mathcal{F}$.

\textbf{Statement (b) : Proof of Uniqueness of $\blambda^{\star}$}
Here, we prove if there exists a confusing instance $\blambda^* \in \altset$ which uniquely minimises the the function $\mathcal{D}(.)$ defined as
\begin{align*}
    \mathcal{D}(\allocation,\bmu,\Fest) \defn \inf_{\lag\in\cL}\inf_{\blambda\in\altset}\allocation^{\top} \klvec - \lag^{\top}\Tilde{A}\allocation
\end{align*}
We observe that only the leading quantity on the R.H.S associated with the KL is dependent on $\blambda$. So, in this proof we will only show that $\blambda^{\star}$ minimizes the KL divergence uniquely and since the KL is linearly dependent on the expression, proving this will be enough to ensure uniqueness of $\blambda^{\star}$.\\

Now, from the properties of KL we know that $d(\bmu,\blambda)$ is convex on the pair $(\bmu,\blambda)$. But it is also strictly convex on $\blambda$ if $\mathrm{supp}(\blambda)\subseteq \mathrm{supp}(\bmu)$ which is true in our case, since $\bmu,\blambda\in\mathcal{D}\subseteq \mathbb{R}^k$.

Let us assume there are two local minima $\blambda_1$ and $\blambda_2$, with the condition,
\begin{align*}
    d(\bmu,\blambda_1) \leq d(\bmu,\blambda_2)
\end{align*}
Then, we can write from the property of strict convexity,
for some $\{h:0<h<1\}$,
\begin{align*}
    d(\bmu,h\blambda_1+(1-h)\blambda_2) < hd(\bmu,\blambda_1)+(1-h)d(\bmu,\blambda_2)
\end{align*}
Now, from the assumed condition on $\blambda_1$ and $\blambda_2$, we can write ---
\begin{align*}
     &d(\bmu,\blambda_1) \leq d(\bmu,\blambda_2)\\
     \implies& hd(\bmu,\blambda_1) \leq hd(\bmu,\blambda_2) \text{    , since } h>0\\
     \implies& hd(\bmu,\blambda_1)+(1-h)d(\bmu,\blambda_2) \leq hd(\bmu,\blambda_2)+(1-h)d(\bmu,\blambda_2)\\
     \implies& hd(\bmu,\blambda_1)+(1-h)d(\bmu,\blambda_2) \leq d(\bmu,\blambda_2)
\end{align*}

Putting this result in the strict convexity condition we get
\begin{align*}
    d(\bmu,h\blambda_1+(1-h)\blambda_2) < d(\bmu,\blambda_2)
\end{align*}
which is a contradiction.

Thus, we conclude that for a strictly convex function f(x) with supp(x) being convex as well, the set of minimisers is either empty or singleton. Then, we can say $\blambda^{\star}$ uniquely minimizes the KL, or say $\mathcal{D}(\allocation,\bmu,\Fest)$. Let us now once again remind the definition of perturbed alt-set $\altset \defn \{ \blambda\in\mathbb{D}: \max_{\pol\in\Fest} \blambda^{\top}(\Hat{\pol}^{\star}-\pol) < r\}$. Let us denote $\nu(\rpolest)$ as the neighborhood of $\rpolest$. Any $\bpi\in\Fest$ is called a neighbor of $\rpolest$, if it is an extreme point of $\Fest$ and shares (K-1) active constraints with $\rpolest$. Then, we can decompose the perturbed alt-set as $\altset = \bigcup_{\bpi\in\nu_{\Fest}(\rpolest)}\left\{ \blambda : \blambda^{\top}(\rpolest - \bpi )<0 \right\}$, which is a union of half-spaces for each neighbor. From this decomposition we can observe that $\rpolest$ is not the r-optimal policy for $\blambda$, i.e, $\{ \exists \bpi'\in\altset:\blambda^{\top}(\opt_{\Fest} - \bpi' )<0 \}$. Then, it follows similar argument in \cite{carlsson2024pure} to argue that the most confusing instance w.r.t $\bmu$ lies in the boundary of the normal cone, which lands us to Proposition \ref{prop:Projection Lemma for Lagrangian Formulation}.
    
For any $\allocation \in \Fest$ and $\bmu\in\cD$, the following projection lemma holds for the Lagrangian relaxation,
    \begin{align*}
        \mathcal{D}(\allocation,\bmu,\Fest) = & \min_{\lag\in\cL}\max_{\pol \in \rgoodest } \min_{\pol^{'}\in\nu_{\Fest}(\pol)} \min_{\blambda:\blambda^{\top}(\Hat{\pol}^{\star}-\pol^{'})=0} \allocation^{\top}\klvec - \lag^{\top} \Tilde{A}\allocation \\
        =& \min_{\lag\in\cL} \min_{\pol^{'}\in\nu_{\Fest}(\rpolest)} \min_{\blambda:\blambda^{\top}(\rpolest-\pol^{'})=0} \allocation^{\top}\klvec - \lag^{\top} \Tilde{A}\allocation
    \end{align*}

\textbf{Statement (c): Convergence of the sequence $\{\Hat{\blambda}_n\}_{n\in\mathbb{N}}$}.
In known constraint setting the agent has access to $\mathcal{F}$. That means there is the actual sequence $\{ \blambda_n\}_{n\in\mathbb{N}}$ for which $\blambda_n\to \blambda^*$ as $n\to\infty$ since $\mathcal{D}(\allocation,\bmu,\Fest)$ is convex and continuous on $\Lambda_{\Fest}(\bmu,\pol)$. But in this setting we try to estimate $\mathcal{F}$ as $\Fest_n$ at each time step $n\in\mathbb{N}$. So there exists the  $\{ \Hat{\blambda}_n \}_{n\in\mathbb{N}}$ such that $\Hat{\blambda}_n \in \Lambda_{\Fest_n}(\bmu,\pol)$ and we have to ensure it converges to the unique optimal $\blambda^*$ i.e $\blambda^{\star} \in \altsetreal \subseteq \lim_{n\rightarrow \infty} \Lambda_{\Fest_n}(\bmu,\pol)$ implies $\{ \Hat{\blambda}_n \} \to \blambda^*$ as $n\to\infty$

We use the fundamental theorem of limit to carry out this proof with the help of properties of the sets $\Fest$ and $\Lambda$. The properties we have already proven for these sets are
\begin{enumerate}
    \item $\Fest_n$ for any $n\in\mathbb{N}$ is a superset of $\mathcal{F}$ due to the optimistic choice of $A$. 
    \item $\Fest_n$ is a non-empty compact subset of $\simp$ and $\lim_{n\to\infty}\Fest_n = \mathcal{F}$.
    \item $\Lambda_{\Fest_n}(\bmu,\pol)$ is a closed convex set and it also is a superset of the real alt-set $\altset$.
\end{enumerate}

Leveraging these properties we claim that for any $\bmu \in \mathcal{D}$, $\lim_{n\to\infty}\Lambda_{\Fest_n}(\bmu,\pol) = \altset$. Since we have already proven uniqueness of $\blambda$ in statement 2, we say $\Hat{\blambda}_n$ uniquely minimises $\Lambda_{\Fest_n}(\bmu,\pol)$. Now from the $(\epsilon,\delta)$-definition of limits we say if $\Lambda_{\Fest_n}(\bmu,\pol)$ is an $\epsilon$-cover of $\altset$ for $\epsilon>0$, then $|\Hat{\blambda}_n - \blambda^*| \leq \delta$ for $\delta>0$ sufficiently small. It implies for a sequence of $\{\Hat{\lambda_n}\}_{n\in \mathbb{N}}$ we claim $\lim_{n\to\infty}\Hat{\blambda}_n = \blambda^*$ i.e the sequence convergence. Therefore we conclude by the statement itself

\begin{align*}
    \{\Hat{\blambda}_n\}_{n\in\mathbb{N}} \rightarrow \blambda^{\star}
\end{align*}
Hence, proved.
\end{proof}

\newpage
\section{Characterization of the Unique Optimal Policy: Proof of Theorem \ref{thm:Optimal Policy}}\label{sec:unique_pol}

\begin{reptheorem}{thm:Optimal Policy}
    For all $\bmu\in\cD$, $\allocation^{\star}(\bmu)$ satisfies the conditions \\
    1. Both the sets $\Fest$ and $\allocation^{\star}(\bmu)$ are closed and convex.\\
    2. For all $\bmu\in\cD$ and $\allocation\in\Fest$,  $\lim_{t\rightarrow\infty}\mathcal{D}(\allocation,\Hat{\bmu}_t,\Fest_t)$ is continuous.\\
    3. Reciprocal of the characteristic time $\lim_{t\rightarrow\infty} T^{-1}_{\Fest_t,r}(\bmu)$ is continuous for all $\bmu \in \cD$.\\
    4. For all  $\bmu \in \cD$, $\bmu \rightarrow \allocation^{\star}(\bmu)$ is upper hemi-continuous.\\
    Thus, the optimization problem $\max_{\pol\in\Hat{{\cF}}} \bmu^{\top}\pol$ has a unique solution.

\end{reptheorem}

\begin{proof}
The theorem has four statements as the sufficient condition for the existence of unique optimal policy. So naturally we will dictate the proof structure in four steps and prove the statements one by one.\\

\textbf{Statement 1: Convexity of feasible space and optimal set function.} Let us first analyse the properties of $\Fest$. For any two member of $\allocation_1 , \allocation_2 \in \Fest$ satisfying $\tilde{A}\allocation_1\leq 0$ and $\tilde{A}\allocation_2\leq 0$, their convex combination for any $\alpha\in[0,1]$,
\begin{align*}
   \tilde{A}\left(\alpha\allocation_1 + (1-\alpha)\allocation_2\right) = \alpha\tilde{A}\allocation_1 + (1-\alpha)\tilde{A}\allocation_2 \leq 0 
\end{align*}
Therefore we can say $\Fest$ is convex because it is closed under convex operation. We claim $\Fest$ is also closed since 

    (a) The complement of $\Fest$, $\Fest^c \defn \{\pol\in\simp:\tilde{A}\pol>0\}$ is an open set.
    (b) we have already proven the limit of $\Fest$ to be $\mathcal{F}$ which is always contained by $\Fest$.

The elements in the domain of optimal allocation set function must be included in $\Fest$. So compactness of $\allocation^* (\bmu)$ is a direct consequence of compactness of $\Fest$.

\textbf{Statement 2: Continuity of limit.} We have already proven in Section \ref{sec:prop of Fhat} that $\lim_{t\to\infty}\Fest_t \to \mathcal{F}$. Also by convexity of KL and CLT we claim $\Hat{\bmu}_t \to \bmu$ as $t\to\infty$ and since $\allocation$ is linear in $\mathcal{D}(\allocation_t,\bmu,\Fest_t)$ it will converge to $\allocation^* (\bmu)$ as $t\to \infty$, also due to convergence of $\Hat{\bmu}_t$. Then we can say that the limiting value is same as the value if we plug in the limits in $\mathcal{D}$ i.e $\lim_{t\to\infty}\mathcal{D}(\allocation_t,\Hat{\bmu}_t,\Fest_t) = \mathcal{D}(\allocation^*(\bmu),\bmu,\mathcal{F})$. So we ensure the continuity of $\lim_{t\rightarrow\infty}\mathcal{D}(\allocation_t,\Hat{\bmu}_t,\Fest)$.

\textbf{Statement 3: Continuity of limit of inverse sampling complexity.} This statement directly follows from the statement 2. Due to convexity of KL-divergence and convergence of $\Fest$, the limiting value exists and it is equal to the inverse of characteristic time with the limiting value.

\textbf{Statement 4: Upper hemi-continuity of optimal allocation function.} We refer to \citep{magureanu2014lipschitz} (see Lemma \ref{lemm:hemi-cont}) for this proof. We denote $Q(\tilde{A}') \defn \lim_{\Fest \to \mathcal{F}}\max_{\allocation\in\simp}\bigg\{ \sumk\allocation_a \klvec - \lag^{\top}\tilde{A}''\allocation \bigg| \tilde{A}''\allocation \leq 0, \allocation_a \geq 0 \forall i \in [\numarm]\bigg\} = \allocation(\bmu)$ where $\tilde{A}'' \in \reals^{\numarm\times\numarm}$ is the rank-1 update of $\tilde{A}'$ which is a sub-matrix of $\tilde{A}$ with $\numarm$ number of active constraints. We define limiting set as 
\begin{align*}
    Q^{\star}(\tilde{A}'') = \bigg\{ \allocation: \lim_{\Fest \to \mathcal{F}}\sumk \allocation_a\klvec = Q(\tilde{A}'') \bigg| \tilde{A}''\allocation \leq 0, \allocation_a \geq 0 \forall i \in [\numarm] \bigg\} = \allocation^* (\bmu)
\end{align*}

As a direct consequence of Lemma \ref{lemm:hemi-cont} we get the following results
\begin{enumerate}
    \item The function $\allocation^* (\bmu)$ is continuous in $(\reals^{\numarm\times\numarm})\times \reals^{\numarm}$
    \item $\allocation^* (\bmu)$ is upper-hemicontinuous on $(\reals^{\numarm\times\numarm})\times \reals^{\numarm}$
\end{enumerate}

Leveraging these four sufficient statements ensure that there exist unique solution for the optimization problem $\max_{\pol\in\Hat{{\cF}}} \bmu^{\top} \pol, \forall \bmu\in\mathcal{D}$ i.e the image set of the set-valued function $\allocation^*(.)$ is singleton.  
\end{proof}

\newpage
\section{Lagrangian Lower Bound for Gaussians: Proof of Theorem \ref{Characteristic time for Gaussian Distribution} }\label{sec:gaussian lb}

\ifdoublecol
\begin{reptheorem}{Characteristic time for Gaussian Distribution}
   Let $\{ P_a \}_{a\in[K]}$ be Gaussian distributions with equal variance $\sigma^2 > 0$  
   \begin{align*}
       T_{\Fest,r}^{-1}(\bmu)\notag  = \max_{\allocation\in\Fest}\min_{\lag\in\cL}\max_{\pol\in\rgoodest}\min_{\pol' \in \neighr}\left\{\frac{(r-\bmu^{\top}(\pol-\pol'))^2}{2\sigma^{2}\Vert\pol-\pol'\Vert_{\mathrm{Diag}(1/\allocation_a)}^2}  - \lag^{\top} \Tilde{A}\allocation \right\}\,.
   \end{align*}
   where $\mathrm{Diag}(1/\allocation_a)$ is a $K$-dimensional diagonal matrix with $a$-th diagonal entry $1/\allocation_a$.
\end{reptheorem}

\else
\begin{reptheorem}{Characteristic time for Gaussian Distribution}
	Let $\{ P_a \}_{a\in[K]}$ be Gaussian distributions with equal variance $\sigma^2 > 0$  
	\begin{align*}
		T_{\Fest,r}^{-1}(\bmu)\notag  = \max_{\allocation\in\Fest}\min_{\lag\in\cL}\max_{\pol\in\rgoodest}\min_{\pol' \in \neighr}\left\{\frac{(r-\bmu^{\top}(\pol-\pol'))^2}{2\sigma^{2}\Vert\pol-\pol'\Vert_{\mathrm{Diag}(1/\allocation_a)}^2}  - \lag^{\top} \Tilde{A}\allocation \right\}\,.
	\end{align*}
	where $\mathrm{Diag}(1/\allocation_a)$ is a $K$-dimensional diagonal matrix with $a$-th diagonal entry $1/\allocation_a$.
\end{reptheorem}
\fi

\ifdoublecol
\begin{proof}
We start the proof by the definition of $\mathcal{D}(\allocation,\bmu,\Fest)$ as per Equation \eqref{eqn:Projection Lemma for Lag} 

\begin{align}
    \mathcal{D}(\allocation,\bmu,\Fest) &= \min_{\lag\in\cL}\max_{\pol\in\rgoodest}\min_{\blambda\in\Lambda_{\Fest}(\bmu)} \left \{\sum_{a=1}^k \allocation_a d(\bmu_a,\blambda_a) - \lag^{\top}\Tilde{A}\allocation \right \} \notag\\ 
    &= \min_{\lag\in\cL} \max_{\pol\in\rgoodest}\min_{\pol' \in \neigh} \min_{\blambda:\blambda^{\top} (\pol-\pol')=r} \left \{\sum_{a=1}^k \allocation_a d(\bmu_a,\blambda_a) - \lag^{\top}\Tilde{A}\allocation \right \} {\color{red}\rightsquigarrow \text{ via Proposition } \ref{prop:Projection Lemma for Lagrangian Formulation}} 
\end{align}
The Lagrangian formulation of $\mathcal{D}(\allocation,\bmu,\Fest)$ is written as 
\begin{align*}
   \mathcal{L}(\allocation,\bmu,\Fest) = \min_{\lag\in\cL}\max_{\pol\in\rgoodest}\min_{\pol' \in \neighr}\left\{\sumk\allocation_a\kl - \lag^{\top} \Tilde{A}\allocation - \gamma \left(\sumk\blambda_a \va - r \right)\right\}
\end{align*}
where $\va \defn (\pol-\pol')_a$.

We assume both the instances $\bmu$ and $\blambda$ follow Gaussian distribution with same variance $\sigma^2$.\\
Then, we can rewrite the Lagrangian putting the value of the KL as ---
\begin{align}\label{eq.gaussian kl eq}
    \mathcal{L}(\allocation,\bmu,\Fest) = \min_{\gamma\in\mathbb{R}_{+}}\min_{\lag\in\cL}\max_{\pol\in\rgoodest} \min_{\pol' \in \neighr}\left\{\sumk\allocation_a \frac{(\bmu_a -\blambda_a)^2}{2\sigma^2} - \lag^{\top}\Tilde{A}\allocation - \gamma \left(\sumk\blambda_a \va -r \right)\right\}
\end{align}
Differentiating the Lagrangian w.r.t $\blambda_a$ and equating it to 0, we get
\begin{align*}
    &\nabla_{\blambda_a} \mathcal{L}(\blambda,\allocation,\bmu) = 0\\
    \text{or, }& - \frac{\allocation_a(\bmu_a-\blambda_a)}{\sigma^2} - \gamma \va = 0\\
    \text{or, }& \blambda_a = \bmu_a + \frac{\gamma \va \sigma^2}{\allocation_a}
\end{align*}
Then putting back the value of $\blambda_a$ in Equation \ref{eq.gaussian kl eq} we get 
\begin{align}\label{Lag}
    \mathcal{L}(\allocation,\bmu,\Fest) = \min_{\gamma\in\mathbb{R}_{+}}\min_{\lag\in\cL}\max_{\pol\in\rgoodest}\min_{\pol' \in \neighr}\left\{-\sumk \gamma^2 \frac{\va^2 \sigma^2}{2\allocation_a} - \lag^{\top}\Tilde{A}\allocation - \gamma\left(\sumk\bmu_a \va - r \right)\right\}
\end{align}
Again differentiating the Lagrangian w.r.t $\gamma$ and equating it to 0, we get 
\begin{align*}
    \nabla_{\gamma}\mathcal{L}(\allocation,\bmu,\Fest) = 0
    \implies & r -\sumk\bmu_a \va - \gamma\sumk \frac{\sigma^2}{\allocation_a}\va = 0\\
    \implies & \gamma = \frac{(r-\bmu^{\top}v)}{\sumk \frac{\sigma^2}{\allocation_a}\va^2} 
\end{align*}
Putting the value of $\gamma$ in Equation \ref{Lag}, we get ---
\begin{align*}
    \mathcal{L}(\allocation,\bmu,\Fest) &= \min_{\lag\in \cL}\max_{\pol\in\rgoodest}\min_{\pol' \in \neighr}\left\{\frac{(r-\bmu^{\top} v)^2}{2\sigma^2\sumk\frac{\va^2}{\allocation_a}} - \lag^{\top} \Tilde{A}\allocation \right\}\\
    &= \min_{\lag\in \cL}\max_{\pol\in\rgoodest}\min_{\pol' \in \neighr}\left\{ \frac{1}{2\sigma^2}\frac{(r - \bmu^{\top} (\pol-\pol'))^2}{\|\pol-\pol'\|_{\mathrm{Diag}(1/\allocation_a)}^2} - \lag^{\top} \Tilde{A}\allocation \right\}
\end{align*}

Therefore inverse characteristic time for Lagrangian relaxation with unknown constraints satisfies,
\begin{align*}
    T_{\Fest,r}^{-1}(\bmu) = \max_{\allocation\in\Fest}\min_{\lag\in \cL}\max_{\pol\in\rgoodest}\min_{\pol' \in \neighr}\left\{ \frac{1}{2\sigma^2}\frac{(r - \bmu^{\top} (\pol-\pol'))^2}{\|\pol-\pol'\|_{\mathrm{Diag}(1/\allocation_a)}^2} - \lag^{\top} \Tilde{A}\allocation \right\}
\end{align*}
\end{proof}

\else
\begin{proof}
	We start the proof by the definition of $\mathcal{D}(\allocation,\bmu,\Fest)$ as per Equation \eqref{eqn:Projection Lemma for Lag} 
	
	\begin{align}
		\mathcal{D}(\allocation,\bmu,\Fest) &= \min_{\lag\in\cL}\max_{\pol\in\rgoodest}\min_{\blambda\in\Lambda_{\Fest}(\bmu)} \left \{\sum_{a=1}^k \allocation_a d(\bmu_a,\blambda_a) - \lag^{\top}\Tilde{A}\allocation \right \} \notag\\ 
		&= \min_{\lag\in\cL} \max_{\pol\in\rgoodest}\min_{\pol' \in \neigh} \min_{\blambda:\blambda^{\top} (\pol-\pol')=r} \left \{\sum_{a=1}^k \allocation_a d(\bmu_a,\blambda_a) - \lag^{\top}\Tilde{A}\allocation \right \} {\color{red}\rightsquigarrow \text{ via Proposition } \ref{prop:Projection Lemma for Lagrangian Formulation}} 
	\end{align}
	The Lagrangian formulation of $\mathcal{D}(\allocation,\bmu,\Fest)$ is written as 
	\begin{align*}
		\mathcal{L}(\allocation,\bmu,\Fest) = \min_{\lag\in\cL}\max_{\pol\in\rgoodest}\min_{\pol' \in \neighr}\left\{\sumk\allocation_a\kl - \lag^{\top} \Tilde{A}\allocation - \gamma \left(\sumk\blambda_a \va - r \right)\right\}
	\end{align*}
	where $\va \defn (\pol-\pol')_a$.
	
	We assume both the instances $\bmu$ and $\blambda$ follow Gaussian distribution with same variance $\sigma^2$.\\
	Then, we can rewrite the Lagrangian putting the value of the KL as ---
	\begin{align}\label{eq.gaussian kl eq}
		\mathcal{L}(\allocation,\bmu,\Fest) = \min_{\gamma\in\mathbb{R}_{+}}\min_{\lag\in\cL}\max_{\pol\in\rgoodest} \min_{\pol' \in \neighr}\left\{\sumk\allocation_a \frac{(\bmu_a -\blambda_a)^2}{2\sigma^2} - \lag^{\top}\Tilde{A}\allocation - \gamma \left(\sumk\blambda_a \va -r \right)\right\}
	\end{align}
	Differentiating the Lagrangian w.r.t $\blambda_a$ and equating it to 0, we get
	\begin{align*}
		&\nabla_{\blambda_a} \mathcal{L}(\blambda,\allocation,\bmu) = 0\\
		\text{or, }& - \frac{\allocation_a(\bmu_a-\blambda_a)}{\sigma^2} - \gamma \va = 0\\
		\text{or, }& \blambda_a = \bmu_a + \frac{\gamma \va \sigma^2}{\allocation_a}
	\end{align*}
	Then putting back the value of $\blambda_a$ in Equation \ref{eq.gaussian kl eq} we get 
	\begin{align}\label{Lag}
		\mathcal{L}(\allocation,\bmu,\Fest) = \min_{\gamma\in\mathbb{R}_{+}}\min_{\lag\in\cL}\max_{\pol\in\rgoodest}\min_{\pol' \in \neighr}\left\{-\sumk \gamma^2 \frac{\va^2 \sigma^2}{2\allocation_a} - \lag^{\top}\Tilde{A}\allocation - \gamma\left(\sumk\bmu_a \va - r \right)\right\}
	\end{align}
	Again differentiating the Lagrangian w.r.t $\gamma$ and equating it to 0, we get 
	\begin{align*}
		\nabla_{\gamma}\mathcal{L}(\allocation,\bmu,\Fest) = 0
		\implies & r -\sumk\bmu_a \va - \gamma\sumk \frac{\sigma^2}{\allocation_a}\va = 0\\
		\implies & \gamma = \frac{(r-\bmu^{\top}v)}{\sumk \frac{\sigma^2}{\allocation_a}\va^2} 
	\end{align*}
	Putting the value of $\gamma$ in Equation \ref{Lag}, we get ---
	\begin{align*}
		\mathcal{L}(\allocation,\bmu,\Fest) &= \min_{\lag\in \cL}\max_{\pol\in\rgoodest}\min_{\pol' \in \neighr}\left\{\frac{(r-\bmu^{\top} v)^2}{2\sigma^2\sumk\frac{\va^2}{\allocation_a}} - \lag^{\top} \Tilde{A}\allocation \right\}\\
		&= \min_{\lag\in \cL}\max_{\pol\in\rgoodest}\min_{\pol' \in \neighr}\left\{ \frac{1}{2\sigma^2}\frac{(r - \bmu^{\top} (\pol-\pol'))^2}{\|\pol-\pol'\|_{\mathrm{Diag}(1/\allocation_a)}^2} - \lag^{\top} \Tilde{A}\allocation \right\}
	\end{align*}
	
	Therefore inverse characteristic time for Lagrangian relaxation with unknown constraints satisfies,
	\begin{align*}
		T_{\Fest,r}^{-1}(\bmu) = \max_{\allocation\in\Fest}\min_{\lag\in \cL}\max_{\pol\in\rgoodest}\min_{\pol' \in \neighr}\left\{ \frac{1}{2\sigma^2}\frac{(r - \bmu^{\top} (\pol-\pol'))^2}{\|\pol-\pol'\|_{\mathrm{Diag}(1/\allocation_a)}^2} - \lag^{\top} \Tilde{A}\allocation \right\}
	\end{align*}
\end{proof}
\fi

\subsection{Bounds on Sample complexity: Proof of Corollary \ref{cor:Bounds on characteristic time} Part (a) }\label{sec:cor 1}
\begin{repcorollary}{cor:Bounds on characteristic time} \textbf{Part (a)}
    Let $d_{\pol}^2 \defn  \frac{\Vert\opt_{\cF}-\pol\Vert_{\bmu\bmu^{\top}}^2}{\Vert\opt_{\cF}-\pol\Vert_2^2}$ be the norm of the projection of $\bmu$ on the policy gap $(\opt_{\cF}-\pol)$.\\
    \emph{Part i.} Then, we get 
    $\frac{2\sigma^2 K}{C_{\mathrm{known}}}(1+\SP_{\Tilde{A}})\leq T_{\Fest,0}(\bmu)\leq \frac{2\sigma^2 K}{C_{\mathrm{known}}}$, where $C_{\mathrm{known}} = \min_{\pol'' \in \neighreal}d_{\pol''}^2$. \\
    \emph{Part ii.} Additionally, $T_{\Fest,0}(\bmu) \geq \frac{H}{\kappa^2}(1+\SP_{\Tilde{A}})$, where $H$ is inversely proportional to the sum of squares of gaps and $\kappa_{\mathrm{known}}$ is the condition number of a sub-matrix of $A$ consisting $\numarm$ linearly independent active constraints for $\opt$. 
\end{repcorollary}

\begin{proof}
Here, we derive explicit expression for gaussian characterisation of the lower and upper bound on the characteristic time. We start the proof
with the difference in sample complexity between unknown and known constraint setting
\begin{align} \label{diff}
    \mathcal{D}(\allocation,\bmu,\Fest) - \mathcal{D}(\allocation,\bmu,\mathcal{F})
    &= \min_{\lag\in \cL}\min_{\pol' \in \neigh}\left\{\frac{1}{2\sigma^2}\frac{\|\opt_{\Fest}-\pol'\|_{\bmu\bmu^{\top}}^2}{\|\opt_{\Fest}-\pol'\|_{\mathrm{Diag}(1/\allocation_a)}^2} - \lag^{\top} \Tilde{A}\allocation \right\} \notag \\
    &\quad -\min_{\pol'' \in \neighreal}\frac{1}{2\sigma^2} \frac{\| \opt_{\mathcal{F}}-\pol''\|_{\bmu\bmu^{\top}}^2}{\|\opt_{\mathcal{F}}-\pol''\|_{\mathrm{Diag}(1/\allocation_a)}^2} 
\end{align}
Let us remind, due to pessimistic choice of $\tilde{A}$, $\mathcal{F} \subseteq \Fest$. $\pol'$ is a neighbor of $\opt_{\Fest}$ if it is an extreme point in the polytope $\Fest$ and shares (\numarm-1) active constraints with $\opt_{\Fest}$. Then $\opt_{\mathcal{F}}$ and $\pol''$ lies in the interior of $\Fest$ i.e, they can be expressed as a convex combination of $\opt_{\Fest}$ and $\pol'$. 
Let, $\exists 0\leq t_1 \leq 1:  \opt_{\mathcal{F}} = t_1\opt_{\Fest} + (1-t_1)\pol'$ and $\exists 0\leq t_1 \leq 1:  \pol'' = t_2\opt_{\Fest} + (1-t_2)\pol'$
Then, $(\opt_{\mathcal{F}}-\pol') = t_1 (\opt_{\Fest}-\pol')$ and $(\pol'' -\pol') = t_2 (\opt_{\Fest}-\pol')$. Then,
%
\begin{align*}
	\|\opt_{\mathcal{F}} - \pol''\|_{\bmu\bmu^{\top}}^2 &= \|(\opt_{\mathcal{F}} -\pol') - (\pol''-\pol')\|_{\bmu\bmu^{\top}}^2 = \| t_1 (\opt_{\Fest} -\pol') - t_2 (\opt_{\Fest} -\pol')\|_{\bmu\bmu^{\top}}^2 = (t_1-t_2)^2\|\opt_{\Fest} -\pol'\|^2_{\bmu\bmu^{\top}}
\end{align*}
Applying this equality in Equation \ref{diff}, we get ---
\begin{align}
	&~~~\mathcal{D}(\allocation,\bmu,\cF) - \mathcal{D}(\allocation,\bmu,\Fest) \notag \\
	&=  \min_{\pol'' \in \neighreal}\frac{1}{2\sigma^2} \frac{\| \opt_{\mathcal{F}}-\pol''\|_{\bmu\bmu^{\top}}^2}{\|\opt_{\mathcal{F}}-\pol''\|_{\mathrm{Diag}(1/\allocation_a)}^2} -\min_{\lag\in \cL}\min_{\pol' \in \neigh}\left\{\frac{1}{2\sigma^2}\frac{\|\opt_{\Fest}-\pol'\|_{\bmu\bmu^{\top}}^2}{\|\opt_{\Fest}-\pol'\|_{\mathrm{Diag}(1/\allocation_a)}^2} - \lag^{\top} \Tilde{A}\allocation \right\}\notag \\
	&= \min_{\pol' \in \neigh}\frac{1}{2\sigma^2} \frac{(t_1-t_2)^2\| \opt_{\Fest}-\pol''\|_{\bmu\bmu^{\top}}^2}{(t_1-t_2)^2\|\opt_{\Fest}-\pol'\|_{\mathrm{Diag}(1/\allocation_a)}^2} - \min_{\lag\in \cL}\min_{\pol' \in \neigh}\left\{\frac{1}{2\sigma^2}\frac{\|\opt_{\Fest}-\pol'\|_{\bmu\bmu^{\top}}^2}{\|\opt_{\Fest}-\pol'\|_{\mathrm{Diag}(1/\allocation_a)}^2} - \lag^{\top} \Tilde{A}\allocation \right\} \notag\\
	&= \min_{\lag\in \cL}  \lag^{\top} \Tilde{A}\allocation \notag\\
	&\leq \mathcal{D}(\allocation,\bmu,\Fest)\SP_{\Tilde{A}}\label{eq:diff of bounds} 
\end{align}
The last inequality is a direct consequence of Lemma~\ref{lemm:constraint part upper bound}.\\
\textbf{Part I.} Since $\min_{\lag\in \cL} - \lag^{\top} \Tilde{A}\allocation \geq 0$ by the design of the Lagrangian and the constraints, we get
\begin{align}\label{eq:15}
	&\mathcal{D}(\allocation,\bmu,\Fest) \geq \mathcal{D}(\allocation,\bmu,\mathcal{F})\notag\\
	\implies &T_{\Fest,0}^{-1}(\bmu) \geq T_{\cF,0}^{-1}(\bmu)\,
\end{align}

This leads us to the upper bound. 
For ease of comparison, we follow the same notation used in \cite{carlsson2024pure} and define for any $\pol\in\Delta_{\numarm-1}$, $d_{\pol}^2 \defn  \frac{\| \opt_{\mathcal{F}}-\pol\|_{\bmu\bmu^{\top}}^2}{\|\opt_{\mathcal{F}}-\pol\|_2^2}$, which is the squared distance between $\bmu$ and the hyperplane $(\opt_{\mathcal{F}}-\pol) = \mathbf{0}$. Therefore,
using \cite[Corollary 1]{carlsson2024pure} and Equation \eqref{eq:15}, we get 

\begin{align*}
	T_{\Fest,0}(\bmu) \leq \frac{2\sigma^2 \numarm}{\min_{\pol'' \in \neighreal}d_{\pol''}^2 }
\end{align*}

Along with Equation~\eqref{eq:diff of bounds}, this implies that
\begin{align}\label{eq:16}
	&\mathcal{D}(\allocation,\bmu,\Fest) \left(1+\SP_{\Tilde{A}}\right) \leq  \mathcal{D}(\allocation,\bmu,\mathcal{F})\notag \\
	\implies &T_{\Fest,0}^{-1}(\bmu)\left(1+\SP_{\Tilde{A}}\right) \leq T_{\cF,0}^{-1}(\bmu) 
\end{align}
We again leverage \cite[Corollary 1]{carlsson2024pure} to get lower bound on the characteristic time as,

\begin{align*}
	T_{\Fest,0}(\bmu) \geq \left(1+\SP_{\Tilde{A}}\right) \frac{2\sigma^2}{\min_{\pol'' \in \neighreal}d_{\pol''}^2 }
\end{align*}

%

Further, we define $C_{\mathrm{Known}} \defn \min_{\pol'' \in \neighreal}d_{\pol''}^2$, which concludes the proof.

\end{proof}

\subsection{Impact of Unknown Linear Constraints: Proof of Corollary \ref{cor:Bounds on characteristic time} Part (b)}

\begin{repcorollary}{cor:Bounds on characteristic time} \textbf{Part (b)}
    Let $d_{\pol}^2 \defn \frac{\Vert\opt_{\cF}-\pol\Vert_{\bmu\bmu^{\top}}^2}{\Vert\opt_{\cF}-\pol\Vert_2^2}$ be the norm of the projection of $\bmu$ on the policy gap $(\opt_{\cF}-\pol)$. Then,  the characteristic time $T_{\Fest,0}(\bmu)$ satisfies  $T_{\Fest,0}(\bmu) \geq \frac{H}{\kappa^2 }\left(1-\frac{\epsilon}{\gamma}\right)$. $H$ is the sum of squares of gaps. $\kappa$ is the condition numbers of a sub-matrix of $A$ that consists $\numarm$ linearly independent active constraints for $\opt$.
\end{repcorollary}

\begin{proof}
	This result a direct implication of Corollary 2 in ~\cite{carlsson2024pure}, that states, 

\begin{align*}
	T_{\cF,0}^{-1}(\bmu) \leq \frac{\kappa^2}{H}
\end{align*}

Here, $H = \frac{2\sigma^2}{\Delta^2}$, where $\Delta^2 = \sumk (\bmu^* - \bmu_a)^2$, i.e the sum of squares of sub-optimal gaps in the arms and $\kappa$ is the condition number of a sub-matrix of $A$ consisting at least $\numarm$ linearly independent active constraints for $\opt$. Referring to the proof structure of Corollary 2 in ~\citep{carlsson2024pure} is of independent interest.

Now we leverage equation Equation \eqref{eq:16} to get
\begin{align*}
	 T_{\Fest,0}(\bmu) \geq \frac{H}{\kappa^2 }\left(1+\SP_{\Tilde{A}}\right)
\end{align*}
\end{proof}
\iffalse
\begin{proof} 
We dictate this proof in total four steps. The first step mainly deals with a sub-matrix $\tilde{A}'$of $\tilde{A}$ with $\numarm$ number if of active constraints where it will be shown that we can get any neighbouring policy of $\opt_{\Fest}$ just by a rank-1 update of $\tilde{A}'$ which means we just one of the constraints inactive for $\opt_{\Fest}$. Using this result we will find an upper bound in the next step on the deviation between $\opt_{\Fest}$ and any of it's neighbouring policy sharing $\numarm-1$ number of active constraints. Once we find the upper bound on this deviation, in the third step we give a new expression for the characteristic time and we conclude the fourth step by reducing a new lower bound characterised the condition number of $\tilde{A}'$ and $\Hat{A}$, where $\Hat{A}$ is the sub-matrix of the actual constraint matrix $A$ with at least $\numarm$ number of active constraints.

\textbf{Step 1 : Optimal policy to neighbouring policy via rank-1 update.} We have the pessimistic estimate of $A$ as $\Tilde{A}$, which gives us the perturbed feasible space $\Fest$. Let, $\tilde{A}'$ be a sub-matrix of $\tilde{A}$ such that it consists of $\numarm$ linearly independent rows of $\Tilde{A}$ active at $\opt_{\Fest}$. We can then say $\opt_{\Fest} \in \nullspace$, where, $\nullspace$ is the null space of $\tilde{A}'$. Now for some neighbour of $\opt_{\Fest}$, $\pol' \in \neigh$ belongs to the null space of some $\tilde{A}''$, where this $\tilde{A}''$ can be expressed as a rank-1 update of $\tilde{A}'$. Specifically, $\tilde{A}'' = \tilde{A}' + e_r ({a''}_r - {a'}_r)^{\top}$. Here, ${a'}_r$ is the column corresponding to the r-th constraint of $\tilde{A}'$. We just want to replace this column with a new column ${a''}_r$, so we set $e_r$ as a vector which has 1 at the r-th position and 0 everywhere else.

\textbf{Step 2: Bounding distance between neighbor and optimal policy.} From Equation \ref{Charac main}, we have ---
\begin{align}\label{Eq. 19}
    T_{\Fest,0}^{-1}(\bmu) < T_{\mathcal{F},0}^{-1}(\bmu) + \frac{1}{2\sigma^2}\min_{\pol' \in \neigh} 2\frac{(\bmu^{\top} (\opt_{\Fest}-\pol'))^2}{\|\opt_{\Fest}-\pol'\|_{2}^2} 
\end{align}

So, it becomes evident that we need to get an upper bound on the quantity $\|\opt_{\Fest} - \pol'\|$. Let us start with, 
\begin{align*}
    \tilde{A}'(\pol' - \opt_{\Fest}) &= \tilde{A}'\pol'  \text{ since, } \opt_{\Fest}\in\nullspace\\
    &= \left\{ \tilde{A}'' + e_r ({a'}_r - {a''}_r)^{\top} \right\}\pol'\\
    &= e_r ({a'}_r - {a''}_r)^{\top} \pol'\\
    &= e_r {a'}_r^{\top} \pol' \text{ , since } {a''}_r\in \text{ column space of } \tilde{A}''\\
    \implies (\pol' - \opt_{\Fest}) &= {\tilde{A}'}^{-1}(e_r {a'}_r^{\top})\pol'
\end{align*}

We denote $\xi \defn {a'}_r^{\top} \pol'$ as it is the slack for the new r-th row/constraint. We also define $\sigma_{\mathrm{min}}(\tilde{A}')$ and $\sigma_{\mathrm{max}}(\tilde{A}')$ be respectively the minimum and maximum singular value of $\tilde{A}'$. Also, let $\kappa_{\mathrm{unknown}} \defn \frac{\sigma_{\mathrm{max}}(\tilde{A}')}{\sigma_{\mathrm{min}}(\tilde{A}')}$ be the minimum condition number for $\tilde{A}'$. Then by property of singular value of a matrix, it follows ---
\begin{align*}
    &\frac{1}{\minsing} = \minsinginv  \leq \min_{v:\|v\|_2 = 1} \|{\tilde{A}'}^{-1}v\|\leq \|{\tilde{A}'}^{-1}e_r\| \leq \max_{v:\|v\|_2 = 1} \|{\tilde{A}'}^{-1}v\| \\
    &\quad\quad\leq \maxsinginv = \frac{1}{\maxsing} \\
    \implies & \frac{|\xi|}{\minsing} \leq \|\opt_{\Fest}-\pol'\|_2 \leq \frac{\xi}{\maxsing}\,
\end{align*}

\textbf{Step 3 : A new expression for Characteristic time.} Plugging in the above obtained bound on $\|\opt_{\Fest}-\pol'\|_2$ in the expression of inverse of characteristic time in Theorem \ref{Characteristic time for Gaussian Distribution}, we get a new expression for the characteristic time.

\begin{align*}
     \frac{1}{2\sigma^2}\frac{\|\opt_{\Fest}-\pol'\|_{\bmu\bmu^{\top}}^2}{\|\opt_{\Fest}-\pol'\|_{\mathrm{Diag}(1/\allocation_a)}^2} &= \frac{1}{2\sigma^2}\frac{\|{\tilde{A}'}^{-1}(e_r {a'}_r^{\top})\pol'\|_{\bmu\bmu^{\top}}^2}{\|{\tilde{A}'}^{-1}(e_r {a'}_r^{\top})\pol'\|_{\mathrm{Diag}(1/\allocation_a)}^2}\\
     &= \frac{1}{2\sigma^2}\frac{\|{\tilde{A}'}^{-1}e_r \xi
    \|_{\bmu\bmu^{\top}}^2}{\|{\tilde{A}'}^{-1}e_r \xi \|_{\mathrm{Diag}(1/\allocation_a)}^2}\\
    &= \frac{1}{2\sigma^2}\frac{(\Delta^{\top} {\tilde{A}'}^{-1}e_r)^2}{\|{\tilde{A}'}^{-1}e_r  \|_{\mathrm{Diag}(1/\allocation_a)}^2}
\end{align*}
where $\Delta$ is the vector of sub-optimality gaps of arms, i.e $\Delta_a \defn \bmu^{\star} - \bmu_a$. Then, the new expression for the inverse of characteristic time is as follows,
\begin{align}\label{Eq. 20}
    T_{\Fest,0}^{-1}(\bmu) = \max_{\allocation \in \Fest} \min_{\lag\in \cL} \min_{\pol' \in \neigh} \left\{ \frac{1}{2\sigma^2}\frac{(\Delta^{\top} {\tilde{A}'}^{-1}e_r)^2}{\|{\tilde{A}'}^{-1}e_r  \|_{\mathrm{Diag}(1/\allocation_a)}^2} - \lag^{\top} \tilde{A} \allocation\right\} 
\end{align}

\textbf{Step 4 : New expression for the lower bound.}

Combining Equation \ref{Charac main} and the new expression in Equation \ref{Eq. 20}, we get ---
\begin{align*}
    T_{\Fest,0}^{-1}(\bmu)&< T_{\mathcal{F},0}^{-1}(\bmu) + \frac{1}{\sigma^2}\max_{\allocation} \min_{\pol' \in \neigh} \frac{\| \opt_{\mathcal{F}}-\pol'\|_{\bmu\bmu^{\top}}^2}{\|\opt_{\mathcal{F}}-\pol'\|_{\mathrm{Diag}(1/\allocation_a)}^2} \\
    &\leq T_{\mathcal{F},0}^{-1}(\bmu) + \frac{1}{\sigma^2} \min_{\pol' \in \neigh} \frac{\| \opt_{\mathcal{F}}-\pol'\|_{\bmu\bmu^{\top}}^2}{\|\opt_{\mathcal{F}}-\pol'\|_{2}^2}\\
    &=  T_{\mathcal{F},0}^{-1}(\bmu) + \min_{\pol' \in \neigh}\frac{1}{2\sigma^2}\frac{(\Delta^{\top} {\tilde{A}'}^{-1}e_r)^2}{\|{\tilde{A}'}^{-1}e_r  \|_{2}^2} \\
    &\leq T_{\mathcal{F},0}^{-1}(\bmu) + \min_{\pol' \in \neigh} \frac{\Delta^2}{2\sigma^2} \frac{\maxsing}{\minsing}
\end{align*}
For ease of comparing, we denote $H \defn \frac{2\sigma^2}{\Delta^2}$. Therefore, the new expression of characteristic time satisfies,
\begin{align*}
    T_{\Fest,0}^{-1}(\bmu) \leq \min_{\pol''\in\neighreal} \frac{\kappa_{\mathrm{known}}^2}{H} + \min_{\pol'\in\neigh} \frac{\kappa_{\mathrm{unknown}}^2}{H} = \frac{1}{H}\left( \kappa_{\mathrm{known}}^2 +\kappa_{\mathrm{unknown}}^2\right)
\end{align*}
where, $\kappa_{\mathrm{known}} \defn \frac{\sigma_{\mathrm{max}}(\Hat{A})}{\sigma_{\mathrm{min}}(\Hat{A})}$ and $\kappa_{\mathrm{unknown}} \defn \frac{\sigma_{\mathrm{max}}(\tilde{A}')}{\sigma_{\mathrm{min}}(\tilde{A}')}$, $\Hat{A}$ and $\tilde{A}'$ being the sub-matrix of $A$ and $\tilde{A}$ having \numarm linearly independent rows.

Consequently expected stopping time is then lower bounded by
\begin{align*}
    \mathbb{E}[\tau_{\delta}] \geq \frac{H}{\kappa_{\mathrm{known}}^2 +\kappa_{\mathrm{unknown}}^2} \mathrm{kl}(\delta\|1-\delta)
\end{align*}
\end{proof}
\else

\newpage
\section{Sample Complexity Upper Bounds (Analysis of Algorithms)}\label{app:algo_proofs}
\subsection{Proof of Lemma \ref{lemm:new_stopping_time} : Implication of $(1-\delta)$-correctness}\label{delta correct}
\begin{replemma}{lemm:new_stopping_time}
	If the recommended policy is $(1-\delta)$-correct then it is $(1-\delta)$-feasible. 
\end{replemma}
\begin{proof}
	Let the recommended policy $\pol$ is $(1-\delta)$-correct. It means
	\begin{align*}
		&\mathbb{P}(\pol = \pol_{\mathcal{F}}^{\star}) \geq 1 -\delta\\
		\implies & \mathbb{P}(\pol = \pol_{\mathcal{F}}^{\star} \wedge A\pol_{\mathcal{F}}^{\star}\leq 0) \geq 1-\delta\\
		\implies & \mathbb{P}(A\pol \leq 0) \geq 1-\delta
	\end{align*}
	
	Hence $(1-\delta)$-correctness automatically implies $(1-\delta)$-feasibility.
\end{proof}
\subsection{Stopping Criterion}\label{subsec:stopping}

\begin{reptheorem}{Chernoff's Stopping rule with unknown constraints.}
The Chernoff \textbf{stopping rule to ensure $(1-\delta)$-correctness and $(1-\delta)$-feasibility} is
\begin{align*}\vspace*{-.5em}
\max_{\pol_t\in\Pi_{\Fest_t}^r}\inf_{\blambda \in \Lambda_{\Fest_t}(\Hat{\bmu}_t)} \sumk N_{a,t} d(\Hat{\bmu}_{a,t},\lambda_a) > \beta( t,\delta)\,, \vspace*{-.5em}
\end{align*}
where $\beta(t,\delta) \defn  3S_0 \log(1+\log N_{a,t}) + S_0\mathcal{T}\left( \frac{(\numarm\wedge\numconst) + \log\frac{1}{\delta}}{S_0}\right)$ and $0\leq S_0 \leq \numarm$.
\end{reptheorem}

\ifdoublecol
\begin{proof}
We dictate the proof in 2 steps. In the first step, we prove that the stopping time $\tau_{\delta}$ is finite. Then, in next step, we give an explicit expression of the stopping threshold by upper bounding probability of the bad event for stopping time $\tau_{\delta}$.

Let us first go through some notations. 

$\pol_t \defn \argmax_{\pol\in\Pi_{\Fest_t}^r} \Hat{\bmu}_t^{\top} \pol$, where $\Hat{\bmu}_t \in \argmin_{\blambda\in\Lambda_{\Fest_t}(\Hat{\bmu}_t)} \sumk N_{a,t}d(\Hat{\bmu}_{a,t},\blambda_a) - \lag_t^{\top}\tilde{A}_t N_{a,t}$. 

Algorithm \ref{alg:lagts} and \ref{alg:lagex} stops at a finite $\tau_{\delta} \in \mathbb{N}$ if the events $\inf_{\blambda\in\Lambda_{\Fest}(\Hat{\bmu}_t)}\{ \exists t\in\mathbb{N}: \sumk N_{a,t}d(\Hat{\bmu}_{a,t},\blambda_a)\} > \beta(t,\delta)$ and $\{\exists t\in\mathbb{N}:\|(\tilde{A}_t - A)N_t\|_{\infty} \leq t\rho(t,\delta)\}$ jointly occurs.

\textbf{Step 1: Finiteness of the stopping time.}
$A$ stopping time is finite if the parameters in the system converges to their true values in finite time, in our case the means of arms and the constraint matrix. Let us define $\mathcal{A} \defn \{ a\in[\numarm] : \lim_{t\to\infty}N_{a,t} < \infty \}$ as a sampling rule i.e if an arm belongs to this set $\mathcal{A}$, it has been sampled finitely and otherwise the arm has been sampled enough number of times so that the mean of that arm has converged to it's true value and the column in the constraint matrix corresponding to that arm as also converged. For arms $a\in[\numarm] \text{ and } a\in \mathcal{A}^c, \Hat{\bmu}_{a,t} \rightarrow \Tilde{\bmu}_a\neq \bmu_a$. When all parameters are concentrated $\mathcal{A} = \emptyset$, we say $\forall a\in [\numarm] : \Hat{\bmu}_{a}\to \bmu_a$. We also define the limit of this empirical sampling rule as $\allocation_{\infty} = \lim_{t\to\infty} \frac{N_{a,t}}{t} \forall a\in [\numarm]$.
We then write the stopping condition in a new way $\inf_{\blambda\in\Lambda_{\Fest}(\Hat{\bmu}_t)}\{ \exists t\in\mathbb{N}: \sumk \frac{N_{a,t}}{t}d(\Hat{\bmu}_{a,t},\blambda_a) > \frac{\beta(t,\delta)}{t}\}$. 
By continuity properties and knowing $\beta(t,.) \to \log\log t$ and $\rho(t,.) \to 0$ as $t\to\infty$, we claim by taking taking asymptotic limits both sides  $\inf_{\blambda\in\Lambda_{\Fest}(\Hat{\bmu}_t)}\sumk \frac{\allocation_{\infty,a}}{t}d(\Hat{\bmu}_{a,t},\blambda_a) > 0 $. 
We get strict inequality for the both the cases by the virtue of construction of the set $\mathcal{A}$ such that for arms $a\in \mathcal{A}, \allocation_{\infty} \neq 0$ and the KL-divergence is non-zero as $\blambda_a \neq \bmu$ since $\blambda \in \Lambda_{\Fest_t}(\Hat{\bmu}_t)$. 

\textbf{Step 2: Probability of bad event to Stopping threshold.}
Let $\allocation_t$ is the allocation associated to $N_t$. Then we define the bad event as
\begin{align*}
    U_t \defn \{\pol_{\tau_{\delta}}\notin \rgood\} = \bigcup_{\pol\notin \rgood } \Bigg\lbrace \exists t\in\mathbb{N}: \pol_{t+1} = \pol \wedge
     \max_{\pol\in\Pi_{\Fest_t}^r}\inf_{\blambda\in\Lambda_{\Fest_t}(\Hat{\bmu}_t)}\sumk N_{a,t}d(\Hat{\bmu}_{a,t},\blambda_a) > \beta(t,\delta)  \Bigg\rbrace
\end{align*}

Therefore probability of this bad event 
\begin{align}\label{eq.19}
    \mathbb{P}(U_t)
    = & \mathbb{P}\left(\bigcup_{\pol\notin\rgood} \left\{  \exists t\in\mathbb{N}: \pol_{t+1} = \pol \wedge \max_{\pol\in\Pi_{\Fest_t}^r}\inf_{\blambda\in\Lambda_{\Fest_t}(\Hat{\bmu}_t)} \sumk N_{a,t}d(\Hat{\bmu}_{a,t},\blambda_a)> \beta(t,\delta) \right\}\right) \notag\\ 
    \leq& \sum_{\pol\notin\rgood}\mathbb{P}\left\{  \exists t\in\mathbb{N}:  \max_{\pol\in\Pi_{\Fest_t}^r}\inf_{\blambda\in\Lambda_{\Fest_t}(\Hat{\bmu}_t)}\sumk N_{a,t}d(\Hat{\bmu}_{a,t},\blambda_a)\} > \beta(t,\delta) \right\}\notag\\
    \leq&\sum_{\pol\notin\rgood}\mathbb{P}\left\{  \exists t\in\mathbb{N}:  \sumk N_{a,t}d(\Hat{\bmu}_{a,t},\bmu_a) > \beta(t,\delta) \right\}
\end{align}

We define $I_{\pol} \defn \mathrm{Supp}(\rpolest)\Delta \mathrm{Supp}(\pol)$ and also $S_0 \defn \max_{\pol}|I_{\pol}|$. We note that $0\leq S_0\leq \numarm$.

We get from Theorem~\ref{lemm:Kauffman martinagle stopping} in \citep{JMLR:v22:18-798} with the notation of $\mathcal{T}(.)$ follows from Lemma \ref{lemm:Kauffman martinagle stopping}
\begin{align*}
	\sum_{\pol\notin\rgood}\mathbb{P}\left\{  \exists t\in\mathbb{N}: \sum_{a\in I_{\pol}} N_{a,t}d(\Hat{\bmu}_{a,t},\bmu_a) \geq \sum_{a\in I_{\pol}} 3\log(1+\log N_{a,t}) + |S_0|\mathcal{T}\left( \frac{\log\frac{|\pol_{t}|-1}{\delta}}{S_0}\right) \right\} \leq \delta
\end{align*}
where $\delta$ is chosen to be $\frac{\delta}{|\pol_{t}|-1}$ such that $\log \frac{|\pol_{t}|-1}{\delta}\leq \log(\frac{2^{\numarm}}{\delta})\leq (\numarm\wedge\numconst) + \log\frac{1}{\delta}$

Also $\sum_{a\in I_{\pol}} 3\log(1+\log N_{a,t} \leq 3S_0 \log(1+\log N_{a,t})$. Therefore the stopping threshold is given by
\begin{align*}
    \beta(t,\delta) =  3S_0 \log(1+\log N_{a,t}) + S_0\mathcal{T}\left( \frac{(\numarm\wedge\numconst) + \log\frac{1}{\delta}}{S_0}\right)
\end{align*}
In practice, we use $S_0 = \numarm$.
\end{proof}
\else
\begin{proof}
We dictate the proof in 2 steps. In the first step, we prove that the stopping time $\tau_{\delta}$ is finite. Then, in next step, we give an explicit expression of the stopping threshold by upper bounding probability of the bad event for stopping time $\tau_{\delta}$.

Let us first go through some notations. 

$\pol_t \defn \argmax_{\pol\in\Fest} \Hat{\bmu}_t^{\top} \pol$, where $\Hat{\bmu}_t \in \argmin_{\blambda\in\Lambda_{\Fest_t}(\Hat{\bmu}_t)} \sumk N_{a,t}d(\Hat{\bmu}_{a,t},\blambda_a) - \lag_t^{\top}\tilde{A}_t N_{a,t}$. 

Algorithm \ref{alg:lagts} and \ref{alg:lagex} stops at a finite $\tau_{\delta} \in \mathbb{N}$ if the events $\inf_{\blambda\in\Lambda_{\Fest}(\Hat{\bmu}_t)}\{ \exists t\in\mathbb{N}: \sumk N_{a,t}d(\Hat{\bmu}_{a,t},\blambda_a)\} > \beta(t,\delta)$ and $\{\exists t\in\mathbb{N}:\|(\tilde{A}_t - A)N_t\|_{\infty} \leq t\rho(t,\delta)\}$ jointly occurs.

\textbf{Step 1: Finiteness of the stopping time.}
$A$ stopping time is finite if the parameters in the system converges to their true values in finite time, in our case the means of arms and the constraint matrix. Let us define $\mathcal{A} \defn \{ a\in[\numarm] : \lim_{t\to\infty}N_{a,t} < \infty \}$ as a sampling rule i.e if an arm belongs to this set $\mathcal{A}$, it has been sampled finitely and otherwise the arm has been sampled enough number of times so that the mean of that arm has converged to it's true value and the column in the constraint matrix corresponding to that arm as also converged. For arms $a\in[\numarm] \text{ and } a\in \mathcal{A}^c, \Hat{\bmu}_{a,t} \rightarrow \Tilde{\bmu}_a\neq \bmu_a$ and $(\tilde{A})_{a,t}\to (A')_{a}\neq (A)_a$. When all parameters are concentrated $\mathcal{A} = \emptyset$, we say $\forall a\in [\numarm] : \Hat{\bmu}_{a}\to \bmu_a$ and $\tilde{A} \to A$. We also define the limit of this empirical sampling rule as $\allocation_{\infty} = \lim_{t\to\infty} \frac{N_{a,t}}{t} \forall a\in [\numarm]$.
We then write the stopping condition in a new way $\inf_{\blambda\in\Lambda_{\Fest}(\Hat{\bmu}_t)}\{ \exists t\in\mathbb{N}: \sumk \frac{N_{a,t}}{t}d(\Hat{\bmu}_{a,t},\blambda_a) > \frac{\beta(t,\delta)}{t}\}$ and $\{\exists t\in\mathbb{N}:\|(\tilde{A}_t - A)\frac{N_t}{t}\|_{\infty} \leq \rho(t,\delta)\}$. 
By continuity properties and knowing $\beta(t,.) \to \log\log t$ and $\rho(t,.) \to 0$ as $t\to\infty$, we claim by taking taking asymptotic limits both sides  $\inf_{\blambda\in\Lambda_{\Fest}(\Hat{\bmu}_t)}\sumk \frac{\allocation_{\infty,a}}{t}d(\Hat{\bmu}_{a,t},\blambda_a) > 0 $ and also $\|(\tilde{A}_t - A)\allocation_{\infty,a}\|_{\infty} < 0$. 
We get strict inequality for the both the cases by the virtue of construction of the set $\mathcal{A}$ such that for arms $a\in \mathcal{A}, \allocation_{\infty} \neq 0$ and the KL-divergence is non-zero as $\blambda_a \neq \bmu$ since $\blambda \in \Lambda_{\Fest_t}(\Hat{\bmu}_t)$. Also for the second strict inequality, since $\tilde{A}_t$ is the optimistic estimate of $A$ at time t the condition will hold.

\textbf{Step 2: Probability of bad event to Stopping threshold.}
Let $\allocation_t$ is the allocation associated to $N_t$. Then we define the bad event as
\begin{align*}
    &U_t \defn \{\pol_{\tau_{\delta}}\neq \opt_{\mathcal{F}}\} \\
    =& \bigcup_{\pol\notin\rgood} \Bigg\lbrace \exists t\in\mathbb{N}: \pol_{t+1} = \pol\\
    &\quad\qquad \qquad \qquad\wedge \left\{ \inf_{\blambda\in\Lambda_{\Fest_t}(\Hat{\bmu}_t)}\sumk N_{a,t}d(\Hat{\bmu}_{a,t},\blambda_a) > \beta(t,\delta) \wedge A\allocation_t \leq 0 \right\} \Bigg\rbrace\\
    (a) \atop \subseteq& \bigcup_{\pol\notin\rgood} \Bigg\lbrace \exists t\in\mathbb{N}: \pol_{t+1} = \pol\\
    &\quad\qquad \qquad \qquad\wedge \left\{ \inf_{\blambda\in\Lambda_{\Fest_t}(\Hat{\bmu}_t)}\sumk N_{a,t}d(\Hat{\bmu}_{a,t},\blambda_a) > \beta(t,\delta) \wedge \|(\tilde{A}_t - A)\allocation_t\|_{\infty} \leq \rho(t,\delta) \right\} \Bigg\rbrace
\end{align*}
The argument (a) holds because
\begin{align*}
    \mathbb{P}\{ 0\geq A\allocation_t \} = \mathbb{P}\{ -\tilde{A}_t\allocation_t \leq (\tilde{A}_t-A)\allocation_t\} &\leq  \mathbb{P}\{ \|\tilde{A}_t\allocation_t\|_1 > \|(\tilde{A}_t-A)\allocation_t\|_1 \geq \|(\tilde{A}_t-A)\allocation_t\|_{\infty}\}\\
    \leq& \mathbb{P}\{ \|\tilde{A}_t\allocation_t\|_1 \leq \rho(t,\delta)\}\mathbb{P}\{\|(\tilde{A}_t-A)\allocation_t\|_{\infty} \leq \rho(t,\delta) \}\\
    &=\mathbb{P}\{\|(\tilde{A}_t-A)\allocation_t\|_{\infty} \leq \rho(t,\delta) \}
\end{align*}
since the event $\{\|\tilde{A}_t\allocation_t\|_1 \leq \rho(t,\delta)\}$ is a sure event.

Therefore probability of this bad event 
\begin{align}\label{eq.19}
    \mathbb{P}(U_t)\notag
    \leq &\bigcup_{\pol\notin\rgood} \mathbb{P}\left\{  \exists t\in\mathbb{N}: \pol_{t+1} = \pol \wedge \inf_{\blambda\in\Lambda_{\Fest_t}(\Hat{\bmu}_t)}\sumk N_{a,t}d(\Hat{\bmu}_{a,t},\blambda_a)\} > \beta(t,\delta) \right\} \notag\\ &\quad \mathbb{P}\left\{ \exists t\in\mathbb{N}: \pol_{t+1} = \pol \wedge \|(\tilde{A}_t - A)\allocation_t\|_{\infty} > \rho(t,\delta)\right\}\notag\\
    =& \sum_{\pol\neq\opt_{\mathcal{F}}}\mathbb{P}\left\{  \exists t\in\mathbb{N}:  \inf_{\blambda\in\Lambda_{\Fest_t}(\Hat{\bmu}_t)}\sumk N_{a,t}d(\Hat{\bmu}_{a,t},\blambda_a)\} > \beta(t,\delta) \right\} \notag\\ &\quad+ \sum_{\pol\neq\opt_{\mathcal{F}}}\mathbb{P}\left\{ \exists t\in\mathbb{N}: \|(\tilde{A}_t - A)\allocation_t\|_{\infty} > \rho(t,\delta)\right\}
\end{align}

The second cumulative probability can be bound using Lemma \ref{lemm:bad event prob}, i.e
\begin{align*}
    \sum_{\pol\neq\opt_{\mathcal{F}}}\mathbb{P}\left\{ \exists t\in\mathbb{N}: \|(\tilde{A}_t - A)\allocation_t\|_{\infty} > \rho(t,\delta)\right\} \leq \frac{1}{t}
\end{align*}
for the choice of $\rho(t,\delta)$ given in Lemma \ref{lemm:bad event prob}.
We work with the first term in R.H.S of Equation \eqref{eq.19}.
\begin{align*}
    &\sum_{\pol\neq\opt_{\mathcal{F}}}\mathbb{P}\left\{  \exists t\in\mathbb{N}:  \inf_{\blambda\in\Lambda_{\Fest_t}(\Hat{\bmu}_t)}\sumk N_{a,t}d(\Hat{\bmu}_{a,t},\blambda_a)\} > \beta(t,\delta) \right\}\\
    =& \sum_{\pol\neq\opt_{\mathcal{F}}}\mathbb{P}\Bigg\lbrace  \exists t\in\mathbb{N}:\inf_{\blambda\in\Lambda_{\Fest_t}(\Hat{\bmu}_t)}\inf_{\blambda'\in\Lambda_{\mathcal{F}}(\Hat{\bmu}_t)}\sumk N_{a,t}\left( d(\Hat{\bmu}_{a,t},\blambda_a)-d(\Hat{\bmu}_{a,t},{\blambda'}_a)\right)\\
    &\quad\quad\quad\quad + \inf_{\blambda'\in\Lambda_{\mathcal{F}}(\Hat{\bmu}_t)}\sumk N_{a,t}d(\Hat{\bmu}_{a,t},{\blambda'}_a) > \beta(t,\delta)\Bigg\rbrace\\
    \leq& \sum_{\pol\neq\opt_{\mathcal{F}}}\mathbb{P}\left\{ \exists t\in\mathbb{N}: \inf_{\blambda'\in\Lambda_{\mathcal{F}}(\Hat{\bmu}_t)}\sumk N_{a,t}d(\Hat{\bmu}_{a,t},{\blambda'}_a) \leq \beta(t,\delta) \right\} 
\end{align*}
The last inequality holds because
\begin{align*}
    \inf_{\blambda\in\Lambda_{\Fest_t}(\Hat{\bmu}_t)}\inf_{\blambda'\in\Lambda_{\mathcal{F}}(\Hat{\bmu}_t)}\sumk N_{a,t}\left( d(\Hat{\bmu}_{a,t},\blambda_a)-d(\Hat{\bmu}_{a,t},{\blambda'}_a)\right)\leq0
\end{align*} 
since under $\Fest_t$ and the bad event we are assuming that the estimated alt-set is still bigger than the actual alt-set. So any $\blambda \in \Lambda_{\Fest_t}(\Hat{\bmu}_t)$  will have a bigger distance with the estimate of $\bmu$ than any $\blambda' \in \Lambda_{\mathcal{F}}(\Hat{\bmu}_t)$. We define $I_{\pol} \defn \mathrm{Supp}(\opt_{\mathcal{F}})\Delta \mathrm{Supp}(\pol)$ and also $S_0 \defn \max_{\pol}|I_{\pol}|$. We note that $0\leq S_0\leq \numarm$.

We get from Lemma \ref{lemm:Kauffman martinagle stopping} in \citep{JMLR:v22:18-798} with the notation of $\mathcal{T}(.)$ follows from Lemma \ref{lemm:Kauffman martinagle stopping}
\begin{align*}
    &\sum_{\pol\neq\opt_{\mathcal{F}}}\mathbb{P}\left\{  \exists t\in\mathbb{N}: \sum_{a\in I_{\pol}} N_{a,t}d(\Hat{\bmu}_{a,t},\bmu_a) \geq \sum_{a\in I_{\pol}} 3\log(1+\log N_{a,t}) + |S_0|\mathcal{T}\left( \frac{\log\frac{|\pol_{\Fest_t}|-1}{\delta}}{S_0}\right) \right\}\\
    &\leq \frac{\delta}{t} \leq \delta
\end{align*}
where $\delta$ is chosen to be $\frac{\delta}{|\pol_{\Fest_t}|-1}$ such that $\log \frac{|\pol_{\Fest_t}|-1}{\delta}\leq \log(\frac{2^{\numarm}}{\delta})\leq (\numarm\wedge\numconst) + \log\frac{1}{\delta}$

Also $\sum_{a\in I_{\pol}} 3\log(1+\log N_{a,t} \leq 3S_0 \log(1+\log N_{a,t})$. Therefore the stopping threshold is given by
\begin{align*}
    \beta(t,\delta) =  3S_0 \log(1+\log N_{a,t}) + S_0\mathcal{T}\left( \frac{(\numarm\wedge\numconst) + \log\frac{1}{\delta}}{S_0}\right)
\end{align*}
In practice, we use $S_0 = \numarm$.
\end{proof}
\fi

\subsection{Upper Bound of LATS}\label{lats ub}
\begin{reptheorem}{thm:LATS upper bound}
The sample complexity upper bound of LATS to yield a $(1-\delta)$-correct optimal policy is
\begin{align*}
  \lim_{\delta\rightarrow 0} \frac{\mathbb{E}[\tau_{\delta}]}{\log(1/\delta)} \leq \alpha(1+\mathfrak{s})\charknown,
\end{align*}
where $\mathfrak{s}$ is the shadow price of the true constraint $A$, $\charknown$ is the characteristic time under the true constraint (Equation~\eqref{eqn:char_time_known}), and $\delta \in (0,1]$.
\end{reptheorem}
\begin{proof}
We will prove this theorem in 5 steps. In the first step, we define what is considered to be the good event in our unknown constraint setting, then we go on bounding the probability of the complement of this good event in step 2. Once the parameter concentrations are taken care of, we show how we can lower bound the instantaneous complexity of the algorithm in step 3. In step 4, we finally prove the upper bound on stopping time for both good and bad events. We conclude with the asymptotic upper bound on stopping time i.e when $\delta \rightarrow 0$ and $\epsilon \rightarrow 0$ in step 5.

\textbf{Step 1: Defining the good event.} Given an $\epsilon > 0$, for $h(T) = \max{\sqrt{T},f(T,\delta)}$ we define the good event $G_T$ as,
\begin{align*}
    G_T \triangleq \bigcap_{t=h(T)}^{\top} \{ ||\Hat{\bmu}_t - \bmu||_{\infty} \leq \xi(\epsilon) \wedge ||(\tilde{A}_t^i - A^i)\allocation||_{\infty} \leq \phi(\epsilon) \forall \allocation \in \Fest \}
\end{align*}
where, $\xi(\epsilon) \leq \max\limits_{\pol \in \rgoodest}\max\limits_{\pol'\in\neighr} \frac{1}{5}\bmu^{\top} (\pol-\pol')$, and $\phi(\epsilon) \triangleq \max(1,\epsilon)$ for a given $\epsilon > 0$.
The good event implies that the means and constraints are well concentrated in an $\epsilon$-ball around their true values. Thus, we have to now bound the extra cost of their correctness and the number of samples required to reach the good events.

We also observe that $||\bmu' - \bmu||_{\infty} \leq \xi(\epsilon)$ and $||(\tilde{A}_t^i - A^i)\allocation||_{\infty} \leq \phi(\epsilon)$ implies that $\sup\limits_{\allocation'\in\allocation^{\star}(\bmu')}\sup\limits_{\allocation\in\allocation^{\star}(\bmu)}||\allocation'-\allocation||\leq \epsilon$ due to upper hemicontinuity of $\allocation^*(\bmu)$ (Theorem \ref{thm:Optimal Policy}). 

\textbf{Step 2: Samples to Achieve the Good Event.} Now, let us bound the probability of bad event,
\begin{align*}
    \mathbb{P} (G_T^c) &= \sum_{t=h(T)}^{\top} \left( \mathbb{P}\left\{ ||\Hat{\bmu}_t - \bmu||_{\infty} >\xi(\epsilon)\right\} + \mathbb{P}\left\{  ||(\tilde{A}_t^i - A^i)\allocation||_{\infty} > \phi(\epsilon) \right\} \right)\\
    &\leq \sum_{t=h(T)}^{\top} \mathbb{P}\left\{ ||\Hat{\bmu}_t - \bmu||_{\infty} >\xi(\epsilon)\right\} + \sum_{t=h(T)}^{\top} \mathbb{P}\left\{  ||(\tilde{A}_t^i - A^i)\allocation||_{\infty} > \phi(\epsilon) \right\}\\
    &\leq BT\exp\left(-CT^{\frac{1}{8}}\right) + \numarm \sum_{t=h(T)}^{\top} \frac{1}{t}
\end{align*}
The first inequality is due to the union bound. The second inequality is due to the Lemma ~\ref{lemm:cumulative bad event for mu} (Lemma 19 of~\cite{pmlr-v49-garivier16a}), which states that
\begin{align*}
    \sum_{t=h(T)}^{\top} \mathbb{P}\left\{ \|\Hat{\bmu}_t - \bmu\|_{\infty} >\xi(\epsilon)\right\} \leq BT\exp\left(-CT^{\frac{1}{8}}\right)\, ,
\end{align*}
and also due to Lemma~\ref{lemm:bad event prob} that proves concentration bound of the constraint matrix over time.

\textbf{Step 3: Tracking argument.} Now, we state how concentrating on means and constraints leads to good concentration on the allocations too. Since we use C-tracking, we can leverage the concentration in allocation by(Lemma 17 of~\cite{degenne2019nonasymptotic}). We use this lemma than D-tracking or the tracking argument in (Lemma 7 of~\cite{pmlr-v49-garivier16a}) because the optimal allocations might not be unique but the set $\allocation^*(\bmu)$ is convex (Theorem~\ref{thm:Optimal Policy}).

Hence, there exists a $T_{\epsilon}$ such that under the good event and $t\geq \max(T_{\epsilon},h(T))$, we have,
\begin{align*}
    &|(\bmu-\Hat{\bmu}_t)^{\top} \opt| \leq |(\bmu-\Hat{\bmu}_t)^{\top} \rpolest| + |(\bmu-\Hat{\bmu}_t)^{\top} (\rpolest - \opt_{\Fest})| \leq 4\xi(\epsilon) \\
    &\quad\quad\quad\quad\quad\quad\quad\qquad\qquad\leq \max\limits_{\pol\in\rgoodest}\max\limits_{\pol'\in\neighr} \bmu^{\top} (\pol - \pol')
\end{align*}
We have replaced the perturbed alt-set with the true one because for $t\geq \max(T_{\epsilon},h(T))$, we can ensure the convergence of $\Fest$ almost surely i.e $\Fest \overset{\mathrm{a.s}}{\to} \mathcal{F}$ 

\textbf{Step 3: Complexity of identification under good event and constraint.} Now, we want to understand how hard it is to hit the stopping rule even under the good event.
First, we define 
\begin{align*}
    C_{\epsilon,\Fest}\defn \inf_{\substack{\bmu':||\bmu' - \bmu||_{\infty}\leq \xi(\epsilon)\\\allocation':||\allocation' -\allocation||_{\infty}\leq 3\epsilon\\\tilde{A}':||(\tilde{A}-A)\allocation||_{\infty}\leq \phi(\epsilon)}}\mathcal{D}(\bmu',\allocation',\Fest) - \lag^{\top} \tilde{A}\allocation \,.
\end{align*}

Now leveraging Lemma~\ref{lemm:constraint part upper bound},  we obtain
\begin{align*}
    (1+\psi) \inf_{\substack{\bmu':||\bmu' - \bmu||_{\infty}\leq \xi(\epsilon)\\\allocation':||\allocation' -\allocation||_{\infty}\leq 3\epsilon\\\tilde{A}:||(\tilde{A}-A)\allocation||_{\infty}\leq \phi(\epsilon)}}\mathcal{D}(\bmu',\allocation',\Fest) \geq \inf_{\substack{\bmu':||\bmu' - \bmu||_{\infty}\leq \xi(\epsilon)\\\allocation':||\allocation' -\allocation||_{\infty}\leq 3\epsilon\\\tilde{A}':||(\tilde{A}-A)\allocation||_{\infty}\leq \phi(\epsilon)}}\mathcal{D}(\bmu',\allocation',\Fest) - \lag^{\top} \tilde{A}\allocation \, ,
\end{align*}
where definition of $\psi$ follows from Lemma~\ref{lemm:constraint part upper bound}. It quantifies how the Lagrangian lower bound relates with the Likelihood Ratio Test-based quantity in the stopping time.

Therefore by the C-tracking argument in Theorem~\ref{lemm:C-tracking Remy}, we can state
\begin{align}\label{eq.lb_D}
    \mathcal{D}(\Hat{\bmu}_t, N_t, \Fest) \geq t\inf_{\substack{\bmu':||\bmu' - \bmu||_{\infty}\leq \xi(\epsilon)\\\allocation':||\allocation' -\allocation||_{\infty}\leq 3\epsilon\\\tilde{A}:||(\tilde{A}-A)\allocation||_{\infty}\leq \phi(\epsilon)}}\mathcal{D}(\bmu',\allocation',\Fest) \geq \frac{tC_{\epsilon,\Fest}}{1+\psi}\, .
\end{align}
Here, LHS is the quantity that we use to stop and yield a $(1-\delta)$-correct policy.

\textbf{Step 4: Bounding the stopping time with good and bad events.} We denote $\tau_{\delta}$ as the stopping time. So for $T\geq T_{\epsilon}$ we can write upper bound on this stopping time for both good and bad events as
\begin{align*}
    \min(\tau_{\delta},T) \leq \max(\sqrt{T},f(T,\delta)) + \sum_{t=T_{\epsilon}}^{\top} \indicator_{\tau_{\delta}>t}
\end{align*}
By the correctness of the stopping time, the event $\tau(\delta) > t$ happens if $\mathcal{D}(\Hat{\bmu}_t, N_t, \Fest) \leq \beta(t,\delta)$ for any $t\leq T$.

Now  using the lower bound on $\mathcal{D}(\Hat{\bmu}_t, N_t, \Fest)$ (Equation~\ref{eq.lb_D}), we get
\begin{align*}
     \min(\tau_{\delta},T) \leq \beta(t,\delta)) &\leq \max(\sqrt{T},\beta(T,\delta)) + \sum_{t=T_{\epsilon}}^{\top} \indicator\Bigl(t\frac{C_{\epsilon,\Fest}}{1+\psi} \leq \beta(t,\delta)\Bigr)\\
     &\leq \max(\sqrt{T},\beta(T,\delta)) + \sum_{t=T_{\epsilon}}^{\top} \indicator\Bigl(t\frac{C_{\epsilon,\Fest}}{1+\psi} \leq \beta(T,\delta)\Bigr)\\
    &\leq\max(\sqrt{T},\beta(T,\delta)) + \frac{\beta(T,\delta)(1+\psi)}{C_{\epsilon,\Fest}}
\end{align*}
Let us define $T_{\delta} \triangleq \inf\{ T\in\mathbb{N}:\max(\sqrt{T},f(T,\delta))+\frac{\beta(T,\delta)(1+\psi)}{C_{\epsilon,\Fest}} \leq T\}$. To find a lower bound on $T_{\delta}$, we refer to \citep{pmlr-v49-garivier16a}. Specifically, let us define $c(\eta) \triangleq  \inf\lbrace{T:T-\max(\sqrt{T},\beta(T,\delta)) \geq \frac{T}{1+\eta}}\rbrace$ for some $\eta>0$. Therefore,
\begin{align*}
    T_{\delta} \leq c(\eta) + \inf\{ T\in\mathbb{N}: T\frac{C_{\epsilon,\Fest}}{(1+\psi_{\Fest})(1+\eta)} \geq \beta(T,\delta) \}
\end{align*}

Thus, finally combining the results, we upper bound the stopping time as
\begin{align*}
    \mathbb{E}[\tau_{\delta}] \leq T_{\epsilon} + T_{\delta} + T_{\mathrm{bad}} \, .
\end{align*}
Here, $T_{\mathrm{bad}} = \sum_{t=1}^{\infty}BT\mathrm{exp}\left(-CT^{1/8}\right) + \numarm\zeta(1) < \infty$ is the sum of probability of the bad events over time. $\zeta(.)$ denotes the Euler-Riemann Zeta function.

\textbf{Step 5. Deriving the asymptotics.} Now, we leverage the continuity properties of the Lagrangian characteristic time under approximate constraint to show that we converge to traditional hardness measures as $\epsilon$ and $\delta$ tends to zero.

First, we observe that for some $\alpha > 1$
\begin{align*}
  \limsup_{\delta\rightarrow 0} \frac{\mathbb{E}[\tau_{\delta}]}{\log(1/\delta)} \leq \frac{\alpha(1+\psi_{\Fest})(1+\eta)}{C_{\epsilon,\Fest} } 
\end{align*}
Now, if also $\epsilon \rightarrow 0$, by the Equation ~\ref{eq:lb_known}, we get $\tilde{A} \rightarrow A$, and thus, $\Fest \rightarrow F$.

Thus, by continuity properties in Theorem~\ref{thm:Optimal Policy} and Theorem~\ref{thm:Asymptotic Convergence of the Projection Lemma}, we get that 
\begin{align*}
    \mathcal{D(\Fest,.) \rightarrow \mathcal{D}(F,.)}~\text{and}~\psi_{\Fest} \rightarrow  \frac{\max_{i\in[1,N]}\Gamma}{\min_{i\in[1,N]}\Gamma} \defn \frac{\Gamma_{\mathrm{max}}}{\Gamma_{\mathrm{min}}} \defn \mathfrak{s}\,.
\end{align*}
Here, $\mathfrak{s}$ is the shadow price of the true constraint matrix, and quantifies the change in the constraint values due to one unit change in the policy vector.

Hence, we conclude that 
\begin{align*}
    \lim_{ \delta\rightarrow 0} \frac{\mathbb{E}[\tau_{\delta}]}{\log(1/\delta)} \leq \alpha T_{\mathcal{F},r}(\bmu) (1+\SP) \, .
\end{align*}
\end{proof}

\subsection{Upper Bound for LAGEX}\label{lagex ub}

\begin{reptheorem}{thm:LAGEX upper bound}
    The expected sample complexity of LAGEX $\lim_{\substack{\delta\rightarrow 0}} \frac{\mathbb{E}[\tau_{\delta}]}{\log(1/\delta)} \leq T_{\cF,r}(\bmu).$
\end{reptheorem}

\begin{proof} We will do this proof in two parts. In part (a) we will assume that the current recommended policy is the correct policy and try to find an upper bound on the sample complexity of LAGEX. In the next part (b) we break that assumption and try to get an upper bound on the number of steps the recommended policy is not the correct policy.

\textbf{Part (a) : Current recommended policy is correct.}
Proof structure of this part involves several steps. We start with defining the good event where we introduce a new event associated with the concentration event of the constraint set, then proceeding to prove concentration on that good event. Third step starts with the stopping criterion explained in \ref{subsec:stopping}. In step 4 we define LAGEX as an approximate saddle point algorithm. The next step further transforms the stopping criterion with the help of allocation and instance player's regret that play the zero-sum game. We conclude with the asymptotic upper on the sample complexity characterised by the additive effect of the novel quantity shadow price $\SP$.

\textbf{Step 1: Defining the good event.} We start the proof first by defining the good event as 
\begin{align*}
    G_t = \{ \forall t\leq T, \forall a\in[\numarm]: N_{a,t}d(\Hat{\bmu}_{a,t},\bmu_a) \leq g(t) \wedge \|(\tilde{A}_t - A)\allocation \|_{\infty} \leq \rho(t,\delta)  \}
\end{align*}

where, $g(t) = 3\log t + \log\log t$ and $\rho(t,\delta)$ is defined in Lemma~\ref{lemm:Concentration of the infinite norm}. The choice of $g(t)$ is motivated from~\citep{NEURIPS2019_8d1de745} which originates from the negative branch of the Lambert's W function. This eventually helps us upper bounding the cumulative probability of the bad event. 

\textbf{Step 2: Concentrating to the good event}
We denote $G_t^c$ as the bad event where any one of the above events does not occur.
Cumulative probability of this bad event 
\begin{align*}
     \sum_{s=1}^{\top}\mathbb{P}\left( G_t^c \right) &= \sum_{s=1}^{\top} \mathbb{P}\left( \sumk N_{a,s}d(\Hat{\bmu}_{a,s},\bmu_a)>g(s) \right)+ \sum_{s=1}^{\top} \mathbb{P}\left(\|(\tilde{A}_s - A)\allocation\|_{\infty} > \rho(s,\delta) \forall \allocation\in\simp\right)
\end{align*}
We get the upper bound on 
\[\sum_{s=1}^{\top} \mathbb{P} \left( \sumk N_{a,s} d(\Hat{\bmu}_{a,s},\bmu_a) > g(s) \right) \leq 1+ \sum_{s=2}^{\infty} \frac{\exp(2)}{s^3 \log s}(g(s)+g^2(s) \log s) \leq 19.48\]
as a direct consequence of (Lemma 6 of~\cite{NEURIPS2019_8d1de745}). The second cumulative probability is bounded by $\zeta(1) < 0.578$ using Lemma \ref{lemm:bad event prob}, which is finite.

In the next step, we work with the stopping criterion where we do not have access to $\mathcal{F}$ rather a bigger feasible set $\Fest_t$. 

\textbf{Step 3: Working with the stopping criterion.} The stopping criterion implies that
\begin{align*}
    \beta(t,\delta) \geq \max_{\pol\in\trgoodest}\inf_{\blambda\in\Lambda_{\Fest_t}(\Hat{\bmu}_t)} \sumk N_{a,t}d(\Hat{\bmu}_{a,t},\blambda_a) \,,
\end{align*}
where the exact expression of $\beta(t,\delta)$ is defined in Theorem \ref{Chernoff's Stopping rule with unknown constraints.}. 

We use the C-tracking lemma (Lemma~\ref{lemm:Kaufmann tracking}) to express the stopping in terms of allocations 
\begin{align}\label{eq.20}
    \beta(t,\delta) \geq \max_{\pol\in\trgoodest}\inf_{\blambda\in\Lambda_{\Fest_t}(\Hat{\bmu}_t)} \sum_{s=1}^t\sumk \allocation_{a,s}d(\Hat{\bmu}_{a,t},\blambda_a) -(1+\sqrt{t})\numarm 
\end{align}
L-Lipschitz property of KL divergence gives
\begin{align}\label{eq.21}
    |d(\bmu_a,\blambda_a)-d(\Hat{\bmu}_{a,s},\blambda_a)| \leq L\sqrt{2\sigma^2\frac{g(s)}{N_{a,s}}}
\end{align}
Using this result in Equation \eqref{eq.20} we get

\begin{align*}
    \sum_{s=1}^t\sumk \allocation_{a,s}d(\Hat{\bmu}_{a,s},\blambda_a) &\geq \sum_{s=1}^t\sumk \allocation_{a,s}d(\Hat{\bmu}_{a},\blambda_a) - L\sqrt{2\sigma^2 \numarm t g(t)}\\
    &\geq \sum_{s=1}^t\sumk \allocation_{a,s}d(\Hat{\bmu}_{a,s},\blambda_a) - L\sqrt{2\sigma^2 \numarm t g(t)}-2L\sqrt{2\sigma^2 g(t)}(\numarm^2+2\sqrt{2\numarm t})\\
    &\geq \sum_{s=1}^t\sumk \allocation_{a,s}d(\Hat{\bmu}_{a,s},\blambda_a)-\mathcal{O}(\sqrt{t\log t})
\end{align*}
the penultimate inequality yields from using the Equation \eqref{eq.21}. Using this result in Equation \eqref{eq.20}
\begin{align}\label{eq. 22}
    \beta(t,\delta) \geq \max_{\pol\in\trgoodest}\inf_{\blambda\in\Lambda_{\Fest_t}(\Hat{\bmu}_t)} \sumk \allocation_{a,s}d(\Hat{\bmu}_{a,t},\blambda_a) -(1+\sqrt{t})\numarm - \mathcal{O}(\sqrt{t\log t})
\end{align}

\textbf{Step 4: LAGEX (Algorithm \ref{alg:lagex}) as an optimistic saddle point algorithm}
We follow the definition of approximate saddle point algorithm in \citep{NEURIPS2019_8d1de745}. LAGEX acts as an approximate saddle point algorithm if 
\begin{align}\label{eq.23}
    &\max_{\pol\in\trgoodest}\inf_{\blambda\in\Lambda_{\Fest_t}(\Hat{\bmu}_t)} \sum_{s=1}^t 
    \sumk \allocation_{s,a} d(\Hat{\bmu}_{a,s},\blambda_a)\notag \\ \geq &\max_{\allocation\in\Fest_t} \sumk \sum_{s=1}^t \allocation_a U_{a,s} - R_t^{\allocation} - \sum_{s=1}^t\sumk \allocation_{a,s}C_{a,s} + \lag_t^{\top}\Tilde{A}_t \allocation_t\,, 
\end{align}

where $U_{a,s} \defn \max \bigg\{ \frac{g(s)}{N_{a,s}},\max_{\xi\in[\alpha_{a,s},\beta_{a,s}]}d(\xi,\lambda_{a,s}) \bigg\}$, 
 $C_{a,s} \defn U_{a,s} - d(\Hat{\bmu}_{a,s},\lambda_{a,s}),$ 
and $R_t^{\allocation}$ is defined in Equation~\eqref{eq:allocation_regret}.

\textbf{Step 5: Bounds cumulative regret of players}
Algorithm \ref{alg:lagex} at each step solves a two player zero-sum game. First one is the allocation player who uses AdaGrad to maximize the inverse of  characteristic time function to find the optimal allocation. 

\textbf{Step 5a. $\blambda$-player's regret.}
\begin{align*}
    R_{\blambda}^{t} \defn \sum_{s=1}^t \sumk \allocation_{a,s}d(\Hat{\bmu}_{a,s},\lambda_{a,s}) -  \max_{\pol\in\trgoodest}\inf_{\blambda\in\Lambda_{\Fest_t}(\Hat{\bmu}_t)}\sum_{s=1}^t \sumk \allocation_{a,s}d(\Hat{\bmu}_{a,s},\blambda_a) \leq 0
\end{align*}
The last inequality holds because we take infimum over $\blambda$ in the perturbed alt-set. now let us prove that LAGEX is a optimistic saddle point algorithm.  From the definition of regret of the $\blambda$-player we get
\begin{align*}
    \max_{\pol\in\trgoodest}\inf_{\blambda\in\Lambda_{\Fest_t}(\Hat{\bmu}_t)} \sum_{s=1}^t \sumk \allocation_{a,s}d(\Hat{\bmu}_{a,s},\blambda_a)
    \geq \sum_{s=1}^t \sumk \allocation_{a,s}U_{a,s}-\sum_{s=1}^t \sumk\allocation_{a,s}C_{a,s} \,.
\end{align*}
Then we have from Equation \eqref{eq. 22} and Equation \eqref{eq.23}
\begin{align}\label{eq.final_beta_lb}
    \beta(t,\delta) &\geq \sum_{s=1}^t \sumk \allocation_{a,s}U_{a,s}-\sum_{s=1}^t \sumk\allocation_{a,s}C_{a,s} -(1+\sqrt{t})\numarm - \mathcal{O}(\sqrt{t\log t})\,,
\end{align}
\textbf{Step 5b. Allocation player's regret.}
\begin{align}\label{eq:allocation_regret}
	R_t^{\allocation} \defn \max_{\allocation\in\simp}\sum_{s=1}^t \sumk \allocation_a U_{a,s} - \sum_{s=1}^t \sumk\allocation_{a,s} U_{a,s}
\end{align}
We note that AdaGrad enjoys regret of order $R_t^{\allocation}\leq \mathcal{O}(\sqrt{Qt})$ where Q is an upper bound on the losses such that $Q\geq \max_{x,y\in[\bmu_{\mathrm{min}},\bmu_{\mathrm{max}}]}d(x,y)$.

Now, from the allocation player's regret, we have
\begin{align*}
    \inf_{\blambda\in\altset}\sum_{s=1}^t\sumk \allocation_{a,s}d(\Hat{\bmu}_{a,s},\lambda_s)\geq \max_{\allocation\in\Fest} \sum_{s=1}^t\sumk \allocation_a U_{a,s} - R_t^{\allocation} - \underset{T1}{\underbrace{\sum_{s=1}^t\sumk\allocation_{a,s}C_{a,s}}} +  \lag_t^{\top}\Tilde{A}_t \allocation_t\\
    \geq \max_{\allocation\in\Fest} \sum_{s=1}^t\sumk \allocation_a U_{a,s} - \mathcal{O}(\sqrt{Qt}) - \underset{T1}{\underbrace{\sum_{s=1}^t\sumk\allocation_{a,s}C_{a,s}}} +  \lag_t^{\top}\Tilde{A}_t \allocation_t
\end{align*}
which shows that LAGEX is a approximate saddle point algorithm with a slack $R_t^{\allocation} + \sum_{s=1}^t\sumk\allocation_{a,s}C_{a,s} - \lag_t\Tilde{A}_t \allocation_t$. 

\textbf{Step 5c. Bounding T1.} Now we have to ensure that $T1$ in the slack is bounded.

\begin{align}
    \sum_{s=\numarm+1}^t\sumk \allocation_{a,s}C_{a,s} &\leq \sum_{s=\numarm+1}^t\sumk\allocation_{a,s} \left(\frac{g(s)}{N_{a,s}} + \max_{\xi\in[\alpha_{a,s},\beta_{a,s}]}d(\xi,\lambda_{a,s})  - d(\Hat{\bmu}_{a,s},\lambda_{a,s})\right) \notag \\
     &\leq \sum_{s=\numarm+1}^t\sumk\allocation_{a,s}\bigg\{\frac{g(s)}{N_{a,s}}+L\sqrt{2\sigma^2 \frac{g(s)}{N_{a,s}}}\bigg\}\label{ineq: bound cas}\\
    &\leq g(t)\sum_{s=\numarm+1}^t\sumk \frac{\allocation_{a,s}}{N_{a,s}}+L\sqrt{2\sigma^2 g(t)}\sum_{s=\numarm+1}^t\sumk\frac{\allocation_{a,s}}{\sqrt{N_{a,s}}}\notag\\
    &\leq g(t)\left( \numarm^2+2\numarm\log\frac{t}{\numarm}\right)+ L\sqrt{2\sigma^2 g(t)}(\numarm^2+2\sqrt{2\numarm t})\notag\\
    &\leq \mathcal{O}(\sqrt{t\log t}) \,.\label{eq.Cas bound}
\end{align}
Equation~\eqref{ineq: bound cas} holds due to  L-Lipschitz property of KL under the good event $G_T$. Specifically, 
\begin{align*}
    &|d(\bmu_a,\blambda_a) - d(\Hat{\bmu}_{a,s},\blambda_a)|\leq L\sqrt{2\sigma^2\frac{g(s)}{N_{a,s}}}\,.
\end{align*}
\textbf{Step 6. From the two-player regrets to stopping time.} Finally, combining the results in Step 5, i.e. Equation \eqref{eq.final_beta_lb} and Equation \eqref{eq.Cas bound}, we get
\begin{align}
    \beta(t,\delta) &\geq \max_{\allocation\in\simp} \left(\sumk\sum_{s=1}^t \allocation_a d(\bmu_a,\lambda_{a,s})\right)-\mathcal{O}(\sqrt{Qt})- \mathcal{O}(\sqrt{t\log t})-(1+\sqrt{t})\numarm + \lag_t^{\top}\Tilde{A}_t \allocation_t \notag\\
    &\geq \max_{\allocation\in\simp} \sumk\sum_{s=1}^t \allocation_a d(\bmu_a,\lambda_{a,s}) -\mathcal{O}(\sqrt{Qt})- \mathcal{O}(\sqrt{t\log t})-(1+\sqrt{t})\numarm + \lag_t^{\top}\Tilde{A}_t \allocation_t \notag\\
    &\geq \max_{\allocation\in\simp} \sumk\sum_{s=1}^t \allocation_a d(\bmu_a,\lambda_{a,s}) -\mathcal{O}(\sqrt{Qt})- \mathcal{O}(\sqrt{t\log t})-(1+\sqrt{t})\numarm - \frac{\psi_t}{T_{\Fest_t}(\bmu)}\,.\label{eq:final_lb_beta}
\end{align}
The last inequality is true due to Lemma~\ref{lemm:constraint part upper bound}, i.e. $ - \lag_t^{\top}\Tilde{A}_t \allocation_t \leq \frac{\psi_t}{T_{\Fest_t}(\bmu)}$.

Now, we observe that 
\begin{align*}
    \max_{\allocation\in\simp}\sum_{s=1}^t \sumk \allocation_a d(\bmu_a,\lambda_{a,s})
    \geq t\max_{\allocation\in\simp} \max_{\pol\in\trgoodest} \inf_{\blambda\in\Lambda_{\Fest_t}(\Hat{\bmu}_t)} \sumk \allocation_a d(\bmu_a,\blambda_a) = tT_{\Fest_t}^{-1}(\bmu)
\end{align*}
Replacing this inequality in Equation~\eqref{eq:final_lb_beta}, we finally obtain
\begin{align*}
    t\leq T_{\Fest_t}(\bmu)\left(\beta(t,\delta) +R_t^{\allocation}+\mathcal{O}(\sqrt{t\log t}) \right) + \psi_t
\end{align*}

\textbf{Part (b) : Current recommended policy is wrong (not r-good).}
To get on with the proof for this part we will use similar argument as \citep{carlsson2024pure}. Though the argument was motivated by the work \citep{degenne2019nonasymptotic}. We define the event 

\ifdoublecol
\begin{align*}
	B_t \defn \bigg\{ \opt \defn \argmax_{\pol\in\trgoodest}\Hat{\bmu}_t^{\top}\pol  \notin \rgood\bigg\}
\end{align*}
\else
\begin{align*}
    B_t \defn \bigg\{ \opt \neq \argmax_{\pol\in\Fest_t}\Hat{\bmu}_t^{\top}\pol \bigg\}
\end{align*}
\fi
i.e the current recommendation policy is not correct which implies that the mean estimate or the constraint estimate has not been concentrated yet. If we define Chernoff's information function as $\mathrm{ch}(u,v) \defn \inf_{z\in\mathcal{D}}(d(u,x)+d(v,x))$. Therefore the current mean estimate will yield positive Chernoff's information since it has not been converged yet i.e $\exists \epsilon>0: \mathrm{ch}(\Hat{\bmu}_{a,t},\bmu_a)>\epsilon$. Consequently under the good event $G_T$ defined earlier
\begin{align*}
    \frac{g(t)}{N_{a,t}} \leq \epsilon
\end{align*}
since $\mathrm{ch}(\Hat{\bmu}_{a,t},\bmu_a)\leq d(\Hat{\bmu}_{a,t},\bmu_a)$. At time $s\in\mathbb{N}$, let $\pol'$ be an extreme point in $\Fest_s$ that is not the r-good optimal policy. But since it is an extreme point in $\Fest_s$ that shares $(\numarm -1)$ active constraints with $\opt$, it has to be an r-good optimal policy w.r.t some $\blambda\in\Lambda_{\Fest_s}(\Hat{\bmu}_s, \pol):\pol' =\argmax_{\pol\in\Pi_{\Fest_s}^r} \blambda^{\top}\pol$. So we again define the event $B_t$ as
\begin{align*}
    B_t \defn \bigg\{\blambda\in\Lambda_{\Fest_t}(\Hat{\bmu}_t):\pol' =\argmax_{\pol\in\trgoodest} \blambda^{\top}\pol \notin \rgood \bigg\}
\end{align*}
We again define $n_{\pol'}(t)$ be the number of steps when $\pol_s = \pol', s\in[t]$. Therefore
\begin{align}\label{eps_T}
    \epsilon = \sum_{s=1,B_s}\sumk \allocation_{a,s}d(\Hat{\bmu}_{a,s},\mu_a) \geq 	\sum_{\pol'\notin\rgood}\inf_{\blambda\in B_s}\sum_{s=1,\pol_s=\pol'}^t \sumk \allocation_{a,s}d(\Hat{\bmu}_{a,s},\mu_a) 
\end{align}

Now to break the RHS of the above inequality we go back to step 5 of the proof of part (a) where we showed LAGEX is an approximate saddle point algorithm. In this case the slack will be $x_t = R_{n_{\pol'}(t)}^{\allocation}+\sum_{s=1,\pol_s=\pol'}^t \sumk \allocation_{a,s}C_{a,s} - \lag_t\Tilde{A}_t\allocation_t$. Therefore we can write the RHS of Equation \eqref{eps_T} as
\begin{align}\label{eq.Decomp}
    &\sum_{\pol'\notin\rgood}\inf_{\blambda\in B_s}\sum_{s=1,\pol_s=\pol'}^t \sumk \allocation_{a,s}d(\Hat{\bmu}_{a,s},\mu_a) \notag \\
    \geq& \max_{\pol\in\Fest_s}  \underbrace{\sum_{s=1,\pol_s=\pol'}^t\sumk \allocation_{a,s} U_{a,s} - R_{n_{\pol'}(t)}^{\allocation}}_{T1} - \underbrace{\sum_{s=1,\pol_s=\pol'}^t\sumk\allocation_{a,s} C_{a,s}}_{T2} + \lag_t^{\top}\Tilde{A}_t\allocation_t\,.
\end{align}

We apply the same logic as in \citep{carlsson2024pure} and \citep{degenne2019nonasymptotic} that $\exists a'\in[\numarm]$ for which $U_{a',s}\geq \epsilon$. That means the term $T1$ grows at most linear with $n_{\pol'}(t)$. From the proof of part (a) it is clear that the term $T2 = \mathcal{O}\left(\sqrt{n_{\pol'}(t)\log n_{\pol'}(t)}\right) \leq \mathcal{O}(\sqrt{t\log t})$ and the allocation player regret is bounded by $R_{n_{\pol'}(t)} = \mathcal{O}(\sqrt{Q n_{\pol'}(t)}) \leq \mathcal{O}(\sqrt{Qt})$. That means the number of times the event $B_t$ occurs is upper bounded by $\mathcal{O}(\sqrt{t\log t})$. Now the extra term in Equation \eqref{eq.Decomp} appearing with term $T1$ and $T2$ induces same implication as part (a).

Then incorporating part (a) and (b) to get the upper bound on the expected stopping time asymptotically
\begin{align*}
    &\mathbb{E}[\tau_{\delta}]\leq T_{\Fest_t}(\bmu)\left(\beta(t,\delta) +\mathcal{O}\left(\sqrt{t\log t}\right) + \mathcal{O}\left(\sqrt{Qt}\right) \right) + 2\psi_t \, 
\end{align*}
as $\Fest_t \to\mathcal{F}$ and $\psi_t \to \SP$, we have the explicit upper bound

\begin{align}\label{eq:non-asymp_lagex}
	\mathbb{E}[\tau] \leq T_0 (\delta) + 2\SP + C\numarm
\end{align}
where $T_0 (\delta) \defn \max\left\{ t\in\mathbb{N}: t \leq T_{\cF}(\bmu)\left(\beta(t,\delta)+\mathcal{O}\left(\sqrt{t\log t}\right) + \mathcal{O}\left(\sqrt{Qt}\right) \right) \right\}$ and $C$ is a problem independent constant that appears due to the concentration guaranty to good event $G_t$ i.e concentration of mean vector and constraint matrix. Thus, dividing both sides by $\log(\frac{1}{\delta})$ and taking the limit $\delta \rightarrow 0$ we conclude the proof. Note that, we do not see the effect of not knowing the constraints in the asymptotic guaranty, though in non-asymptotic sense in Equation \eqref{eq:non-asymp_lagex}, we see an additive effect of the shadow price on the sample complexity.  
\end{proof}

\subsection{Applications to Existing Problems} 

 \textbf{End-of-time Knapsack.} We can model the BAI problem with end-of-time knapsack constraints as discussed in Section \ref{subsec:motivation}. In such a setting the shadow price comes out to be $ \SP \leq c$ i.e the maximum consumable resource. So if we were to implement LATS for this the asymptotic sample complexity upper bound will translate to $\alpha T_{\Fest}(\bmu)(1+c)$, the multiplicative part being the effect of the end of time knapsack constraint. In case of LAGEX the unknown knapsack constraint will leave a additive effect quantified by $2c$. Recently people have deviated from only devising a no-regret learner in BwK rather people are interested to also give good sub-optimal guaranties on constraint violation as well. We think algorithms like LAGEX will perform well if we translate this model to our setting since it has shown not only good sample complexity but also better constraint violation guaranies as well. \\

 \textbf{Fair BAI across subpopulations.} This problem is a direct consequence of our setting. The shadow price in this setting $\SP = \frac{\max_{i\in \numarm}\pol_i^{\star}}{\min_{i\in \numarm}\pol_i^{\star}}>1$ i.e the ratio between maximum and minimum non-zero weight in the recommended policy. Similar to the knapsack scenario, here also this ratio will appear as a extra cost of not knowing the fairness constraint in multiplicative way in case of LATS and gets added to the sample complexity upper bound of LAGEX.\\

 \textbf{Pure exploration with Fairness of exposure.} We can think of a problem where we want to select a pool of employees from different sub-sections of a whole population for a task. As we want to maximise the reward or utility of this selected group we also must also give fair exposure to all race or say gender. As discussed earlier in Section \ref{subsec:motivation}, a direct application of our algorithms LATS and LAGEX to use them in the problem of pure exploration with unknown constraints on fairness of exposure. The shadow price in such a setting would be $\SP = \frac{\max_{i,j\in[\numarm]}\left(\frac{1}{\mu_i}-\frac{1}{\mu_j}\right)}{\min_{i,j\in[\numarm]}\left(\frac{1}{\mu_i}-\frac{1}{\mu_j}\right)} = \frac{\max_{i,j\in[\numarm]}(\bmu_i-\bmu_j)}{\min_{i,j\in[\numarm]}(\bmu_i-\bmu_j)}\geq1$.\\

 \textbf{Thresholding bandits.} 
 The problem of Thresholding bandit is motivated from the safe dose finding problem in clinical trials, where one wants to identify the highest dose of a drug that is below a known safety level. From the translated optimisation problem in Section \ref{subsec:motivation} we easily find out the shadow price for this setting to be $\SP = \frac{\max_{i\in[\numarm]}(\pol-\btheta)^i}{\min_{i\in[\numarm]}(\pol-\btheta)^i}\geq 1$. This shadow price is similar to ours because the constraint structure is very similar. Our setting generalises thresholding bandit problem by giving the liberty of choosing different threshold levels for different support index of $\pol$. Similarly to other settings this shadow price will come as a price of handling different unknown thresholds for every arm as addition in case of LAGEX and as multiplication for LATS.\\

\textbf{Feasible arm selection.}
Feasible arm selection problem is motivated by the spirit of recommending an optimal arm which should satisfy a performance threshold. For example one might be interested to find a combination of food among a plethora of options which maximises the nutrient intake, rather the nutrient value of the food combination should exceed a threshold value. The structure of the optimisation problem for such a setting is discussed in detail in Section \ref{subsec:motivation}.  Then the shadow price comes out as $\SP = \frac{\tau - f_{\min}}{\tau - f_{\mathrm{max}}}\geq 1$ where $f \in \reals^{\mathrm{Supp}(\pol)}$ can be compared to a utility function. In our setting we will not have access to the true utility function rather we have to track it per step. This shadow price again get multiplied to the LATS sample complexity upper bound as a cost of not knowing the true utility of the arms, whereas we see a additive cost incurred in case of LAGEX.\newpage
\section{Constraint Violations During Exploration}
\subsection{Upper Bound on Constraint Violation}
In a linear programming problem we say constraint is violated if the chosen allocation fails to satisfy any of the true linear constraints. In other words when the event $A\allocation_t \geq 0$. We start with the optimization problem relaxed with slack if the constraints were known,
\begin{align*}
    &\max_{\pol\in\rgoodest} \bmu^{\top} \pol\\
    \text{such that, }& A\pol + \Gamma \leq 0
\end{align*}
where, $\Gamma$ is the slack. 
Cumulative violation of constraints can be expressed as $\mathcal{V}_t = \sum_{s=1}^t \max_{i\in[K]} \left[A^i\allocation_t \right]_{+}$ where $[z]_{+} = \max\{z,0\}$. Then, at any time step $t\in[T]$, instantaneous violation is given by, $v_t = \max_{i\in[k]} \left[A^i\allocation_t \right]_{+}$. Since, $A$ is feasible, we define the game value as,
$\eta = \max_{\allocation\in\mathcal{F}} \max_{i\in[K]} A^i\allocation \leq -\Gamma$ and $ \eta = \max_{\allocation\in\mathcal{F}}\max_{i\in[K]} A^i\allocation  \geq \min_{\Tilde{A}\in\mathcal{C}_A^t}\max_{\allocation\in \Fest_t} \max_{i\in[K]}\Tilde{A}_t^i \allocation_t = \Tilde{A}_t^{i_t}\allocation_t$, holds due to pessimistic estimate of $A$.

Again, we define, $i_{\text{min}}(\allocation) = \arg \min_{i\in[K]}\Tilde{A}^i \allocation$.
\begin{align*}
    \max_{i\in[K]} \left[A^i\allocation_t \right] &= \max_{i\in[K]} (A^i - \Tilde{A}_t^i)\allocation_t +\max_{i\in[K]} \Tilde{A}_t^i \allocation_t \\
    &\leq \max_{i\in[K]} ||(A^i - \Tilde{A}_t^i)||_{\Sigma_t}||\allocation_t||_{\Sigma_t^{-1}} +\max_{i\in[K]} \Tilde{A}_t^i \allocation_t\\
    &\leq f(t,\delta)||\allocation_t||_{\Sigma_t^{-1}} + \max_{i\in[K]} \Tilde{A}_t^i \allocation_t 
    \leq \rho(t,\delta) - \Gamma
\end{align*}
where the last inequality holds because $\max_{i\in[K]} \Tilde{A}_t^i \allocation_t \leq \max_{i\in[K]} A^i \allocation_t \leq -\Gamma$


Let stopping time is denoted by $\tau_{\delta} < T$ following the expression from the Stopping criterion section. Then expected cumulative constraint violation is denoted by,
\begin{align*}
	\mathbf{E}\big[\frac{\mathcal{V}_{\tau_{\delta}}}{\tau_{\delta}}\big] = \sum_{t\leq\mathbf{E}[\tau_{\delta}]}s_t \leq\sum_{t\leq\mathbf{E}[\tau_{\delta}]}[\rho(t,\delta)-\Gamma]_{+}
	=&\sum_{t\leq\mathbf{E}[\tau_{\delta}]}\{\rho(t,\delta)-\Gamma\}\, \hspace{0.2cm} \text{because }\rho(t,\delta)-\Gamma \hspace{0.1cm}\text{ is positive.}\\	
	\leq& \sqrt{2\numconst f^2(\mathbf{E}[\tau_{\delta}],\delta)\log \left( 1+\frac{1+\mathbf{E}[\tau_{\delta}]}{\numconst}\right)} -\Gamma 
\end{align*}
The last inequality is a direct consequence of Lemma \ref{lemm:Concentration of the infinite norm}. Thus the expected constraint violation is of order $\Tilde{\mathcal{O}}\left( t\sqrt{d} \right)$, ignoring the lower order dependence on $t$.

\subsection{Experimental Results on Constraint Violation}\label{sec:additional exp}
 We compare the constraint violation i.e the number of times $A\allocation_t >0$, where $A$ is the true constraint matrix. 

\begin{figure}[h]
 \centering
\begin{minipage}{0.48\textwidth}
  \centering
    \includegraphics[width=0.7\textwidth]{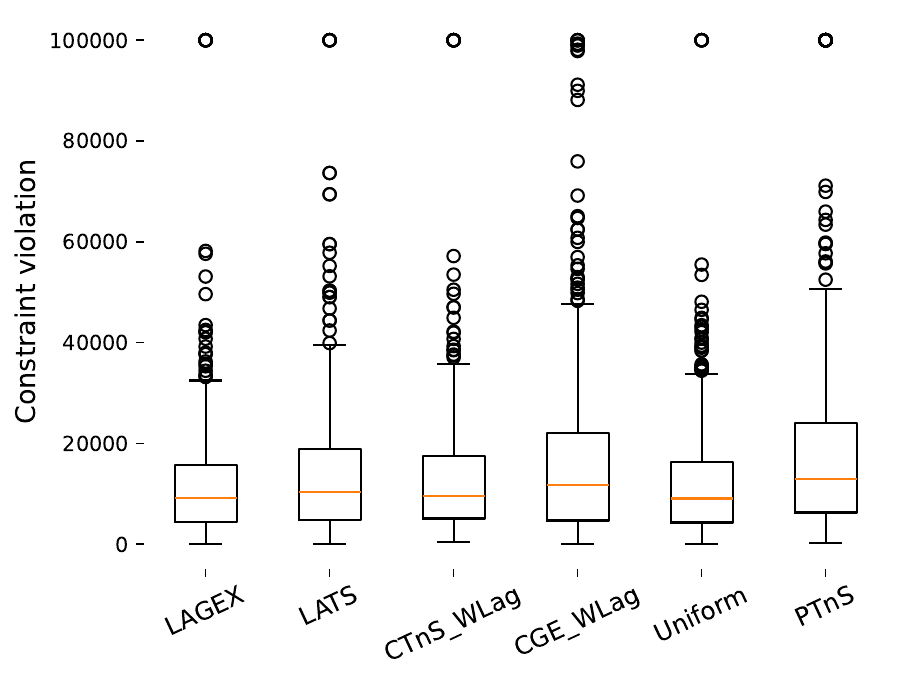}
    \caption{Constraint violation (median$\pm$std.) algorithms for \textbf{hard environment in setup 1}.}\label{fig:constraint_violation_hard}
\end{minipage}\hfill
\begin{minipage}{0.48\textwidth}
    \centering
    \includegraphics[width=0.7\textwidth]{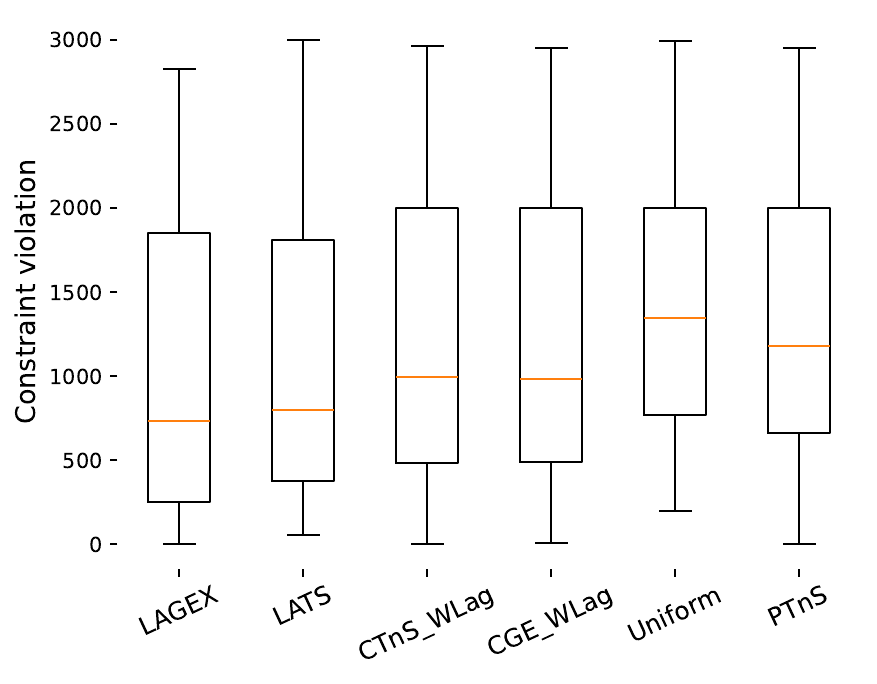}
    \caption{Constraint violation (median$\pm$std.) of algorithms for \textbf{easy environment in setup 1}.}\label{fig:Constraint_violation_easy}
\end{minipage}
\end{figure}

\begin{figure}[t!]
	\centering
	\begin{minipage}{0.48\textwidth}
		\centering
		\includegraphics[width=0.7\textwidth]{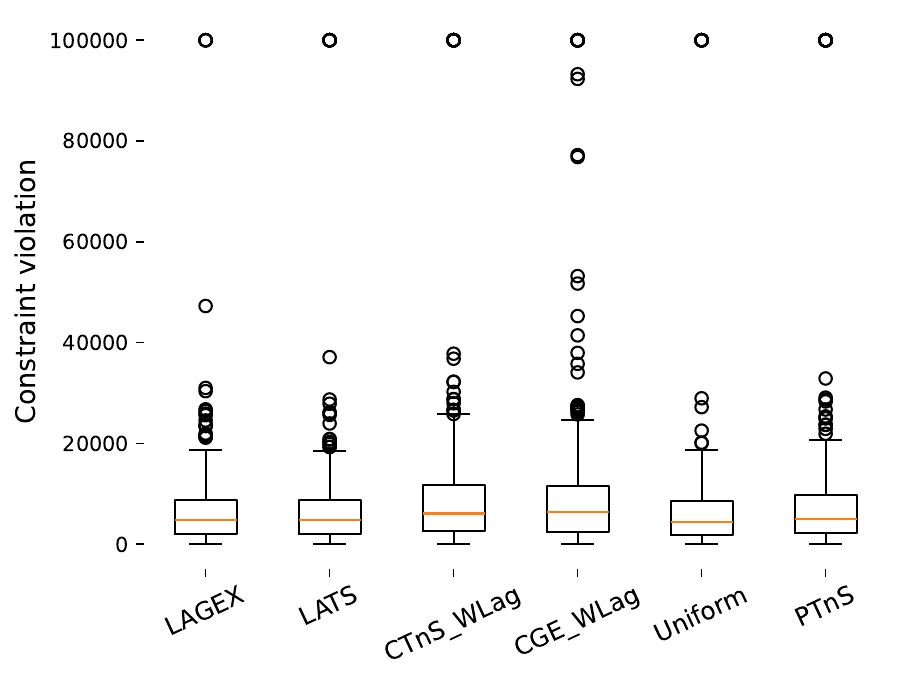}
		\caption{Constraint violation (median$\pm$std.) algorithms for \textbf{hard environment in setup 2}.}\label{fig:constraint_violation_emil_hard}
	\end{minipage}\hfill
	\begin{minipage}{0.48\textwidth}
		\centering
		\includegraphics[width=0.7\textwidth]{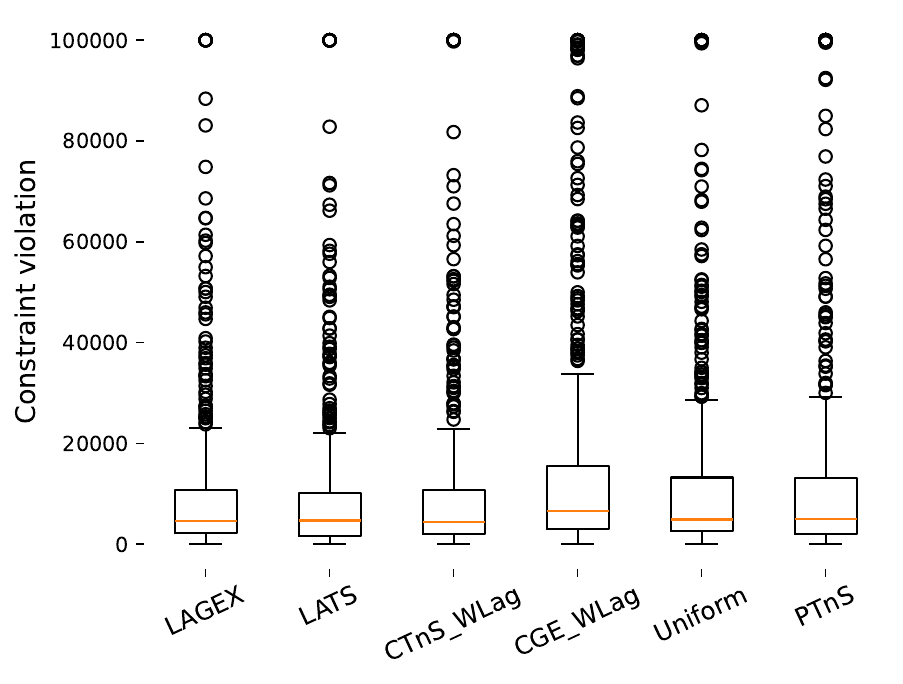}
		\caption{Constraint violation (median$\pm$std.) of algorithms for \textbf{easy environment in setup 2}.}\label{fig:Constraint_violation_emil_easy}
	\end{minipage}
	\begin{minipage}{0.48\textwidth}
	\centering
	\includegraphics[width=0.7\textwidth]{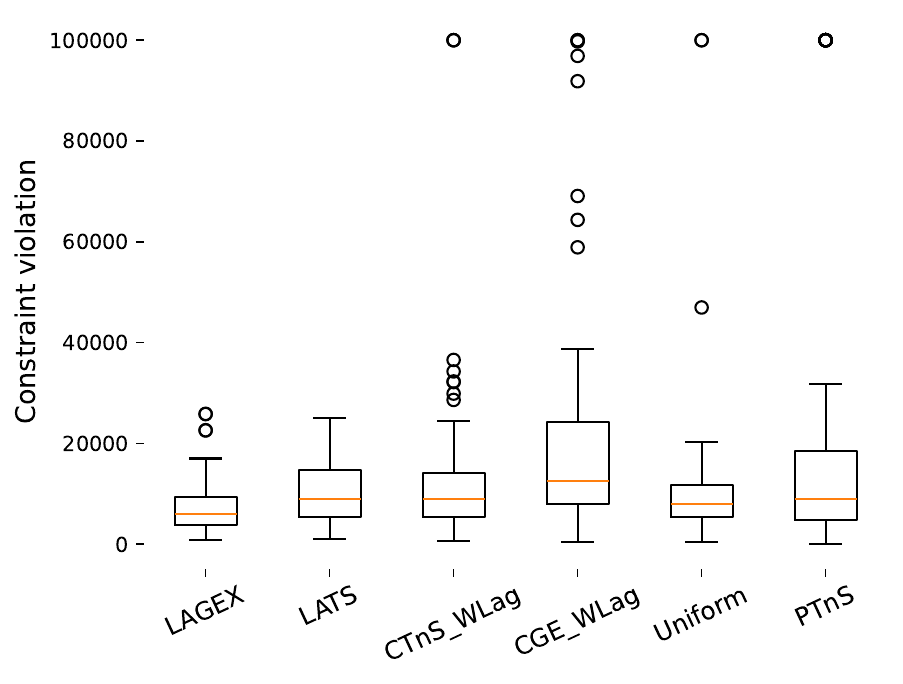}
	\caption{Constraint violation (median$\pm$std.) of algorithms for \textbf{IMDB environment}.}\label{fig:Constraint_violation_imdb}
\end{minipage}
\end{figure}

\textbf{Observation: LAGEX is the ``safest" among the existing baselines.} 

\textbf{Setup 1.} From Figure \ref{fig:constraint_violation_hard} unlike LAGEX,  we see LATS shows comparable amount of constraint violation with Uniform sampling, whereas in Figure \ref{fig:Constraint_violation_easy} we prominently observe LATS and LAGEX outperforming other algorithms in safety.

\textbf{Setup 2.} In the \textbf{hard env} as per \cite{carlsson2024pure} in Figure \ref{fig:constraint_violation_emil_hard} we observe both LAGEX and LATS show constraints violation as low as Uniform sampling outperforming other algorithms, but in the \textbf{easy env} in Figure \ref{fig:Constraint_violation_emil_easy} we see all the algorithms except CGE without Lagrangian relaxation shows more or less same amount of safety violation while choosing allocation.

\textbf{IMDB env.} During the experiment with IMDB-50K dataset wee see that LAGEX proves to be the safest among all other algorithms with a distinguishably low safety violations. Though again LATS shows comparable violation with Uniform sampling. 

Therefore, we see that across multiple environments LAGEX is the "safest" out of all the competing algorithms.  

\section{Experimental Details}\label{sec:Exp details}
\textbf{Computation resources.} We run the algorithms on a 64-bit 13th Gen Intel® Core™ i7-1370P × 20 processor machine with 32GB ram with a disc capability of 1TB and graphics Mesa Intel® Graphics (RPL-P). The hardware model is HP EliteBook 840 14 inch G10 Notebook PC. 

Throughout this paper we have set $r=0.01$ and $\beta(t,\delta) = \log\frac{1+\log\log t}{\delta}$ for all the experiments.  For CGE we have used $g(t) = \log t$. Each plot has been generated over 500 random seeds. 

\textbf{Baseline algorithms.}We compare LAGEX and LATS with the two algorithms under known constraints, i.e. CTnS and CGE~\citep{carlsson2024pure}. Also, to understand the utility of Lagrangian relaxation, we implement versions of CTnS and CGE with estimated constraints. In these variants, we solve the constrained optimisation problems without Lagrangian relaxation but with estimated constraints. We also compare with PTnS (Projected Track and Stop), a variant of TnS, where the algorithm computes the allocation by solving the unconstrained BAI optimisation problem and projects this allocation in the feasible set as necessary.

\newpage
\newpage
\section{Technical Results: New and Known}
In this section, we will devise some technical lemma using the help of standard text on online linear regression to ensure the convergence of unknown constraints. We specifically give the expression of the radius of confidence ellipsoid mentioned in the main text in Equation \ref{def:conf_ellipse}. We then prove an upper bound on the \textit{bad event} i.e when the constraint matrix is not concentrated around the true matrix.  We also acknowledge some known theoretical results from BAI and pure exploration literature that are used in this work.

\subsection{Concentration Lemma for Constraints}
Here, we want to get concentration on the deviation of the pessimistic estimate of the constraint matrix from the actual one  quantified by $\| (\tilde{A}-A)\allocation \|_{\infty}$, it becomes very crucial to prove upper bounds on sample complexity of our proposed algorithms. The following lemma ensures the concentration of the constraint matrix. 

\begin{lemma}\label{lemm:Concentration of the infinite norm}
    For the pessimistic estimate $\tilde{A}$ of $A$, the following holds
    \begin{enumerate}
        \item $|(\tilde{A}^i - A^i)\allocation|\leq\rho(t,\delta) $ where $\rho(t,\delta) \defn f(t,\delta) \|\allocation\|_{\Sigma_t^{-1}}$
        \item $\sum_{s=1}^t \|\allocation_s\|_{\Sigma_s^{-1}}^2 \leq 2\numconst\log\left( 1+\frac{1+t}{\numconst}\right)$
        \item $\sum_{s=1}^t \rho(s,\delta) \leq t\sqrt{2\numconst f^2(t,\delta)\log \left( 1+\frac{1+t}{\numconst}\right)}$
    \end{enumerate}
\end{lemma}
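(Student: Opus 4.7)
The plan is to split the three claims cleanly and attack each with a standard linear-regression/linear-bandit tool, relying on the design matrix $\Sigma_t = v\mathbf{I}_K + \sum_{s=1}^t \ba_s\ba_s^\top$ introduced in Section~\ref{sec:Estimates} and the confidence ellipsoid $\mathcal{C}_t$ from Equation~\eqref{def:conf_ellipse}. The three statements are, respectively, a per-constraint Cauchy--Schwarz bound, an elliptical-potential inequality, and a trivial time-summation that chains them.

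For claim~1, I would simply write $(\tilde{\bA}^i - \bA^i)\allocation = \langle \tilde{\bA}^i - \bA^i, \allocation\rangle$ and apply the Cauchy--Schwarz inequality in the bilinear form induced by $\Sigma_t$ and $\Sigma_t^{-1}$, obtaining
\begin{equation*}
\bigl|(\tilde{\bA}^i - \bA^i)\allocation\bigr| \;\leq\; \bigl\|\tilde{\bA}^i - \bA^i\bigr\|_{\Sigma_t}\,\bigl\|\allocation\bigr\|_{\Sigma_t^{-1}}.
\end{equation*}
The first factor is controlled because both $\tilde{\bA}^i$ and $\bA^i$ belong to the ellipsoid $\mathcal{C}_t$ with probability at least $1-\delta$ (the construction in Section~\ref{sec:Estimates} chooses $\tilde{\bA}$ as the pessimistic minimiser inside the ellipsoid; the true $\bA$ sits there under the standard self-normalised concentration inequality of Abbasi-Yadkori et al.). Hence $\|\tilde{\bA}^i - \bA^i\|_{\Sigma_t}\leq f(t,\delta)$ up to the constant absorbed into the definition of $f$, giving the desired $\rho(t,\delta)$.

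Claim~2 is the standard elliptical-potential lemma. The plan is to relate $\|\allocation_s\|_{\Sigma_s^{-1}}^2$ to the log-determinant increment via the matrix-determinant identity $\det(\Sigma_s) = \det(\Sigma_{s-1})\bigl(1+\|\allocation_s\|_{\Sigma_{s-1}^{-1}}^2\bigr)$, use $\ln(1+x)\geq x/2$ for $x\in[0,1]$, and telescope, finally invoking the AM--GM bound $\ln\det(\Sigma_t/v) \leq \numconst \ln(1+(1+t)/\numconst)$ on the trace of $\Sigma_t$. Since the sample allocations satisfy $\|\allocation_s\|_2\leq 1$, the standard argument (Lattimore and Szepesvári, Lemma~19.4) applies verbatim, yielding the bound $2\numconst\log(1+(1+t)/\numconst)$.

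Claim~3 is then a one-line consequence: since claim~2 upper bounds the entire sum of squares, each individual term obeys $\|\allocation_s\|_{\Sigma_s^{-1}}\leq \sqrt{2\numconst\log(1+(1+t)/\numconst)}$, and monotonicity of $f(\cdot,\delta)$ gives $\rho(s,\delta)\leq f(t,\delta)\sqrt{2\numconst\log(1+(1+t)/\numconst)}$ for all $s\leq t$; summing $t$ such terms yields the stated bound. The main obstacle is not the arithmetic but justifying the confidence ellipsoid rigorously for the multi-row matrix estimator $\hat{\bA}_t$ under Assumption~\ref{ass:distrubutional}(ii); this requires applying the self-normalised martingale concentration separately to each of the $\numconst$ rows and taking a union bound, which is what the factor $\log(K/\delta)$ inside $f(t,\delta)$ is absorbing. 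Once that measurability/union-bound step is in place, the three inequalities follow mechanically.
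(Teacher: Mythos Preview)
Your proposal is correct and essentially mirrors the paper's own proof. Claims~1 and~2 are handled identically: the paper also uses the Cauchy--Schwarz inequality in the $\Sigma_t$-norm together with the confidence-ellipsoid radius for the first, and cites Abbasi-Yadkori et al.\ for the elliptical-potential bound $\sum_{s=1}^t\|\allocation_s\|_{\Sigma_s^{-1}}^2 \le 2\log\det\Sigma_{t+1}\le 2\numconst\log(1+(1+t)/\numconst)$ for the second.

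The only difference is in claim~3. You bound each term individually via $\|\allocation_s\|_{\Sigma_s^{-1}}\le \sqrt{2\numconst\log(1+(1+t)/\numconst)}$ and sum $t$ copies. The paper instead applies Cauchy--Schwarz to the whole sum, writing $\sum_{s=1}^t\rho(s,\delta)\le \sqrt{t}\sqrt{\sum_{s=1}^t\rho^2(s,\delta)}$ and then bounding $\sum_s\rho^2(s,\delta)\le f^2(t,\delta)\sum_s\|\allocation_s\|_{\Sigma_s^{-1}}^2$ using monotonicity of $f$. The paper's route in principle yields the tighter $\sqrt{t}$ scaling rather than your $t$, but since the lemma statement only records the $t$ factor, your termwise argument reaches the claimed bound more directly and with less room for slip-ups.
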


\begin{proof}
The first result gives control on the deviations $(\tilde{A} - A)\allocation$ for $A\in\mathcal{C}_t^A (\delta)$. Then $\forall i\in[\numconst]$
\begin{align*}
    |(\tilde{A}^i - A^i)\allocation| \leq |(\tilde{A}_t^i - A^i)\allocation| \leq 2\sup_{A\in\mathcal{C}_t^A (\delta)} \|\tilde{A}_t^i - A^i \|_{\Sigma_t}\|\allocation\|_{\Sigma_t^{-1}} \leq f(t,\delta) \|\allocation\|_{\Sigma_t^{-1}} \leq \rho(t,\delta) 
\end{align*}
here, we define $\rho(t,\delta) \defn f(t,\delta) \|\allocation\|_{\Sigma_t^{-1}}$. The penultimate inequality follows from the definition of the confidence set defined in Equation \ref{def:conf_ellipse}. Now we want to derive an explicit expression of this upper bound. It is natural to use the concentration of the gram matrix $\Sigma_t$ over time. We refer to \cite{NIPS2011_e1d5be1c} for the control over the behaviour of $\Sigma_t$ and we directly get the second as
    \begin{align*}
        \sum_{s=1}^t \|\allocation_s\|_{\Sigma_t^{-1}}^2 \leq 2\log\det\Sigma_{t+1}\leq 2\numconst\log\left( 1+\frac{1+t}{\numconst}\right)
    \end{align*}
Refer \cite{NIPS2011_e1d5be1c} for the context. Now we have to control the cumulative deviation because later on when we define the bad event based on this concentration we will need to know the cumulative behavior of $\rho(t,\delta)$. 

Then for an arbitrary sequence of actions $\{ \allocation_s\}_{s\in[T]}$ 
    \begin{align*}
        \sum_{s=1}^t \rho(s,\delta) \leq t\sqrt{\sum_{s=1}^t \rho^2 (s,\delta)}\leq \sqrt{2\numconst tf^2(t,\delta) \log\left( 1+\frac{1+t}{\numconst}\right)}
    \end{align*}
    where, $\sum_{s=1}^t \rho^2(s,\delta) \leq 2\numconst tf^2(t,\delta) \left( 1+\frac{1+t}{\numconst}\right)$ using result 2 of this lemma. This holds because as we have already stated $\{f(s,\delta)\}_{s\in[T]}$ is a non-decreasing sequence of function and $f(t,\delta)$ is the maximum possible value in the set i.e $\sum_{s=1}^t f^2(s,\delta) \leq tf^2(t,\delta)$
    
\end{proof}

Now we proceed to state an upper bound on the cumulative probability of the \textit{bad event} i.e the event $|(\tilde{A}_t - A)\allocation|_{\infty}>|(\tilde{A}_t^i - A^i)\allocation| > \rho(t,\delta)$.

\begin{lemma}\label{lemm:bad event prob}
    The cumulative probability of the bad event till time t<T, 
    \begin{align*}
        \sum_{s=1}^t \mathbb{P}\left( \|(\tilde{A}_t - A)\allocation\|_{\infty}>  \rho(t,\delta)\right) \leq \zeta(1)
    \end{align*}
where $\zeta(.)$ is the Euler-Riemann zeta function.
\end{lemma}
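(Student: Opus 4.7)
The plan is to establish that the bad event $\{\|(\tilde{\bA}_t-\bA)\allocation\|_\infty > \rho(t,\delta)\}$ is contained in the event $\{\bA \notin \mathcal{C}_t\}$ where $\mathcal{C}_t$ is the confidence ellipsoid defined in Equation~\eqref{def:conf_ellipse}. This containment is an immediate consequence of item~1 of Lemma~\ref{lemm:Concentration of the infinite norm}: whenever $\bA \in \mathcal{C}_t$ (so that every true row $\bA^i$ lies in the Mahalanobis ball of radius $f(t,\delta)$ around $\hat{\bA}^i_t$), the deviation $|(\tilde{\bA}_t^i-\bA^i)\allocation|$ is bounded by $\rho(t,\delta)$ coordinate-wise, hence so is the $\ell_\infty$ norm. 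Therefore it suffices to control $\mathbb{P}(\bA \notin \mathcal{C}_t)$.

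Next, I would invoke the self-normalized concentration inequality for sub-Gaussian linear regression (in the style of Abbasi-Yadkori, Pál, Szepesvári 2011), which is available because Assumption~\ref{ass:distrubutional}(ii) guarantees that each constraint row follows a sub-Gaussian $K$-parameter exponential family. The confidence radius $f(t,\delta) = 1+\sqrt{\tfrac{1}{2}\log\tfrac{K}{\delta}+\tfrac{1}{4}\log\det\Sigma_t}$ is exactly of the form produced by that inequality for a single row. Applying it to each of the $N$ constraint rows and taking a union bound gives a per-row failure probability tuned so that, after a union bound over $i\in[N]$, the aggregate per-step failure probability decays like $1/s^{2}$ (this is the standard way to absorb the time-uniform dependence into a summable tail; the effective confidence parameter used inside $f$ at step $s$ is of the order $\delta/s^2$).

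Finally, summing the per-step bound over $s=1,\ldots,t$ yields $\sum_{s=1}^{t}\mathbb{P}(\text{bad event}) \leq \sum_{s=1}^{\infty} 1/s^{2}$, which is a finite constant. The stated bound in terms of $\zeta(\cdot)$ then follows (with the caveat that the harmonic sum $\zeta(1)$ is divergent, so the intended constant should be $\zeta(2)=\pi^2/6$ up to the multiplicative factor $N\delta$; this looks like a typographical slip and the proof will yield the corrected finite constant).

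The main obstacle is matching the precise functional form of $f(t,\delta)$ to the self-normalized bound and making the schedule $\delta\mapsto\delta/s^{2}$ explicit inside $f$, since $f$ is only stated with a single argument $\delta$. Once that schedule is written out and the union bound over the $N$ rows is performed cleanly, the remaining step is just $\sum_s 1/s^{2}<\infty$, which is routine.
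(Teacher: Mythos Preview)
Your approach is essentially the same as the paper's: reduce the bad event to failure of the confidence ellipsoid via item~1 of Lemma~\ref{lemm:Concentration of the infinite norm}, then apply the self-normalized sub-Gaussian concentration (Abbasi-Yadkori et al.) with a time-dependent confidence schedule to make the per-step failure probability summable. The paper in fact makes the cruder choice $\delta \leftarrow 1/s$ (i.e.\ $u=\log s$ in their notation), obtaining $\sum_{s\geq 1} 1/s = \zeta(1)$, and then asserts this equals the Euler--Mascheroni constant $\gamma\approx 0.577$ via a ``Cauchy principal value'' argument---which is not correct, since the harmonic series diverges. Your instinct to use a $1/s^{2}$ schedule and land on $\zeta(2)=\pi^{2}/6$ is the right fix and yields a genuinely finite bound; your identification of the $\zeta(1)$ statement as a slip is accurate. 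So your proof is the same skeleton as the paper's but with the summability handled correctly.
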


\begin{proof}
    We already have stated in the main paper $$f(t,\delta) = 1+\sqrt{\frac{1}{2}\log\frac{K}{\delta}+\frac{1}{4}\log\det\Sigma_t} \leq 1+\sqrt{\frac{1}{2}\log\frac{K}{\delta}+\frac{\numconst}{4}\log\left(1+\frac{t}{\numconst}\right)} \defn f'(t,\delta) $$ by Lemma \ref{lemm:Concentration of the infinite norm} and we define $\rho'(t,\delta) \defn f'(t,\delta)\|\allocation\|_{\Sigma_t^{-1}}$. It implies $\mathbb{P}\left( \exists t\in[T] \|(\tilde{A}_t - A)\allocation\|_{\infty}>  \rho'(t,\delta) \right) \leq \mathbb{P}\left( \exists t\in[T] \|(\tilde{A}_t - A)\allocation\|_{\infty}>  \rho(t,\delta) \right) \leq \delta$. Now if we replace $\log\frac{1}{\delta}$ by u, we can write $$\mathbb{P}\left( \exists t\in[T],\forall i\in[\numconst] |  \|\tilde{A}_t^i - A^i \|_{\Sigma_t} > 1+\sqrt{\frac{1}{2}\log\numarm+ \frac{u}{2}+\frac{\numconst}{4}\log\left(1+\frac{t}{\numconst}\right)} \right) \leq \exp(-u).$$ We can directly assign $\log t$ as the simplest and natural choice for $u$, since $\sum_{s=1}^{\infty}\frac{1}{t} = \zeta(1)$, $\zeta(.)$ being the Euler-Riemann zeta function. Though this integral is improper, it has a Cauchy principal value as  Euler–Mascheroni constant which means $\sum_{s=1}^{\infty}\frac{1}{t} \approx \gamma = 0.577 $. So we assign $u = \log t$ 
    \begin{align*}
        \sum_{s=1}^t \mathbb{P}\left( \|(\tilde{A}_t - A)\allocation\|_{\infty}>  \rho(t,\delta)\right) \leq \sum_{s=1}^t \frac{1}{t} \leq \sum_{s=1}^{\infty} \frac{1}{t} \leq \zeta(1) \approx 0.577
    \end{align*}
\end{proof}

\begin{lemma}\label{lemm:Dantzig lemma}
    Let $\bar{\bmu} \geq 0$ be a vector, and consider the set $Q_{\bar{\bmu}}=\{\bmu \geq 0 \mid q(\bmu) \geq q(\bar{\bmu})\}$. Let Slater condition hold. Then, the set $Q_{\bar{\bmu}}$ is bounded and, in particular, we have
    $\|\bmu\|_1 \leq \frac{1}{\gamma}(f(\bar{x})-q(\bar{\bmu})) , \forall\bmu \in Q_{\bar{\bmu}}$
    where, $\gamma=\min _{1 \leq j \leq m}\left\{-g_j(\bar{x})\right\}$ and $\bar{x}$ is a Slater vector. f(.) and q(.) respectively denotes the primal and the dual function of the optimization problem.
\end{lemma}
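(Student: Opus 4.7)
\textbf{Proof proposal for Lemma \ref{lemm:Dantzig lemma}.} The plan is to exploit the definition of the dual function $q(\bmu) = \inf_x \{f(x) + \bmu^\top g(x)\}$ as an infimum, and evaluate this infimum at the Slater vector $\bar{x}$ to obtain a one-line bound that decouples $\|\bmu\|_1$ from $q(\bmu)$. The key observation is that infimum-over-$x$ can always be upper bounded by the value at any particular feasible candidate, and the Slater point is precisely the candidate engineered so that $-g_j(\bar{x}) \geq \gamma > 0$ for every coordinate $j$.

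First I would write, for any $\bmu \geq 0$,
\begin{align*}
q(\bmu) \;\leq\; f(\bar{x}) + \bmu^\top g(\bar{x}) \;=\; f(\bar{x}) - \sum_{j=1}^m \bmu_j \, \bigl(-g_j(\bar{x})\bigr).
\end{align*}
Next, since $\bmu_j \geq 0$ and $-g_j(\bar{x}) \geq \gamma$ componentwise, the sum on the right is lower bounded by $\gamma \sum_j \bmu_j = \gamma \|\bmu\|_1$ (using $\bmu \geq 0$ to replace the $\ell_1$ norm by the sum). This yields
\begin{align*}
q(\bmu) \;\leq\; f(\bar{x}) - \gamma\, \|\bmu\|_1 .
\end{align*}
Finally, for any $\bmu \in Q_{\bar{\bmu}}$, the defining inequality $q(\bmu) \geq q(\bar{\bmu})$ chains together with the previous display to give $q(\bar{\bmu}) \leq f(\bar{x}) - \gamma \|\bmu\|_1$. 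Rearranging and dividing by $\gamma > 0$ (which is strictly positive precisely because of Slater's condition) produces the claimed inequality
\begin{align*}
\|\bmu\|_1 \;\leq\; \frac{1}{\gamma}\bigl(f(\bar{x}) - q(\bar{\bmu})\bigr),
\end{align*}
and boundedness of $Q_{\bar{\bmu}}$ follows immediately since the right-hand side is a finite constant independent of $\bmu$.

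There is essentially no obstacle here; the only subtlety is making sure that $\gamma > 0$ is invoked correctly (so the division is legal and the set is genuinely bounded rather than merely bounded below), which is exactly what the Slater assumption provides through the strict feasibility $g_j(\bar{x}) < 0$. This lemma is a classical result (the statement is attributable to Uzawa / Hiriart-Urruty), and the role it plays in Theorem~\ref{thm:Existence of strong duality and bound on Lagrangian multiplier} is to bound $\|\lag^\star\|_1$ by plugging in $\bar{\bmu} = \mathbf{0}$ so that $q(\bar{\bmu})$ equals the unconstrained inner value and $f(\bar{x})$ equals the primal evaluated at the Slater allocation $\allocation^\star$; the rest of the argument there is just interpretation.
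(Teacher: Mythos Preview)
Your proof is correct and is the standard argument for this classical result. The paper itself does not supply a proof of Lemma~\ref{lemm:Dantzig lemma}; it is stated without proof in the appendix section on ``Technical results and known tools'' and is invoked as a known fact (the Uzawa/Nedi\'c--Ozdaglar bound on dual level sets) in Step~3 of the proof of Theorem~\ref{thm:Existence of strong duality and bound on Lagrangian multiplier}. So there is nothing to compare against: your derivation via $q(\bmu) \leq f(\bar{x}) + \bmu^\top g(\bar{x}) \leq f(\bar{x}) - \gamma\|\bmu\|_1$ followed by the level-set inequality $q(\bmu) \geq q(\bar{\bmu})$ is exactly the canonical proof and fills the gap cleanly.
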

Using the aforementioned lemmas we give a bound for the part in the inverse of the characteristic time function that gets added for Lagrangian relaxation which eventually helps up landing on a unique formulation of sample complexity upper bounds of our proposed algorithm.
\begin{lemma}\label{lemm:constraint part upper bound} 
For any $\lag\in\cL$ and $\allocation\in\simp$, $-\lag^{\top} \tilde{A} \allocation \leq \mathcal{D}(\bmu,\allocation,\Fest)\psi$

where, $\psi = \frac{||(\tilde{A} - A)\allocation||_{\infty} + \max_{i\in[1,N]} (-A\allocation)}{\min_{i \in [1,N]}(-\tilde{A}\allocation)}$
\end{lemma}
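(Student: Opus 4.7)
My plan is to establish the inequality via a Hölder-type bound combined with the $\ell_1$-bound on the Lagrange multipliers from Theorem~\ref{thm:Existence of strong duality and bound on Lagrangian multiplier}. First, I would use the non-negativity of $\lag$ (since $\lag \in \cL \subseteq \reals_+^{\numconst}$) together with the feasibility $\Tilde{\bA}\allocation \leq \mathbf{0}$ for $\allocation \in \Fest$, so that $-\Tilde{\bA}^i\allocation \geq 0$ for every $i\in [\numconst]$. This lets me write
\[
-\lag^\top \Tilde{\bA}\allocation = \sum_{i=1}^{\numconst} \lag_i\bigl(-\Tilde{\bA}^i\allocation\bigr) \leq \Vert \lag\Vert_1 \cdot \max_{i\in[\numconst]} \bigl(-\Tilde{\bA}^i\allocation\bigr),
\]
which is essentially Hölder's inequality specialised to the non-negative orthant.

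Next, I would decompose the max term by adding and subtracting $\bA^i\allocation$: $-\Tilde{\bA}^i\allocation = -\bA^i\allocation + (\bA^i - \Tilde{\bA}^i)\allocation$. Taking the maximum over $i$ and using the triangle inequality gives
\[
\max_i \bigl(-\Tilde{\bA}^i\allocation\bigr) \leq \max_i \bigl(-\bA^i\allocation\bigr) + \Vert (\Tilde{\bA}-\bA)\allocation\Vert_{\infty}.
\]
This is the numerator of $\psi$.

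For the denominator, I would invoke the norm bound on $\lag$ already proven in Theorem~\ref{thm:Existence of strong duality and bound on Lagrangian multiplier}, namely $\Vert \lag\Vert_1 \leq \tfrac{1}{\gamma}\,\mathcal{D}(\allocation,\bmu,\Fest)$ where $\gamma = \min_i \{-\Tilde{\bA}^i\allocation\}$ with respect to the allocation under consideration (this follows from Lemma~\ref{lemm:Dantzig lemma} applied with $\allocation$ playing the role of the Slater vector, whose slack for the pessimistic constraint row $i$ is precisely $-\Tilde{\bA}^i\allocation$). Plugging this into the Hölder bound yields
\[
-\lag^\top \Tilde{\bA}\allocation \leq \frac{\mathcal{D}(\allocation,\bmu,\Fest)}{\min_i (-\Tilde{\bA}^i\allocation)} \Bigl( \max_i(-\bA^i\allocation) + \Vert (\Tilde{\bA}-\bA)\allocation\Vert_{\infty} \Bigr) = \psi\,\mathcal{D}(\allocation,\bmu,\Fest),
\]
which is exactly the claim.

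The main obstacle is a mild but important consistency check: the $\gamma$ in Theorem~\ref{thm:Existence of strong duality and bound on Lagrangian multiplier} is stated in terms of an optimal allocation $\allocation^\star$, while the lemma's $\psi$ uses the current $\allocation$. I would justify the swap by noting that Dantzig's bound in Lemma~\ref{lemm:Dantzig lemma} only needs a Slater vector for the \emph{pessimistic} polytope $\Fest$, so any $\allocation \in \Fest$ with strict feasibility under $\Tilde{\bA}$ (guaranteed by Assumption~\ref{ass:structural}(c) propagated through the pessimistic construction) works; the resulting multiplier bound $\frac{1}{\min_i(-\Tilde{\bA}^i\allocation)}\mathcal{D}(\allocation,\bmu,\Fest)$ is what is needed. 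Once this is handled, the rest is routine algebra and the lemma follows.
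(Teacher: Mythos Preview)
Your proposal is correct and follows essentially the same approach as the paper: a H\"older-type bound $-\lag^\top\Tilde{\bA}\allocation \leq \|\lag\|_1 \cdot (\text{numerator of }\psi)$ combined with the Dantzig bound $\|\lag\|_1 \leq \mathcal{D}(\allocation,\bmu,\Fest)/\gamma$. The only cosmetic difference is that you apply H\"older first and then decompose $-\Tilde{\bA}^i\allocation$ via $\bA$, whereas the paper decomposes $-\Tilde{\bA}\allocation = (\bA-\Tilde{\bA})\allocation + (-\bA)\allocation$ first and applies H\"older to each piece; both yield the same bound, and your observation about the Slater vector in the $\gamma$-definition is exactly the right justification.
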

\begin{proof}
For any $\lag\in\cL$ and $\allocation\in\simp$ we write
\begin{align*}
   (-\lag^{\top} \Tilde{A}\allocation) = \lag^{\top}(-\Tilde{A} +A -A)\allocation
   &\leq \|\lag\|_1 \|(A - \Tilde{A} )\allocation\|_{\infty} + \|\lag\|_1 \max_{i\in[1,N]}(-A^i \allocation)\\
   &\leq \|\lag\|_1 \left( (\|A-\tilde{A})\allocation \|_{\infty} + \max_{i\in[1,N] }(-A\allocation)\right)\\
   &\leq \mathcal{D}(\allocation,\bmu,\cF) \frac{\|(\tilde{A} - A)\allocation\|_{\infty} + \max_{i\in[1,N]} (-A\allocation)}{\min_{i \in [1,N]}(-\tilde{A}\allocation)}
\end{align*}
Plugging in the definition of $\psi$ mentioned in the statement of the lemma concludes the proof.
\end{proof}

\subsection{Useful Results from BAI and Pure Exploration Literature}

\begin{lemma}[Lemma 19 in \citep{pmlr-v49-garivier16a}]\label{lemm:cumulative bad event for mu}
There exists two constants B and C (depends on $\bmu$ and $\epsilon$), such that---
\begin{align*}
    \sum_{t=h(T)}^{\top} \mathbb{P}\left\{ \|\Hat{\bmu}_t - \bmu\|_{\infty} >\xi(\epsilon)\right\} \leq BT\exp(-CT^{\frac{1}{8}})
\end{align*}
\end{lemma}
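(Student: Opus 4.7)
The plan is to adapt the argument of Lemma~19 of~\cite{pmlr-v49-garivier16a} to our ridge-regularised estimator in the one-parameter sub-Gaussian exponential family setting, using the forced exploration built into C-tracking.

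First I would reduce the vector concentration to a scalar one by a union bound over the $K$ arms:
\begin{align*}
    \mathbb{P}\!\left(\|\hat{\bmu}_t - \bmu\|_{\infty} > \xi(\epsilon)\right) \leq \sum_{a=1}^{K} \mathbb{P}\!\left(|\hat{\mu}_{a,t} - \mu_a| > \xi(\epsilon)\right).
\end{align*}
Since each arm corresponds to a standard basis vector in $\mathbb{R}^{K}$, the ridge estimator $\hat{\bmu}_t = \Sigma_t^{-1}\sum_s r_s a_s$ collapses on the $a$-th coordinate to $\hat{\mu}_{a,t} = (v + \sum_{s:A_s=a} r_s)/(v + N_{a,t})$. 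Under Assumption~\ref{ass:distrubutional}, each summand is sub-Gaussian, so conditionally on $N_{a,t} = n$ we have a Hoeffding-type bound $\mathbb{P}(|\hat{\mu}_{a,t}-\mu_a| > \xi) \leq 2\exp(-c_0\, n\, \xi^2)$ with $c_0$ depending only on the proxy variance and on $v$.

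Next, I would invoke the C-tracking guarantee (Lemma~\ref{lemm:C-tracking Remy}), which ensures $N_{a,t} \geq \sqrt{t + K^2} - 2K$ for every arm. Because the range of summation starts at $t \geq h(T) \geq \sqrt{T}$, each $N_{a,t}$ is at least of order $T^{1/4}$ after the $-2K$ correction is absorbed. Substituting into the scalar tail bound yields a pointwise estimate of the form $2\exp(-c_1\, T^{1/4}\, \xi(\epsilon)^2)$, and summing the $T - h(T) + 1 \leq T$ terms crudely gives an upper bound $2KT\exp(-c_1\,\xi(\epsilon)^2\, T^{1/4})$ of the desired form, with constants that depend on $\bmu$ and $\epsilon$ only through $\xi(\epsilon)$.

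The main obstacle is the adaptivity of the sampling rule: $\hat{\mu}_{a,t}$ is a sum of a \emph{random} number of rewards, so a direct application of Hoeffding requires care. I would handle this via a peeling argument over the dyadic levels $N_{a,t} \in [2^j, 2^{j+1})$, combined with the C-tracking lower bound, or equivalently through a self-normalised martingale inequality as in~\cite{pmlr-v49-garivier16a}. Each such step costs at most a logarithmic factor in $T$, which weakens the exponent from the naive $T^{1/4}$ to $T^{1/8}$; every remaining polylogarithmic factor is absorbed into the constants $B$ and $C$, both of which depend on $\bmu$ and on $\xi(\epsilon)$ (hence on $\epsilon$). After these reductions, the stated inequality $\sum_{t=h(T)}^{T}\mathbb{P}(\|\hat{\bmu}_t-\bmu\|_\infty > \xi(\epsilon)) \leq BT\exp(-CT^{1/8})$ follows by a routine summation of a rapidly decaying tail.
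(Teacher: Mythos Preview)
The paper does not give its own proof of this lemma; it is listed in the appendix under ``known tools'' and simply cited as Lemma~19 of~\cite{pmlr-v49-garivier16a}. Your outline (union bound over arms, deterministic C-tracking lower bound on $N_{a,t}$, sub-Gaussian tail on each arm, crude summation over $t$) is indeed the skeleton of the original Garivier--Kaufmann argument, so in that sense your approach matches what the paper defers to.

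Two points deserve correction. First, the forced-exploration guarantee $N_{a,t} \geq \sqrt{t+K^2} - K$ is Lemma~\ref{lemm:Kaufmann tracking} in this paper, not Lemma~\ref{lemm:C-tracking Remy}. Second, and more substantively, your explanation of the $T^{1/8}$ exponent is wrong. Because the C-tracking lower bound on $N_{a,t}$ is \emph{deterministic}, the adaptivity of the stopping index is handled by a plain union bound over the possible values $n \geq n_{\min}(t)$, which costs only a polynomial-in-$t$ prefactor in front of $\exp(-c\,n_{\min}(t)\,\xi(\epsilon)^2)$; neither peeling nor a self-normalised inequality can turn a $T^{1/4}$ exponent into $T^{1/8}$, since those devices lose at most logarithmic factors, not a power of $T$. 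In~\cite{pmlr-v49-garivier16a} the exponent $T^{1/8}$ arises purely from the composition of their choice of $h(T)$ with the $\sqrt{t}$ rate of forced exploration (so that $n_{\min}(t) \gtrsim \sqrt{h(T)}$), not from any adaptivity correction. If you write this up, that is the step to get right.
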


\begin{lemma}[Lemma 7, \cite{pmlr-v49-garivier16a}]\label{lemm:Kaufmann tracking}
   For all $t\geq 1$ and $\forall a\in[\numarm]$, C-Tracking ensures $N_{a,t}\geq \sqrt{t+\numarm^2}-\numarm$ and 
   \begin{align*}
       \max_{a\in[\numarm]}| N_{a,t} - \sum_{s=1}^t\allocation_{a,s}|\leq \numarm(1+\sqrt{t})
   \end{align*}
\end{lemma}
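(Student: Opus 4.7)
The plan is to follow the analysis of C-Tracking from Garivier--Kaufmann (2016), which splits naturally into three stages. First, I would make precise the rule referenced in Algorithms~\ref{alg:lagts}--\ref{alg:lagex}: at each round $t$, the target allocation $\allocation_t \in \simp$ is projected in $L^\infty$ onto the truncated simplex $\simp^{\epsilon_t}\defn\{\pol\in\simp:\pol_a\geq \epsilon_t \text{ for all }a\}$ with $\epsilon_t\defn\tfrac12(\numarm^2+t)^{-1/2}$, yielding a smoothed target $\tilde{\allocation}_t$; the arm drawn is then $a_t\in\argmin_a\{N_{a,t-1}-\sum_{s\leq t}\tilde{\allocation}_{a,s}\}$, i.e.\ the arm with the largest current deficit relative to its cumulative smoothed target.

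Second, I would establish the forced-exploration bound $N_{a,t}\geq \sqrt{t+\numarm^2}-\numarm$ by combining two elementary facts. (i)~A greedy potential argument on the deficit $\Delta_{a,t}\defn \sum_{s\leq t}\tilde{\allocation}_{a,s}-N_{a,t}$: since $\sum_a \Delta_{a,t}=0$ and at each round the arm with maximum deficit is pulled, an induction on $t$ yields $\Delta_{a,t}\leq \numarm-1$ uniformly, so $N_{a,t}\geq \sum_{s\leq t}\tilde{\allocation}_{a,s}-(\numarm-1)$. (ii)~By construction of the truncation, $\tilde{\allocation}_{a,s}\geq \epsilon_s$ for every arm. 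Summing~(ii) and comparing with the integral $\tfrac12\int_0^t (\numarm^2+u)^{-1/2}du=\sqrt{\numarm^2+t}-\numarm$ then gives $\sum_{s\leq t}\tilde{\allocation}_{a,s}\geq \sqrt{\numarm^2+t}-\numarm$, which combined with~(i) yields the first claim up to the additive $\numarm$ slack.

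Third, the tracking-error bound follows from the triangle inequality
\begin{equation*}
\Big|N_{a,t}-\sum_{s=1}^t\allocation_{a,s}\Big|\;\leq\;\Big|N_{a,t}-\sum_{s=1}^t\tilde{\allocation}_{a,s}\Big|\;+\;\sum_{s=1}^t\bigl|\tilde{\allocation}_{a,s}-\allocation_{a,s}\bigr|.
\end{equation*}
The first term is $\leq \numarm-1$ by the two-sided version of the potential argument of step~(i) (symmetry: the deficit is also bounded \emph{below} by $-(\numarm-1)$). The second term is controlled by the standard $L^\infty$-projection estimate $\|\tilde{\allocation}_s-\allocation_s\|_\infty\leq \numarm\epsilon_s/(1-\numarm\epsilon_s)\leq 2\numarm\epsilon_s$ (valid since $\numarm\epsilon_s\leq 1/2$), so that the sum is bounded by $\numarm\sum_{s\leq t}2\epsilon_s\leq \numarm\sqrt{t}$ via the same integral comparison. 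Adding the two contributions yields $\numarm(1+\sqrt{t})$.

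The main obstacle is the bookkeeping in step~(i): one must verify that the greedy cumulative-tracking rule keeps both $\max_a \Delta_{a,t}$ and $-\min_a \Delta_{a,t}$ bounded by $\numarm-1$ despite the $\tilde{\allocation}_{a,s}$ being non-integer valued. The cleanest route is to write $\Delta_{a,t}=\Delta_{a,t-1}+\tilde{\allocation}_{a,t}-\mathds{1}\{a_t=a\}$ and induct: because $a_t$ achieves $\max_a \Delta_{a,t-1}+\tilde{\allocation}_{a,t}$ (equivalently, minimum post-update deficit), the extremes of $\Delta$ drift by at most one arm's worth per round, yielding the claimed envelope. Everything else is routine summation.
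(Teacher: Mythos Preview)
The paper does not give its own proof of this lemma: it appears in the appendix under ``Useful results from BAI and pure exploration literature'' and is simply cited verbatim from \cite{pmlr-v49-garivier16a}. So there is no paper-proof to compare against; your sketch is essentially a reconstruction of the original Garivier--Kaufmann argument, and the three-stage decomposition you outline (projection onto the $\epsilon_t$-truncated simplex, greedy-deficit potential bound, triangle inequality splitting the tracking error into deficit plus cumulative projection error) is exactly the structure of that proof.

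One small bookkeeping issue in your forced-exploration step: chaining your (i) $N_{a,t}\geq \sum_{s\leq t}\tilde{\allocation}_{a,s}-(\numarm-1)$ with (ii) $\sum_{s\leq t}\tilde{\allocation}_{a,s}\geq \sqrt{\numarm^2+t}-\numarm$ gives only $N_{a,t}\geq \sqrt{\numarm^2+t}-(2\numarm-1)$, which is weaker than the stated bound by roughly $\numarm$. You flag this yourself (``up to the additive $\numarm$ slack''), but the exact bound in the lemma is recoverable: in the original argument the deficit bound is used for the second claim, while the forced-exploration bound is obtained more directly by showing that whenever $N_{a,t-1}$ lags the cumulative $\epsilon$-floor, arm $a$ must be the one selected. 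If you want the constants to match, you should separate these two sub-arguments rather than deriving the first from the second.
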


\begin{lemma}[Theorem 14 in \citep{JMLR:v22:18-798}]\label{lemm:Kauffman martinagle stopping}
    Let $\delta>0, \nu$ be independent one-parameter exponential families with mean $\bmu$ and $S \subset[d]$. Then we have,
$$
\mathbb{P}_\nu\left[\exists t \in \mathbb{N}: \sum_{a \in S} \tilde{N}_{t, a} d_{K L}\left(\mu_{t, a}, \mu_a\right) \geq \sum_{a \in S} 3 \ln \left(1+\ln \left(\tilde{N}_{t, a}\right)\right)+|S| \mathcal{T}\left(\frac{\ln \left(\frac{1}{\delta}\right)}{|S|}\right)\right] \leq \delta\,.
$$
Here, $\mathcal{T}: \mathbb{R}^{+} \rightarrow \mathbb{R}^{+}$is such that $\mathcal{T}(x)=2 \tilde{h}_{3 / 2}\left(\frac{h^{-1}(1+x)+\ln \left(\frac{\pol^2}{3}\right)}{2}\right)$ with
\begin{align}
\forall u \geq 1, \quad h(u) & =u-\ln (u) \\
\forall z \in[1, e], \forall x \geq 0, \quad \tilde{h}_z(x) & = \begin{cases}\exp \left(\frac{1}{h^{-1}(x)}\right) h^{-1}(x) & \text { if } x \geq h^{-1}\left(\frac{1}{\ln (z)}\right) \\
z(x-\ln (\ln (z))) & \text { else }\end{cases} \,.
\end{align}
\end{lemma}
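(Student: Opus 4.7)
Since this is the concentration inequality of Kaufmann--Koolen, the plan is to follow the method-of-mixtures approach for exponential supermartingales together with a boundary-crossing argument tailored to produce the $\ln(1+\ln N)$ correction and the explicit threshold $\mathcal{T}$. The hard part is collecting the asymptotic terms from a Laplace expansion in such a way that the numerical constants come out as $3$ and $|S|$, and that the function $\mathcal{T}$ is controlled by the branch of the Lambert $W$-type function implicit in $h$ and $\tilde h_{3/2}$.

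\textbf{Step 1 (Exponential martingales).} For each arm $a \in S$, let $\tilde S_{t,a} = \sum_{s \le \tilde N_{t,a}} X_{s,a}$ be the sum of the observations and let $\phi_a$ be the log-partition function of the exponential family, so that $d_{KL}(\mu_{t,a},\mu_a)$ equals the Legendre dual $\phi_a^*$ evaluated at $\mu_{t,a}$ up to an affine term. For any $\lambda_a$, the process $M_t^{(a)}(\lambda_a) \defn \exp\bigl(\lambda_a \tilde S_{t,a} - \tilde N_{t,a}\phi_a(\lambda_a + \lambda_a^\circ) + \tilde N_{t,a}\phi_a(\lambda_a^\circ)\bigr)$ is a unit-mean supermartingale under $\mathbb{P}_\nu$. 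Independence of arms makes the product $M_t(\vec\lambda) = \prod_{a\in S} M_t^{(a)}(\lambda_a)$ a supermartingale as well.

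\textbf{Step 2 (Mixture + Ville).} Integrate $M_t(\vec\lambda)$ against a product of Jeffreys-like priors $\pi = \bigotimes_{a\in S}\pi_a$ on a compact parameter set. The mixture $\overline M_t \defn \int M_t(\vec\lambda)\,d\pi(\vec\lambda)$ remains a nonnegative supermartingale of mean at most one, so Ville's inequality gives $\mathbb{P}_\nu[\exists t:\log\overline M_t \ge x] \le e^{-x}$, which is the time-uniform seed of the claim.

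\textbf{Step 3 (Laplace expansion to recover the KL sum).} Around the MLE $\lambda_a^{\star}$ satisfying $\phi_a'(\lambda_a^{\star}) = \mu_{t,a}$, a saddle-point expansion of $\int M_t^{(a)}(\lambda_a)\,d\pi_a(\lambda_a)$ produces $\exp(\tilde N_{t,a} d_{KL}(\mu_{t,a},\mu_a))$ multiplied by a factor of order $\tilde N_{t,a}^{-1/2}$ from the Gaussian-like integration. Multiplying over $a\in S$ and taking logs,
\begin{align*}
\log\overline M_t \;\ge\; \sum_{a\in S}\tilde N_{t,a}\,d_{KL}(\mu_{t,a},\mu_a) \;-\; \tfrac{1}{2}\sum_{a\in S}\log \tilde N_{t,a} \;-\; C|S|.
\end{align*}
The correction $\tfrac{1}{2}\log \tilde N_{t,a}$ is strengthened to $3\log(1+\log \tilde N_{t,a})$ by a time-peeling argument: one applies Ville on geometric slices of $\tilde N_{t,a}$ and pays a doubly-logarithmic price, exactly as in Kaufmann--Koolen's proof.

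\textbf{Step 4 (Inversion to get $\mathcal{T}$).} Combining Steps~2 and 3 and setting $x = |S|\,\mathcal{T}(\ln(1/\delta)/|S|)$ yields the stated bound, provided $\mathcal{T}$ is chosen to invert the implicit relation $|S|\mathcal{T}(y) - 3\sum_a\log(1+\log \tilde N_{t,a}) \ge y |S| + \log|S|$. Solving this transcendental equation is precisely what forces the piecewise definition via $h(u)=u-\ln u$ and $\tilde h_{3/2}$; the factor $\pi^2/3$ tracks the Riemann-sum cost of peeling. The main obstacle is this bookkeeping: ensuring that the peeling geometric ratio, the mixture normalisation, and the $|S|$-dimensional Gaussian correction combine to give exactly the stated $\mathcal{T}$ rather than a looser surrogate. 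I would verify this by first doing the single-arm $|S|=1$ case (which recovers the classical $\mathrm{kl}$-UCB threshold) and then confirming that the product structure of $\overline M_t$ and the diagonal Jeffreys prior reproduce the extra $|S|$ outside $\mathcal{T}$.
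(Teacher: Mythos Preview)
The paper does not prove this lemma at all: it is quoted verbatim as Theorem~14 of \citep{JMLR:v22:18-798} in the appendix section ``Useful results from BAI and pure exploration literature'' and used as a black-box tool in the stopping-rule analysis (Theorem~\ref{Chernoff's Stopping rule with unknown constraints.}). There is therefore nothing in the paper to compare your sketch against.

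As a separate remark on the sketch itself: the overall architecture (product exponential supermartingale, mixture prior, Ville's maximal inequality, then inversion to get the threshold) is indeed how Kaufmann--Koolen proceed, so you are in the right ballpark. However, Step~3 mischaracterises where the $3\ln(1+\ln \tilde N_{t,a})$ correction comes from. It is not obtained by ``strengthening'' a $\tfrac{1}{2}\log \tilde N_{t,a}$ Laplace remainder via geometric peeling on $\tilde N_{t,a}$; rather, Kaufmann--Koolen build a \emph{specific} mixture (over a discrete grid of natural parameters, not a continuous Jeffreys prior) whose normalising constant directly produces the iterated-logarithm term, and the function $\mathcal{T}$ arises from a careful union bound over that grid together with a one-dimensional boundary-crossing lemma for a single exponential-family arm. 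A Laplace/saddle-point expansion with a continuous prior would give a $\tfrac{1}{2}\log \tilde N_{t,a}$ term that is \emph{worse}, not better, than the doubly-logarithmic one you want, so the route you describe in Step~3 would not close. If you want to reconstruct the proof, the right starting point is the single-arm deviation lemma in \citep{JMLR:v22:18-798} and then the product/union argument over $a\in S$, rather than a multivariate Laplace approximation.
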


\begin{lemma}[Lemma 17 in \citep{Degenne2019PureEW}]\label{lemm:C-tracking Remy}
    Under the good event $G_T$, there exists a $T_\epsilon$ such that for $T$ where $h(T) \geqslant T_\epsilon $ C-tracking will satisfy
$$
\inf _{\boldsymbol{w} \in w^*(\boldsymbol{\bmu})}\left\|\frac{N_t}{t}-\boldsymbol{w}\right\|_{\infty} \leqslant 3 \epsilon, \forall t \geqslant 4 \frac{K^2}{\epsilon^2}+3 \frac{h(T)}{\epsilon}
$$
\end{lemma}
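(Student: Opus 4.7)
The statement asserts that once the estimators have entered a small neighbourhood of the truth (which is what the good event $G_T$ and the condition $h(T)\ge T_\epsilon$ provide), the C-tracking procedure forces the empirical pull frequencies $N_t/t$ into a $3\epsilon$-neighbourhood of the (set-valued) optimal allocation $w^*(\bmu)$. My plan is to decompose $N_t/t$ into three contributions and bound each by $\epsilon$: (i) the C-tracking slack, (ii) an initial burn-in contribution that uses only boundedness, and (iii) the post-burn-in contribution that uses upper hemicontinuity of $w^*$ together with the convexity of $w^*(\bmu)$.

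\textbf{Step 1: C-tracking slack.} I would invoke the Garivier–Kaufmann C-tracking guarantee (Lemma \ref{lemm:Kaufmann tracking}), which states that for every $t\ge 1$ and every arm $a$, $|N_{a,t}-\sum_{s=1}^t \allocation_{a,s}^\star|\le \numarm(1+\sqrt{t})$. Dividing by $t$, this yields $\bigl\|N_t/t-\tfrac{1}{t}\sum_{s=1}^t \allocation_s^\star\bigr\|_\infty \le \numarm(1+\sqrt{t})/t\le 2\numarm/\sqrt{t}$, which is at most $\epsilon$ as soon as $t\ge 4\numarm^2/\epsilon^2$. This is exactly the first term in the stated threshold.

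\textbf{Step 2: Burn-in and post-burn-in split.} Under $G_T$ and for $t\ge h(T)\ge T_\epsilon$, the concentration $\|\hat\bmu_t-\bmu\|_\infty\le \xi(\epsilon)$ combined with the upper hemicontinuity of $\bmu\mapsto\allocation^\star(\bmu)$ from Theorem \ref{thm:Optimal Policy} gives a modulus $T_\epsilon$ such that for every $s\ge T_\epsilon$ there exists $w_s\in w^\star(\bmu)$ with $\|\allocation_s^\star - w_s\|_\infty \le \epsilon$. Splitting
\begin{equation*}
\tfrac{1}{t}\sum_{s=1}^t \allocation_{s}^\star \;=\; \tfrac{1}{t}\sum_{s=1}^{T_\epsilon}\allocation_s^\star \;+\; \tfrac{1}{t}\sum_{s=T_\epsilon+1}^t \allocation_s^\star,
\end{equation*}
the head term is coordinate-wise bounded by $T_\epsilon/t \le h(T)/t$, which is $\le \epsilon$ whenever $t\ge h(T)/\epsilon$ (absorbed into the $3h(T)/\epsilon$ term of the threshold, where the factor $3$ leaves room for the other two $\epsilon$-budgets). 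The tail term is within $\epsilon$ (in $\ell_\infty$) of $\bar w := \tfrac{1}{t-T_\epsilon}\sum_{s=T_\epsilon+1}^t w_s$ by the triangle inequality applied termwise.

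\textbf{Step 3: Closing by convexity.} Because $w^\star(\bmu)$ is closed and convex (again Theorem \ref{thm:Optimal Policy}), the Cesàro average $\bar w$ lies in $w^\star(\bmu)$, so it is an admissible witness for the infimum on the left-hand side. Summing the three $\epsilon$-contributions gives $\inf_{w\in w^\star(\bmu)}\|N_t/t-w\|_\infty \le \|N_t/t-\bar w\|_\infty \le 3\epsilon$, valid for every $t\ge 4\numarm^2/\epsilon^2 + 3h(T)/\epsilon$, which is the claim.

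\textbf{Expected obstacle.} The routine parts are the C-tracking slack (Step 1) and the triangle inequality bookkeeping (Step 3). The real content is Step 2: because $w^\star(\bmu)$ need not be a singleton, I cannot point-track a fixed optimal allocation. The pointwise selection $w_s$ provided by upper hemicontinuity must be converted into a single element of $w^\star(\bmu)$ that is uniformly close to $N_t/t$, and this is exactly where convexity of $w^\star(\bmu)$ is used through the Cesàro average. Making this rigorous (and keeping the dependence of $T_\epsilon$ on $\epsilon$ from blowing up the threshold) is the delicate part and is the reason the paper requires the convex-image structure established in Theorem \ref{thm:Optimal Policy} rather than the classical single-valued argument of \cite{pmlr-v49-garivier16a}.
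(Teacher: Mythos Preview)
The paper does not prove this lemma at all: it is listed in the appendix under ``Useful results from BAI and pure exploration literature'' as a direct citation of Lemma~17 in \cite{Degenne2019PureEW}, with no accompanying argument. So there is no ``paper's own proof'' to compare your proposal against.

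That said, your sketch is faithful to the standard argument in the cited reference. One bookkeeping slip: the split in Step~2 should be at $h(T)$, not at $T_\epsilon$. The good event $G_T$ only guarantees $\|\hat\bmu_s-\bmu\|_\infty\le\xi(\epsilon)$ for $s\in[h(T),T]$, so for $s\in(T_\epsilon,h(T))$ you have no concentration and hence no control on $\allocation_s^\star$ via upper hemicontinuity. The role of $T_\epsilon$ is not a time index at which you split, but rather a lower bound on $h(T)$ ensuring the modulus $\xi(\epsilon)$ (chosen via upper hemicontinuity of $w^\star$) is compatible with the concentration radius in $G_T$. Once you split at $h(T)$, the head term is bounded coordinate-wise by $h(T)/t$ and the normalisation mismatch in the tail contributes another $h(T)/t$, which together with the $\epsilon$ from the termwise bound and the $\epsilon$ from C-tracking yields the $3\epsilon$ with the stated threshold $4K^2/\epsilon^2+3h(T)/\epsilon$. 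With this correction your three-step decomposition is exactly the Degenne--Koolen argument.
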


\begin{lemma}[Theorem 2 in \citep{degenne2019nonasymptotic}]\label{lemm:complexity of GE}
   The sample complexity of GE is
$
\mathbb{E}[\tau] \leqslant T_0(\delta)+\frac{e K}{a}
$
where $T_0(\delta)=\max \left\{t \in \mathbb{N}: t \leqslant T(\boldsymbol{\bmu}) c(t, \delta)+C_{\boldsymbol{\bmu}}\left(R_t^{\boldsymbol{\blambda}}+R_t^{\boldsymbol{w}}+O(\sqrt{t \log t})\right)\right\}
$
where $R_t^{\boldsymbol{\blambda}}$ is the regret of the instance player, $R_t^{\boldsymbol{w}}$ the regret of the allocation player and $C_{\boldsymbol{\bmu}}$ an instance-dependent constant. 
\end{lemma}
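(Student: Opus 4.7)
The plan is to mirror the saddle-point analysis that was already carried out in Appendix~\ref{lagex ub} for LAGEX, but specialising to the constraint-free Gamified Explorer. Since this lemma is a verbatim restatement of Theorem~2 of~\cite{degenne2019nonasymptotic}, I would reproduce the original argument in three blocks: (i) a good-event reduction, (ii) a saddle-point / regret decomposition of the stopping statistic, and (iii) solving the self-referential inequality $t\le T(\bmu) c(t,\delta)+\text{slack}(t)$.

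\textbf{Block 1 (concentration).} Define the good event $G_t=\{\forall s\le t,\forall a\in [K]: N_{a,s} d(\hat{\bmu}_{a,s},\bmu_a)\le g(s)\}$ for the same mixture-martingale threshold $g(s)=3\log s+\log\log s$ used in Appendix~\ref{lagex ub}. Using the Lambert-$W$ based tail inequality (Lemma~6 of~\cite{NEURIPS2019_8d1de745}), $\sum_{s\ge 1}\mathbb{P}(G_s^c)\le C_0$ with $C_0$ an absolute constant; after specialising the ``forced exploration with probability $a$'' device of GE one actually recovers the clean constant $eK/a$ that appears in the statement. This is where the additive $eK/a$ term is born: it is the expected number of rounds spent either outside the good event or inside the forced-exploration phase needed to guarantee $N_{a,t}\ge\sqrt{t}$-type lower bounds on pull counts.

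\textbf{Block 2 (saddle-point decomposition).} On $G_t$, the stopping rule asks whether
\[
\max_{\pol\in\Pi^\star}\inf_{\blambda\in\Lambda(\hat{\bmu}_t,\pol)}\sum_{a}N_{a,t}d(\hat{\bmu}_{a,t},\blambda_a)\ \ge\ c(t,\delta).
\]
Using the $L$-Lipschitzness of the KL under the good event together with the tracking lemma (Lemma~\ref{lemm:Kaufmann tracking}), replace $N_{a,t}$ by $\sum_{s\le t}\omega_{a,s}$ at cost $O(\sqrt{t\log t})$. Then insert the regrets of the two players: the allocation player (no-regret on AdaGrad-type losses) contributes $R_t^{\bomega}$, and the instance player (best response via the unconstrained projection lemma) contributes $R_t^{\blambda}\le 0$ by construction. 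After absorbing these, one obtains the key inequality
\[
\max_{\pol,\inf\blambda}\sum_a N_{a,t} d(\hat{\bmu}_{a,t},\blambda_a)\ \ge\ t\, T^{-1}(\bmu) - C_{\bmu}\bigl(R_t^{\bomega}+R_t^{\blambda}+O(\sqrt{t\log t})\bigr),
\]
where $C_{\bmu}$ is the instance-dependent continuity constant of the optimisation map at $\bmu$ (this uses exactly Theorem~\ref{thm:Optimal Policy}-style hemi-continuity, without constraints).

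\textbf{Block 3 (solving the implicit bound).} Combine Blocks~1 and~2: on $G_t$ the algorithm has stopped by time $t$ whenever
\[
t\ \ge\ T(\bmu) c(t,\delta)+C_{\bmu}\bigl(R_t^{\blambda}+R_t^{\bomega}+O(\sqrt{t\log t})\bigr),
\]
so $\tau\le T_0(\delta)$ on $G$. Taking expectations and adding the probability-of-bad-event contribution $eK/a$ from Block~1 yields $\mathbb{E}[\tau]\le T_0(\delta)+eK/a$.

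\textbf{Main obstacle.} The non-routine step is Block~2: one must ensure that the instance player's best response in the inner $\inf_{\blambda}$ remains well-defined and yields $R_t^{\blambda}\le 0$ while the allocation player's optimistic loss $U_{a,s}$ still upper-bounds the true KL $d(\hat{\bmu}_{a,s},\blambda_{a,s})$ on $G_t$. Equivalently, one needs the ``optimism cost'' $\sum_s\sum_a\omega_{a,s}C_{a,s}$ to be $O(\sqrt{t\log t})$; this is the calculation that was repeated in our Equation~\eqref{eq.Cas bound} and relies on $\sum_s \omega_{a,s}/\sqrt{N_{a,s}}=O(\sqrt{t})$ via Lemma~\ref{lemm:Kaufmann tracking}. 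Everything else is a mechanical transcription of the constrained LAGEX analysis with $\lag\equiv 0$ and $\Fest\equiv\simp$.
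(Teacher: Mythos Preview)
The paper does not give its own proof of this lemma: it sits in the appendix section ``Useful results from BAI and pure exploration literature'' and is quoted verbatim as Theorem~2 of~\cite{degenne2019nonasymptotic}, alongside other cited tools (Lemmas~\ref{lemm:cumulative bad event for mu}--\ref{lemm:C-tracking Remy}) that are used without proof. So there is nothing to compare against in this paper; the intended ``proof'' is simply the citation.

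Your sketch is a faithful high-level outline of the original Degenne--Koolen argument, and your observation that it is the $\lag\equiv 0$, $\Fest\equiv\simp$ specialisation of the LAGEX analysis in Appendix~\ref{lagex ub} is exactly right (historically the dependence runs the other way: LAGEX is built on GE). One small inaccuracy: the additive $eK/a$ is not the cumulative bad-event probability from the mixture-martingale bound; in~\cite{degenne2019nonasymptotic} it arises from the explicit forced-exploration schedule with rate parameter $a$, which guarantees the pull-count lower bounds needed for the optimism-cost control. The concentration contribution is absorbed into the $O(\sqrt{t\log t})$ slack inside $T_0(\delta)$, not into the standalone $eK/a$ term.
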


\subsection{Useful Results for Continuity of Convex Functions}

\begin{definition}[Definition of Upper Hemicontinuity]\label{def:Upper Hemicontinuity}
    We say that a set-valued function $C: \Theta \rightarrow \allocation$ is upper hemicontinuous at the point $\theta \in \Theta$ if for any open set $S \subset \allocation$ with $C(\theta) \in S$ there exists a neighborhood $U$ around $\theta$, such that $\forall x \in U, C(x)$ is a subset of $S$.
\end{definition}

\begin{theorem}[Berge's maximum theorem, \citep{berge1963topological}]\label{thm:Berge's theorem}
    Let $X$ and $\Theta$ be topological spaces. Let $f: X \times \Theta \rightarrow \mathbb{R}$ be a continuous function and let $C: \Theta \rightarrow \bar{X}$ be a compact-valued correspondence such that $C(\theta) \neq \varnothing \forall \theta \in \Theta$. If $C$ is continuous at $\theta$ then $f^*(\theta)=\sup _{x \in C(\theta)} f(x, \theta)$ is continuous and $C^*=\left\{x \in C(\theta): f(x, \theta)=f^*(\theta)\right\}$ is upper hemicontinuous.
\end{theorem}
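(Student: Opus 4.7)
The plan is to prove Berge's maximum theorem by the classical route: decompose continuity of the correspondence $C$ into upper and lower hemicontinuity, and use each half to establish upper and lower semicontinuity of $f^*$ separately, then combine. First, I would verify the problem is well posed: because $C(\theta)$ is nonempty and compact and $f(\cdot,\theta)$ is continuous, the supremum in $f^*(\theta)=\sup_{x\in C(\theta)} f(x,\theta)$ is attained, so $C^*(\theta)$ is nonempty. I will assume $X$, $\Theta$ are first countable (or work with nets) so that hemicontinuity can be characterized sequentially; this keeps the argument clean without essential loss.

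Next I would show $f^*$ is upper semicontinuous at $\theta$ using upper hemicontinuity of $C$. Take $\theta_n\to\theta$ and choose $x_n\in C(\theta_n)$ with $f(x_n,\theta_n)=f^*(\theta_n)$ (attained by compactness). Since $C$ is upper hemicontinuous and compact-valued, the sequence $(x_n)$ lies eventually inside any open neighborhood of $C(\theta)$; combined with the compactness of a fixed compact neighborhood, a standard argument extracts a subsequence $x_{n_k}\to x^\infty\in C(\theta)$. Continuity of $f$ then yields
\[
\limsup_n f^*(\theta_n)=\lim_k f(x_{n_k},\theta_{n_k})=f(x^\infty,\theta)\le f^*(\theta).
\]
Symmetrically, I would show lower semicontinuity using lower hemicontinuity of $C$: pick $x^\star\in C(\theta)$ with $f(x^\star,\theta)=f^*(\theta)$; lower hemicontinuity gives $x_n\in C(\theta_n)$ with $x_n\to x^\star$, whence $f^*(\theta_n)\ge f(x_n,\theta_n)\to f(x^\star,\theta)=f^*(\theta)$. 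Combining the two inequalities establishes continuity of $f^*$.

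Finally, I would prove upper hemicontinuity of $C^*$. For $\theta_n\to\theta$ and $x_n\in C^*(\theta_n)$, upper hemicontinuity and compact-valuedness of $C$ again force the existence of a convergent subsequence $x_{n_k}\to x^\infty\in C(\theta)$. By definition $f(x_{n_k},\theta_{n_k})=f^*(\theta_{n_k})$, and passing to the limit using continuity of both $f$ and the already-established $f^*$ gives $f(x^\infty,\theta)=f^*(\theta)$, i.e.\ $x^\infty\in C^*(\theta)$. Translating this cluster-point property back into the open-neighborhood formulation of upper hemicontinuity (Definition~\ref{def:Upper Hemicontinuity}) is the last step, and it is routine in the first-countable setting.

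The main obstacle I anticipate is bookkeeping around the two halves of continuity of $C$: lower hemicontinuity is used to push \emph{any} maximizer at $\theta$ through the sequence $\theta_n$, while upper hemicontinuity combined with compact-valuedness is what lets us extract convergent subsequences from arbitrary selections $x_n\in C(\theta_n)$. If the underlying spaces are only assumed topological (not metric), the sequential arguments above must be replaced by nets and subnets, but the structure of the proof is unchanged. A minor technical point to be careful about is ensuring that the compact neighborhood used to extract a convergent subsequence exists, which is where compact-valuedness of $C$ combined with upper hemicontinuity is essential; without compactness the extraction step can fail.
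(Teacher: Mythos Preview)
The paper does not prove this theorem; it is stated without proof in the appendix section ``Useful definitions and theorems from literature on continuity of convex functions'' and is simply cited from \citet{berge1963topological} as a standard tool. So there is no paper proof to compare against.

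Your outline is the classical argument and is essentially correct. The one place that deserves a little more care is the extraction step: from $\theta_n\to\theta$, $x_n\in C(\theta_n)$, upper hemicontinuity plus compact-valuedness of $C$ at $\theta$ do \emph{not} by themselves guarantee a compact ambient set in which the $x_n$ live; what they guarantee is that $(x_n)$ has a cluster point in $C(\theta)$. The clean way to see this (and to avoid the ``fixed compact neighborhood'' phrasing, which begs local compactness of $X$) is the contrapositive covering argument: if no point of $C(\theta)$ were a cluster point, each $y\in C(\theta)$ would have an open neighborhood $V_y$ missed by a tail of $(x_n)$; finitely many $V_{y_i}$ cover $C(\theta)$ by compactness, and their union $V$ is an open set containing $C(\theta)$ that the tail of $(x_n)$ avoids, contradicting upper hemicontinuity. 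This is exactly the lemma you need, and once it is in hand your three steps go through verbatim (with nets replacing sequences if first countability is not assumed, as you note).
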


\begin{theorem}[Heine-Borel theorem, due to Eduard Heine and Émile Borel]\label{Heine-Borel theorem}
    For a subset $S$ in $\reals^n$, the following two statements are equivalent
    \begin{enumerate}
        \item $S$ is closed and bounded.
        \item $S$ is compact, means every open cover of $S$ has a finite sub-cover.
    \end{enumerate}
\end{theorem}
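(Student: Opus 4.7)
The plan is to prove the two directions of the equivalence separately, starting with the easier implication that compactness forces closedness and boundedness, and then handling the reverse implication via reduction to a closed box.

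For the direction \emph{compact} $\Rightarrow$ \emph{closed and bounded}, I would first establish boundedness by covering $S$ with the open balls $\{B(x,1) : x \in S\}$. Extracting a finite subcover $\{B(x_i,1)\}_{i=1}^m$, every point of $S$ lies within distance $1$ of some $x_i$, so $S$ is contained in a single ball of radius $1 + \max_{i,j}\|x_i - x_j\|$ and is therefore bounded. For closedness, I would show the complement $\reals^n \setminus S$ is open. Fix $p \notin S$; for each $s \in S$ choose disjoint open balls $B(s, r_s/2)$ and $B(p, r_s/2)$ where $r_s = \|p - s\|$. The balls $\{B(s, r_s/2)\}_{s\in S}$ cover $S$, so by compactness there is a finite subcover indexed by $s_1,\dots,s_k$; then $B(p, \tfrac{1}{2}\min_i r_{s_i})$ is disjoint from $S$, proving $\reals^n \setminus S$ open.

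For the harder direction \emph{closed and bounded} $\Rightarrow$ \emph{compact}, the plan is the standard three-step reduction. First, prove that any closed interval $[a,b] \subset \reals$ is compact: given an open cover $\mathcal{U}$, let $E = \{x \in [a,b] : [a,x] \text{ admits a finite subcover from } \mathcal{U}\}$, set $c = \sup E$, and use the least upper bound property together with the fact that some $U \in \mathcal{U}$ contains $c$ to show $c = b$ and $b \in E$. Second, extend this to a closed box $B = [a_1,b_1]\times\cdots\times[a_n,b_n] \subset \reals^n$, either by induction on $n$ (using the tube lemma) or by a direct bisection argument that recursively halves the box to construct a nested sequence whose intersection is a single point lying in some $U \in \mathcal{U}$, contradicting the assumption that no finite subcover exists. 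Third, since $S$ is bounded, it embeds in such a closed box $B$, and since $S$ is closed in $\reals^n$ it is closed in $B$; a closed subset of a compact set is compact (any open cover of $S$, augmented by the open set $\reals^n \setminus S$, covers $B$, and the finite subcover restricted to $S$ yields a finite subcover of $S$).

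The main obstacle is the compactness of the closed interval $[a,b]$, since every subsequent step is a formal reduction to this base case. The bisection or supremum argument for $[a,b]$ is where the completeness of $\reals$ is genuinely invoked, and it is the only place in the proof that uses a nontrivial property of the real numbers beyond set-theoretic and topological manipulations. The tube lemma (or equivalently the bisection argument in $\reals^n$) is a straightforward but notationally heavier generalization, and the final passage from a closed box to an arbitrary closed bounded set is essentially immediate once one has the lemma that closed subsets of compact spaces are compact.
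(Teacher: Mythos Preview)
Your proof sketch is correct and follows the standard textbook approach to Heine--Borel. However, the paper does not actually prove this theorem: it is listed in the appendix under ``Useful definitions and theorems from literature on continuity of convex functions'' as a classical result attributed to Heine and Borel, and is simply invoked as a known tool (specifically, in Step~4 of the proof of Theorem~\ref{thm:Existence of strong duality and bound on Lagrangian multiplier} to conclude compactness of certain closed bounded sets). There is therefore no paper-provided proof to compare against; your argument stands on its own as a complete and standard justification of the cited fact.
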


\begin{theorem}\label{thm:Dantzig theorem on continuity}
    Let $C$ be a closed convex set with nonempty (topological) interior. Let $f$ and $\left\{f^r\right\}$ be affine functions from $E^n$ to $E^m$ with $f^r \rightarrow f$. Then

(II.1.2) $\uplim_{r \rightarrow \infty}\left(H\left(f^r\right)\cap C\right) \subset H(f) \cap C$

(II.1.3). $\lowlim_{r \rightarrow \infty}\left(H\left(f^r\right) \cap C\right)$ is a closed convex subset of $H(f) \cap C$,

(II.1.4). If $H(f) \cap C$ has nonempty interior and no component of $f$ is identically zero, then $\lim _{r \rightarrow \infty}\left(H\left(f^r\right) \cap C\right)=H(f) \cap C .$
\end{theorem}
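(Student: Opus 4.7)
The plan is to establish the three assertions in the natural order (II.1.2), (II.1.3), (II.1.4), treating $H(f)=\{x:f(x)\le 0\}$ (componentwise) so that each $H(f^r)\cap C$ is a closed convex set. I would rely only on elementary properties of set-theoretic upper/lower limits, convexity of $C$, and the fact that $f^r\to f$ pointwise for affine functions on $E^n$ implies uniform convergence on every compact set (since affine functions are determined by finitely many coefficients).

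For (II.1.2), I would take any $x\in\uplim_r(H(f^r)\cap C)$. By definition there is a subsequence $\{r_k\}$ and points $x_{r_k}\in H(f^{r_k})\cap C$ with $x_{r_k}\to x$. Closedness of $C$ gives $x\in C$, and by the uniform-on-compacts convergence $f^{r_k}(x_{r_k})\to f(x)$; the componentwise inequality $f^{r_k}(x_{r_k})\le 0$ passes to the limit, so $f(x)\le 0$ and $x\in H(f)\cap C$.

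For (II.1.3), closedness of the lower limit $\lowlim_r(H(f^r)\cap C)$ is automatic from its definition. For convexity I would pick $x,y$ in the lower limit, select sequences $x_r\to x$ and $y_r\to y$ with $x_r,y_r\in H(f^r)\cap C$ for all sufficiently large $r$, and use convexity of each $H(f^r)\cap C$ to form $\lambda x_r+(1-\lambda)y_r\in H(f^r)\cap C$; the limit then lies in $\lowlim_r$. Inclusion in $H(f)\cap C$ follows from $\lowlim\subset\uplim\subset H(f)\cap C$ by (II.1.2).

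For (II.1.4), which is the main obstacle, I need the reverse inclusion $H(f)\cap C\subset\lowlim_r(H(f^r)\cap C)$. The key device is to exhibit a Slater-type point $x_0\in\mathrm{int}(C)$ with $f(x_0)<0$ strictly componentwise: the hypothesis that $H(f)\cap C$ has nonempty interior together with the hypothesis that no component of $f$ is identically zero ensures that $\mathrm{int}(H(f))\cap\mathrm{int}(C)$ is nonempty, for otherwise some component $f_i$ would have to vanish on the interior of $C$, forcing $f_i\equiv 0$. By uniform convergence on the compact set $\{x_0\}$, one has $f^r(x_0)<0$ for all $r$ large, so $x_0\in H(f^r)\cap C$ eventually. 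Given an arbitrary $x\in H(f)\cap C$, form the segment $x_t\defn(1-t)x+tx_0$; convexity of $C$ keeps $x_t\in C$ and $f(x_t)\le tf(x_0)<0$ for every $t\in(0,1]$. Choose a sequence $t_r\downarrow 0$ slowly enough that $f^r(x_{t_r})\le 0$ for all large $r$ (possible because $f^r\to f$ uniformly on the compact segment joining $x$ and $x_0$); then $x_{t_r}\in H(f^r)\cap C$ and $x_{t_r}\to x$, proving the inclusion. The delicate step is this diagonal selection, where the rate at which $t_r\to 0$ must be reconciled with the rate at which $f^r\to f$; I expect this to be the only nontrivial piece of bookkeeping in the argument. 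Combining with (II.1.2) gives $\lim_r(H(f^r)\cap C)=H(f)\cap C$.
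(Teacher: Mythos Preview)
The paper does not prove this theorem at all: it is stated in the appendix section ``Useful definitions and theorems from literature on continuity of convex functions'' as an external result (it is the Dantzig--Folkman--Shapiro continuity theorem for polyhedral set-valued maps) and is only \emph{invoked} in the proof of Theorem~\ref{thm:Asymptotic Convergence of the Projection Lemma}. So there is no paper proof to compare against; your proposal is a genuine, self-contained proof of a result the authors simply cite.

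Your argument is correct. Two small remarks. First, the justification for the Slater point in (II.1.4) can be made cleaner than ``otherwise some $f_i$ would vanish on the interior of $C$'': since $H(f)\cap C$ has nonempty interior, any open ball $B\subset H(f)\cap C$ lies in $\mathrm{int}(C)$, and for each $i$ the affine $f_i$ cannot vanish at a point of $B$ without taking positive values nearby in $B$ (using $f_i\not\equiv 0$), so $f<0$ strictly on all of $B$. Second, the ``delicate diagonal selection'' you flag is not actually needed: once you show that for each fixed $t\in(0,1]$ the point $x_t=(1-t)x+tx_0$ belongs to $H(f^r)\cap C$ for all sufficiently large $r$, you have $x_t\in\lowlim_r(H(f^r)\cap C)$ for every $t>0$; then closedness of the lower limit (which you already established in (II.1.3)) and $x_t\to x$ give $x\in\lowlim_r(H(f^r)\cap C)$ directly, with no rate-matching required.
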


\begin{lemma}[Lemma 13, \cite{magureanu2014lipschitz}]\label{lemm:hemi-cont}
   Consider $A \in\left(\mathbb{R}^{+}\right)^{k \times k}, c \in\left(\mathbb{R}^{+}\right)^k$, and $\mathcal{T} \subset\left(\mathbb{R}^{+}\right)^{k \times k} \times\left(\mathbb{R}^{+}\right)^k$. Define $t=(A, c)$. Consider the function $Q$ and the set-valued map $Q^{\star}$
\begin{align*}
    Q(t) & =\inf _{x \in \mathbb{R}^k}\{c x \mid A x \geq 1, x \geq 0\} \\
    Q^{\star}(t) & =\{x: c x \leq Q(t) \mid A x \geq 1, x \geq 0\} .
\end{align*}

Assume that: For all $t \in \mathcal{T}$, all rows and columns of $A$ are non-identically 0 and $\min _{t \in \mathcal{T}} \min _k c_k>0$. Then, 1. $Q$ is continuous on $\mathcal{T}$, 2. $Q^{\star}$ is upper-hemicontinuous on $\mathcal{T}$.
\end{lemma}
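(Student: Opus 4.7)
The plan is to recognise this as a direct application of Berge's Maximum Theorem (Theorem \ref{thm:Berge's theorem}) to the parametric linear program
\[
Q(t) \;=\; \inf_{x}\{cx \mid Ax \geq 1,\, x \geq 0\},
\]
re-cast as $\sup_{x \in F(t)} (-cx)$ over the feasible correspondence $F(t) \defn \{x \geq 0 : Ax \geq 1\}$. Once I verify that the objective $f(x,t) \defn -cx$ is continuous (trivial, as it is bilinear) and that $F$ is continuous, compact-valued, and non-empty on $\mathcal{T}$, Berge yields statement~1 (continuity of $Q$) and statement~2 (upper-hemicontinuity of $Q^{\star}$) in one stroke.

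The first technical step is compactification, since $F(t)$ is typically unbounded. Using the hypothesis that every row of $A$ is non-identically zero with entries in $\mathbb{R}^+$, each row sum $\sum_j A_{ij}$ is strictly positive, so $\bar{x} \defn M\,\mathbf{1}$ for $M$ large enough satisfies $A\bar{x} > 1$ component-wise; crucially, a single $\bar{x}$ works simultaneously for every $t'$ in a small neighbourhood of any fixed $t \in \mathcal{T}$. This gives a locally uniform upper bound $Q(t) \leq c\bar{x}$, and combined with $\min_{t,k} c_k \geq c_{\min} > 0$, any optimiser $x^{\star}$ satisfies $\|x^{\star}\|_\infty \leq c\bar{x}/c_{\min}$, which I denote $R$. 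Replacing $F(t)$ by $F(t)\cap [0,R]^{k}$ leaves $Q$ and $Q^{\star}$ unchanged and makes $F$ compact-valued and non-empty on a neighbourhood of $t$.

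Next I verify both directions of continuity of the truncated $F$. Upper hemicontinuity is automatic from the closed-graph argument: if $t_n = (A_n,c_n) \to t$ and $x_n \in F(t_n)$ with $x_n \to x$, continuity of the matrix-vector product gives $Ax \geq 1$ and $x \geq 0$, hence $x \in F(t)$. Lower hemicontinuity is the real work. Given $x \in F(t)$ and $t_n \to t$, I form the convex combination $x_\epsilon \defn (1-\epsilon) x + \epsilon \bar{x}$ with the strictly feasible $\bar{x}$ from the previous step; a direct computation gives $A x_\epsilon \geq 1 + \epsilon(A\bar{x} - 1)$, which has strictly positive slack in every coordinate. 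This quantitative slack, together with $A_n \to A$, forces $A_n x_\epsilon \geq 1$ for all $n$ large enough, so $x_\epsilon \in F(t_n)$ eventually. Diagonalising with $\epsilon_n \downarrow 0$ slowly enough then produces an approximating sequence $x_{\epsilon_n} \to x$ with $x_{\epsilon_n} \in F(t_n)$.

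The main obstacle is precisely this lower-hemicontinuity step: a feasible $x$ lying on an active face $\{A_i x = 1\}$ is destabilised by arbitrarily small perturbations of $A$, so the naive choice $x_n = x$ can exit $F(t_n)$. The Slater-type convex combination with the locally uniform strictly feasible $\bar{x}$ is exactly what rescues the argument, and the existence of such a $\bar{x}$ is precisely what the hypothesis on non-identically-zero rows and columns is designed to deliver. With continuity, compactness, and non-emptiness of $F$ thus established, a single invocation of Berge's Maximum Theorem closes both conclusions of the lemma.
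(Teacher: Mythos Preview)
The paper does not supply its own proof of this lemma: it is quoted as Lemma~13 of \cite{magureanu2014lipschitz} in the appendix section ``Useful definitions and theorems from literature,'' and is used as a black box. There is therefore nothing in the present paper to compare your argument against.

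On its own merits, your route via Berge's Maximum Theorem (Theorem~\ref{thm:Berge's theorem}) is correct and is the standard way such parametric-LP continuity results are established. The compactification step (using non-zero rows to produce a locally uniform Slater point $\bar x = M\mathbf{1}$, then bounding optimisers via $c_{\min}>0$) is sound, and your lower-hemicontinuity argument---perturbing a boundary-feasible $x$ toward the strictly feasible $\bar x$ to create slack that absorbs the perturbation $A_n \to A$---is exactly the right idea. One minor remark: your argument in fact only uses the row hypothesis (for feasibility of $M\mathbf{1}$) and $c_{\min}>0$ (for the a priori bound); the column hypothesis is not needed for the conclusions as stated, so your closing sentence slightly overstates its role.
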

\clearpage

\end{document}